\documentclass[final,12pt]{colt2026} 

\newcommand{\op}[1]{\operatorname{#1}}
\newcommand{\C}[1]{{\mathcal{#1}}} 
\newcommand{\B}[1]{{\mathbb{#1}}} 
\newcommand{\BF}[1]{{\mathbf{#1}}} 
\newcommand{\F}[1]{{\mathfrak{#1}}}

\newcommand{\bra}{\langle}
\newcommand{\ket}{\rangle}

\newcommand{\x}{\BF{x}}
\newcommand{\y}{\BF{y}}
\newcommand{\z}{\BF{z}}
\newcommand{\vv}{\BF{v}}
\newcommand{\uu}{\BF{u}}

\newcommand{\w}{\BF{w}}

\renewcommand{\cite}[1]{\citep{#1}}

\usepackage{tikz}
\usetikzlibrary{positioning,calc}
\usepackage{wrapfig}

\title[$\gamma$-weakly $\theta$-up-concavity]{$\gamma$-weakly $\theta$-up-concavity: A Unified Framework for Non-Convex Optimization Beyond DR-Submodular and OSS Functions}

\usepackage{times}
\usepackage{enumitem}

\usepackage{caption}
\coltauthor{%
 \Name{Mohammad Pedramfar}\Email{mohammad.pedramfar@mila.quebec}\\
 \addr Mila - Quebec AI Institute/McGill University
 \AND
 \Name{Vaneet Aggarwal}\Email{vaneet@purdue.edu}\\
 \addr Purdue University%
}

\makeatletter
\renewcommand*\@jmlrpages{}
\makeatother
\begin{document}

\maketitle

\begin{abstract}
Optimizing non-convex functions is a fundamental challenge across machine learning and combinatorial optimization. We introduce and study $\gamma$-weakly $\theta$-up-concavity, a novel first-order condition that characterizes a broad class of such functions.  This condition provides a powerful unifying framework, strictly generalizing both DR-submodular and One-Sided Smooth (OSS) functions while capturing broader forms of scale-dependent curvature, including accumulating-then-diminishing returns and flat-start behavior.
Our central theoretical contribution demonstrates that $\gamma$-weakly $\theta$-up-concave functions are upper-linearizable: for any feasible point, we can construct a linear surrogate whose gains provably approximate the original non-linear objective. 
A key technical contribution is a nonuniform upper-linearization argument yielding approximation coefficients that depend explicitly on the curvature parameters and the geometry of the feasible region. This linearizability yields immediate and unified approximation guarantees for a wide range of problems. Specifically, we obtain unified approximation guarantees for offline optimization as well as static and dynamic regret bounds in online settings via standard reductions to linear optimization. Moreover, our framework recovers the optimal approximation coefficient for DR-submodular maximization and improves existing approximation coefficients for OSS optimization, particularly over matroid constraints.
\end{abstract}

\begin{keywords}
  {Upper-linearizable functions, Online learning, One-Sided Smooth functions, Non-convex optimization, DR-submodular maximization}
\end{keywords}

\section{Introduction}
\label{sec:intro}

Maximizing nonlinear objectives over convex constraint sets
is a central problem in machine learning and optimization, with applications spanning
influence maximization, recommender systems, experimental design, feature selection,
and diversity maximization
\cite{abbassi2013diversity,bian2019optimal,mirzasoleiman2018streaming,umrawal2023community}. 
The problems are of the form
$
\op{argmax}_{\x \in \C K} f(\x)$, where $\C{K}$ is convex set and the objective $f$ belongs to a function class $\C F$.
Even when the feasible region is convex, such objectives are often non-concave,
making the design of efficient approximation algorithms a fundamental challenge.
In particular, these problems are often NP-hard.
Thus, we consider the case where functions $f \in \C F$ are non-negative and attempt to find a point $\y \in \C{K}$ such that  $
f(\y) \geq \alpha \op{max}_{\x \in \C K} f(\x)$, where $\alpha \in (0, 1]$ is called the \emph{approximation coefficient}.
The goal is to find a solution in polynomial-time for the largest $\alpha$ possible.

A growing body of work has identified structural relaxations of concavity that retain
algorithmic tractability in continuous domains, including continuous DR-submodularity
\cite{bian2019optimal,hassani17_gradien_method_submod_maxim}, up-concavity \cite{mitra2021submodular,pedramfar2024linear,pedramfar2025uniform} and one-sided smoothness (OSS)
\cite{ghadiri2025beyond,zhang2024online,zhang2023parallelized}.
These models have led to approximation algorithms for offline and online optimization
under downward-closed or matroid-type convex constraints. In this work, we propose a new class of functions, that gives new insights and results even for OSS functions. 


Recently, \cite{pedramfar2024linear} introduced the notion
of \emph{upper-linearizable} functions and a general meta-algorithmic framework that
transfers optimization and regret guarantees for linear objectives to nonlinear settings.
This framework provides a principled way to obtain approximation and online learning
guarantees for non-concave objectives by constructing suitable linear surrogates.
While this class has been shown to include DR-submodular functions and phase-retrieval problem, identifying broad and verifiable function classes that admit strong
upper-linearization guarantees remains an open challenge.

In this paper, we introduce a new first-order structural condition,
called \emph{$\gamma$-weak $\theta$-up-concavity}, that unifies and strictly generalizes
several existing models, including $\gamma$-weakly DR-submodular functions,
one-sided smooth (OSS) objectives, and other weakly concave families.
This condition is expressed through a pair of directional inequalities that compare function increments to gradient inner products, weighted by a monotone scaling function $\theta$. Intuitively, $\theta$ captures how curvature changes with the scale of the current solution. In many applications, the marginal gain of increasing allocations exhibits \emph{accumulating-then-diminishing returns}: gains initially increase as the system approaches a critical mass, before eventually decreasing due to saturation or interference effects. Examples include recommendation and influence systems requiring sufficient exposure before engagement accelerates, budget allocation problems with activation thresholds, and resource allocation settings where productivity emerges only after a minimum concentration of effort is reached. Such objectives may also exhibit \emph{``flat-start'' behavior}, where gradients near the origin are negligible or vanish to high order, making purely incremental strategies ineffective until a threshold is crossed. While some of these behaviors can arise within OSS models, existing frameworks do not explicitly capture the broader phenomenon of scale-dependent curvature variation. Our condition accommodates these effects through the flexible scaling function $\theta$, allowing curvature to vary across the domain. Moreover, Appendix~\ref{apx:beyondoss} constructs examples exhibiting accumulating-then-diminishing returns and flat-start behavior that are $\gamma$-weakly $\theta$-up-concave but are neither DR-submodular nor $\sigma$-OSS for any choice of parameters.

%
%


We show that every $\gamma$-weakly $\theta$-up-concave function is upper-linearizable,
yielding explicit approximation coefficients that depend only on the curvature parameters
and the geometry of the feasible region.
Furthermore, for specific case of $\gamma$-weakly $(p,\sigma)$-up-concave objectives, we derive explicit approximation guarantees for both offline optimization and online learning.
In particular, for matroid constraint sets, we show that maximizing such functions over the associated independence polytope admits approximation coefficient
$1-\exp\!\Big(-\frac{\gamma}{2^\sigma (\sigma+1)}\Big)$ for $p=1$, which recovers the optimal bound for DR-submodular maximization and strictly improves the best known coefficient for OSS objectives~\cite{ghadiri2025beyond,zhang2024online}.


Finally, since we establish that $\gamma$-weakly $\theta$-up-concave functions are
upper-linearizable, a broad range of algorithmic guarantees follow immediately from
the general framework of~\cite{pedramfar2024linear}.
In particular, their reductions imply that offline sample-complexity bounds, as well as
static, dynamic, and adaptive regret guarantees in online optimization, can be obtained
directly from the corresponding results for linear objectives.
Moreover, the same framework applies across multiple feedback models, including  first-order and zeroth-order oracles, bandit and semi-bandit feedback.
This allows us to transfer a wide spectrum of algorithmic results for linear optimization
to the broad family of monotone non-concave objectives studied in this work.

Our main contributions are as follows:
\begin{itemize}[leftmargin=0pt,labelindent=0pt,labelwidth=!,itemindent=6pt,align=left,itemsep=2pt,topsep=2pt,parsep=0pt]
  \item \textbf{A unified curvature model.}
  We introduce the class of $\gamma$-weakly $\theta$-up-concave functions, a first-order structural condition that unifies and strictly generalizes DR-submodular, one-sided smooth (OSS), and other weakly concave objectives in continuous domains, while capturing broader non-convex objectives exhibiting behaviors such as ``accumulating-then-diminishing returns'' effects that fall outside existing models.

  \item \textbf{Upper-linearization guarantees.}
  We prove that every $\gamma$-weakly $\theta$-up-concave function is upper-linearizable, yielding  approximation coefficients. 
  A central challenge is that when $\theta$ becomes small near the origin, uniform approximation guarantees may degenerate. To address this, we develop a nonuniform linearization argument that yields local approximation coefficients depending on the query point $\x$, and then use the geometry of the feasible set to obtain global guarantees.
  


    \item \textbf{Improved guarantees under matroid constraints.}
    Specializing our framework to matroid polytopes and using the geometry of matroids, we derive approximation factors of
    $1-\exp(-\gamma/(2^\sigma(\sigma+1)))$ for $\gamma$-weakly $(1,\sigma)$-up-concave objectives and
    $1-\exp(-1/(2^{2\sigma}(2\sigma+1)))$ for $\sigma$-OSS functions, improving prior constants. 

    \item \textbf{Offline and online optimization via reductions.}
    By leveraging upper-linearizability and the meta-algorithm framework of~\cite{pedramfar24_unified_framew_analy_meta_onlin_convex_optim,pedramfar2024linear}, we obtain various algorithms and regret bounds.
    In particular: (i) We obtain an offline first order algorithm with sample complexity of $\C{O}(\epsilon^{-2})$ and an offline zeroth order algorithm with sample complexity of $\C{O}(\epsilon^{-4})$ using projection-free algorithms.
        (ii) We obtain adaptive regret of $\C{O}(T^{1/2})$ for full-information first order stochastic feedback, $\C{O}(T^{2/3})$ for semi-bandit first order stochastic feedback, $\C{O}(T^{3/4})$ for full-information zeroth order stochastic feedback, and $\C{O}(T^{4/5})$ for (noisy) bandit feedback.
        Moreover, the algorithms used here are projection-free.
        (iii) We obtain dynamic regret of $\C{O}( \sqrt{T(1 + P_T(\uu))} )$ for full-information first order stochastic feedback, and $\C{O}( T^{3/4} (1 + P_T(\uu))^{1/2} )$ for full-information zeroth order stochastic feedback, where $P_T$ denotes the path length.
\end{itemize}


\section{Related Work}
\label{sec:related}

\paragraph{DR-submodular and up-concave maximization.}
Maximizing continuous DR-submodular functions over convex constraint sets has been
extensively studied in both offline and online settings.
Two main algorithmic paradigms have emerged.
Frank-Wolfe-type methods obtain approximation guarantees through carefully chosen
update rules and have been adapted to stochastic and online environments
\cite{bian17_contin_dr_maxim,bian17_guaran_non_optim,mualem22_resol_approx_offlin_onlin_non,
pedramfar23_unified_approac_maxim_contin_dr_funct,chen23_contin_non_dr_maxim_down_convex_const,
mokhtari20_stoch_condit_gradien_method,hassani20_stoch_condit_gradien,
chen18_onlin_contin_submod_maxim,niazadeh21_onlin_learn_offlin_greed_algor,zhang19_onlin_contin_submod_maxim,
zhang23_onlin_learn_non_submod_maxim,pedramfar23_unified_projec_free_algor_adver}.
A second line of work develops \emph{boosting} techniques that construct auxiliary
objectives whose maximization implies approximate optimization of the original function
\cite{hassani17_gradien_method_submod_maxim,chen18_onlin_contin_submod_maxim,fazel22_fast_first_order_method_monot,
zhang22_stoch_contin_submod_maxim,wan23_bandit_multi_dr_submod_maxim,liao23_improv_projec_onlin_contin_submod_maxim,
zhang24_boost_gradien_ascen_contin_dr_maxim}.

Continuous DR-submodular objectives are known to be concave along positive directions
\cite{calinescu11_maxim_monot_submod_funct_subjec_matroid_const,bian17_guaran_non_optim},
which motivated the notion of \emph{up-concave} functions introduced by
\cite{wilder18_equil}.
Offline maximization of up-concave objectives has been considered, for instance, by
\cite{lee23_non_smoot_hextb_smoot_robus_submod_maxim}, but general online optimization
theory remains limited.
Our work subsumes both DR-submodular and up-concave maximization within the broader
class of $\gamma$-weakly $\theta$-up-concave functions and derives approximation and
regret guarantees through upper-linearization and geometry rather than
problem-specific algorithmic constructions.

\paragraph{One-sided smoothness (OSS).}
The notion of one-sided smoothness was introduced by Ghadiri et al.~\cite{ghadiri2025beyond}
to analyze diversity-type objectives and more general monotone nonconcave functions over
downward-closed polytopes.
OSS strictly generalizes continuous DR-submodularity and yields approximation guarantees
for offline maximization via variants of the continuous greedy framework.
Subsequent work of Zhang et al.~\cite{zhang2024online} developed online algorithms for OSS
objectives and established regret bounds.
However, the approximation ratios obtained in these works either rely on additional
higher-order smoothness assumptions or exhibit significantly worse dependence on the
smoothness parameter than our bounds.
In contrast, our results apply to a strictly broader class of objectives and yield
improved approximation factors over matroid constraints without requiring third-order
regularity conditions. More detailed comparison is given in Section \ref{sec:matroid}. Further, non-monotone OSS functions have also been studied \cite{zhang2023parallelized}, while is not the focus of this work. We also note that beyond these structured classes of nonconcave objectives, most results
in continuous nonconcave optimization focus on first- or second-order stationarity
guarantees rather than global approximation ratios \cite{bedi2021escaping,dixit2024gradient,beznosikov2025distributed}.

\paragraph{Upper-linearizable framework.}
A recent line of work introduced the notion of \emph{upper-linearizable} functions and
developed general meta-algorithms that convert procedures for linear optimization into
algorithms for broad classes of nonconcave objectives, including DR-submodular and
up-concave functions \cite{pedramfar2024linear}.
This framework enables the transfer of offline approximation guarantees as well as static,
dynamic, and adaptive regret bounds in online optimization, and supports multiple feedback
models such as first-order, zeroth-order, semi-bandit, and bandit feedback.
Subsequent works developed uniform wrapper constructions for systematic regret-analysis
conversions and improved guarantees under limited feedback, and extended the framework to
decentralized and projection-free settings for upper-linearizable objectives
\cite{pedramfar2025uniform,lu2025decentralized}.
In this paper, we identify a new and substantially broader class of monotone objectives that
falls within the upper-linearizable framework, thereby enabling all of these algorithmic
consequences to be applied directly to the functions studied here.

\section{Background and Notation}

Online optimization problems can be viewed as a repeated game between an agent and an adversary. 
This game unfolds over $T$ rounds within a convex domain $\mathcal{K} \subseteq \mathbb{R}^d$, with both players being aware of the values of $T$ and $\mathcal{K}$.
At the beginning, the adversary selects functions $(f_t)_{t=1}^T$.
In the $t$-th round, the agent selects an action $\x_t$ from the action set $\mathcal{K}$. 
Following this, the adversary reveals the loss function $f_t \in \mathcal{F}$ and provides a query oracle corresponding to that function.
The agent then chooses points $\y_{t,i}$ for some $k_t \geq 0$ and $1 \leq i \leq k_t$, and receives outputs from the query oracle.
A query oracle $\C{Q}_f$ for a function $f$ may be thought of as a (possibly random) algorithm that receives a point $\y \in \C{K}$ and reveals some information about $f$ near $\y$.
The most common query oracle considered in the literature is the \emph{gradient oracle}.
If $\C{Q}_f$ is the gradient oracle for $f$, then $\C{Q}_f(\y) = \nabla f(\y)$.
More generally, we say $\C{Q}_f$ is a \emph{stochastic first order} query oracle for $f$ if $\B{E}\left[ \C{Q}_f(\y) \right] = \nabla f(\y)$.
Similarly, we say $\C{Q}_f$ is a \emph{stochastic zeroth order} query oracle for $f$ if $\B{E}\left[ \C{Q}_f(\y) \right] = f(\y)$.
We say the query oracle $\C{Q}_f$ is bounded by $B$ if we always have $\| \C{Q}_f(\y) \| \leq B$, where $\| \cdot \|$ denotes the Euclidean norm.
An algorithm is \emph{semi-bandit} if it only queries the oracle at the point of action $\x_t$.
As a special case, if the query oracle is zeroth order, then the algorithm is \emph{bandit}.
An algorithm that is not semi-bandit is \emph{full-information}.
Given a function class $\C{F}$ and $i \in \{0, 1\}$, we use $\op{Adv}_i(\C{F})$ to denote an adversary that selects function from $\C{F}$ and provides deterministic $i$-th order oracles.
If the oracle is instead stochastic and bounded by $B$, we use $\op{Adv}_i(\BF{F}, B)$ to denote such an adversary.

If we are considering a maximization problem, for $\alpha \in (0, 1]$, $1 \leq a < b \leq T$, and a compact set $\C{U} \subseteq \C{K}^T$, the $\alpha$-regret of an algorithm $\C{A}$ against and adversary $\op{Adv}$ is defined as
\begin{align*}
  \C{R}_{\alpha, \op{Adv}}^{\C{A}}(\C{U})[a, b] := 
  \sup \B{E} \left[ \alpha \max_{\uu = (\uu_1, \cdots, \uu_T) \in \C{U}} \sum_{t = a}^b f_t(\uu_t) - \sum_{t = a}^b f_t(\x_t) \right].
\end{align*}
We may drop $\alpha$ when it is equal to 1.
When $\alpha < 1$, we assume that the functions are non-negative.
This definition allows us to handle different notions of regret with the same approach.
In particular, if $a=1$, $b=T$, and $\C{U} = \op{diag}(\C{K}^T) := \{(\x, \cdots, \x) \mid \x \in \C{K}\}$, then this is referred to as \emph{Static adversarial regret} or simply \emph{regret}.
When $a = 1$, $b = T$ and $\C{U}$ contains only a single element then it is referred to as the \emph{dynamic regret} \cite{zinkevich03_onlin,zhang18_adapt_onlin_learn_dynam_envir}. 
In this case, the goal is to bound regret using terms that measure the spread of the comparator, for example the regret may be written in terms of the path-length of the comparator, defined as $P_T(\uu) := \sum_{i=1}^{T-1} \| \uu_i - \uu_{i+1} \|$.
\emph{Adaptive regret}, is defined as
$\C{AR}_{\alpha, \op{Adv}}^{\C{A}} := \max_{1 \leq a \leq b \leq T} \C{R}_{\alpha, \op{Adv}}^{\C{A}}(\op{diag}(\C{K}^T))[a, b]$ \cite{hazan09_effic}. 
We drop $a$, $b$ and $\C{U}$ when the statement is independent of their value or their value is clear from the context.

\label{sec:prelim-linearizable}

In~\cite{pedramfar2024linear} the class of \textit{upper-linearizable} functions is defined, which is used in this work. 
Let $\C{K} \subseteq \B{R}^d$ be a convex set.
A class $\C{F}$ of non-negative functions over $\C{K}$ is upper-linearizable if there are maps $\F{g} : \C{F} \times \C{K} \to \B{R}^d$ and $h : \C{K} \to \C{K}$ and constants $0 < \alpha \leq 1$ and $\beta > 0$ such that
\footnote{Note that upper-linearizable functions are designed as generalizations of concave functions and are considered for maximization problems.
A similar notion is of lower-linearizability is defined as generalization of convex functions.}
\begin{align}\label{eq:linearizable}
\alpha f(\y) - f(h(\x))
\leq \beta  \bra \F{g}(f, \x), \y - \x \ket.
\end{align}
For any $f \in \C F$, let $\C{Q}_f$ be stochastic gradient oracle for $f$.
We say $\C{G}$ is a query oracle for $\F g$ if $\C{G}(\C{Q}_f, \x)$ produces unbiased samples for $\F g(f, \x)$.
Note that $\C{G}$ may query $\C{Q}_f$ multiple times and at multiple points in order to generate its output.

Here the function $h$ transforms the action of the base algorithm to the action of the new algorithm and $\F{g}(f,\x)$ plays the role of a surrogate gradient.
The constant $\alpha$, quantifies the quality of the resulting linear upper bound.
Specifically, $\alpha$ is the approximation coefficient achieved by the associated optimization algorithm.

The notion of upper-linearizability enables reductions from linear to nonlinear optimization through the $\mathtt{OMBQ}$ meta-algorithm shown in Meta-algorithm~\ref{alg:ombq}.


\begin{theorem}[Theorem 1 in~\citet{pedramfar2024linear}]
\label{thm:regret_transfer}
Let $\C{K} \subseteq [0,1]^d$ and let $\mathcal{A}(\C{K})$ be a deterministic algorithm for online optimization with semi-bandit feedback over $\C{K}$.
Also let $\mathcal{F}$ be a function class over $\mathcal{K}$ that is linearizable with $\mathfrak{g}: \mathcal{F} \times \mathcal{K} \to \mathbb{R}^d$ and $h: \mathcal{K} \to \mathcal{K}$.
Let $\mathcal{G}$ be a function  be a query algorithm for $\mathfrak{g}$ and let
$\mathcal{A}' = \mathtt{OMBQ}(\mathcal{A}(\C{K}), \mathcal{G}, h)$.
If $\mathcal{G}$ returns an unbiased estimate of $\mathfrak{g}$ and the output of $\mathcal{G}$ is bounded by $B_1$, then we have:
\begin{equation*}
\mathcal{R}^{\mathcal{A}'}_{\alpha, \op{Adv}_1(\C{F}, B_1)} 
\le 
\beta \mathcal{R}^{\mathcal{A}(\C{K})}_{1, \op{Adv}_1(\C{L}(\C{K})[B_1])},
\end{equation*}
where $\alpha$ and $\beta$ are the linearization constants and $\C{L}(\C{K})[B_1]$ denotes the class of linear functions $\ell : \C{K} \to \B{R}$ such that $\| \nabla \ell \| \leq B_1$.
\end{theorem}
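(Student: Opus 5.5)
The plan is to reduce the nonlinear $\alpha$-regret to linear regret round by round, using the upper-linearization inequality~\eqref{eq:linearizable}. We fix an adversary from $\op{Adv}_1(\C{F}, B_1)$ choosing $f_1,\dots,f_T$, and consider the run of $\C{A}' = \mathtt{OMBQ}(\C{A}(\C{K}), \C{G}, h)$. By the structure of the meta-algorithm, in round $t$ the base algorithm $\C{A}(\C{K})$ proposes $\z_t \in \C{K}$. The wrapper then plays $\x_t = h(\z_t)$, calls $\C{G}(\C{Q}_{f_t}, \z_t)$ to obtain $\tilde{\BF{g}}_t$, and feeds the base algorithm the linear reward $\ell_t(\cdot) = \bra \tilde{\BF{g}}_t, \cdot \ket$ as its semi-bandit feedback at $\z_t$. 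Since $\|\tilde{\BF{g}}_t\| \le B_1$, each $\ell_t$ lies in $\C{L}(\C{K})[B_1]$. The base algorithm therefore faces an adversary of type $\op{Adv}_1(\C{L}(\C{K})[B_1])$, which is deterministic from its point of view once the samples are fixed.

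The first step is to apply \eqref{eq:linearizable} with $\x = \z_t$ and $\y = \uu_t$ for each comparator $\uu \in \C{U}$. This gives
\begin{equation*}
\alpha f_t(\uu_t) - f_t(\x_t) \le \beta \bra \F{g}(f_t, \z_t), \uu_t - \z_t \ket .
\end{equation*}
The second step is to replace $\F{g}(f_t,\z_t)$ by $\tilde{\BF{g}}_t$ in expectation. Let $\C{H}_{t}$ be the history before the query in round $t$. Then $\z_t$ is $\C{H}_t$-measurable because $\C{A}$ is deterministic, and $\B{E}[\tilde{\BF{g}}_t \mid \C{H}_t] = \F{g}(f_t,\z_t)$ by unbiasedness of $\C{G}$. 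Hence $\B{E}\bra \F{g}(f_t,\z_t), \uu_t - \z_t\ket = \B{E}\bra \tilde{\BF{g}}_t, \uu_t - \z_t \ket$ for any fixed $\uu$.

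The subtle point, and the step I expect to be the main obstacle, is that the regret takes $\max_{\uu \in \C{U}}$ inside the expectation. So $\uu$ may depend on the realized functions, and the conditional unbiasedness cannot be applied to a random comparator. I would handle this as follows. The adversary selects $(f_t)$ at the start, so the $f_t$ are not random given the adversary. Consequently the maximizer $\uu^\ast$ of $\alpha\sum_{t=a}^b f_t(\uu_t)$ is a fixed sequence, and the nonlinear quantity equals $\B{E}[\alpha \sum_t f_t(\uu^\ast_t) - \sum_t f_t(\x_t)]$. Applying the two steps above to this fixed $\uu^\ast$ gives
\begin{equation*}
\B{E}\Big[\alpha \sum_{t=a}^b f_t(\uu^\ast_t) - \sum_{t=a}^b f_t(\x_t)\Big] \le \beta\, \B{E}\Big[\sum_{t=a}^b \ell_t(\uu^\ast_t) - \sum_{t=a}^b \ell_t(\z_t)\Big] \le \beta\, \B{E}\Big[\max_{\uu\in\C{U}} \sum_{t=a}^b \ell_t(\uu_t) - \sum_{t=a}^b \ell_t(\z_t)\Big].
\end{equation*}

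Finally, conditioned on the realized sequence $(\ell_t)$, the inner quantity is the 1-regret of $\C{A}(\C{K})$ against the deterministic linear adversary that produces this sequence. One caveat is that the $\ell_t$ are generated adaptively through the samples. This is allowed because $\op{Adv}_1(\C{L}(\C{K})[B_1])$ covers adaptive deterministic adversaries, or alternatively because $\C{A}$ is deterministic, so an adaptive adversary is equivalent to an oblivious one. The inner quantity is therefore bounded by $\C{R}^{\C{A}(\C{K})}_{1,\op{Adv}_1(\C{L}(\C{K})[B_1])}$. Taking the supremum over adversaries for $\C{F}$ yields the claim, uniformly in $a,b,\C{U}$.
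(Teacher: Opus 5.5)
The paper does not prove this statement itself. It is quoted as Theorem~1 of \citet{pedramfar2024linear}, so there is no in-paper argument to compare against. Judged on its own, your proof is correct and complete.

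The order of operations you chose is what makes it work. Since the adversary commits to $(f_t)$ at the start, the comparator $\uu^\ast$ maximizing $\sum_{t=a}^b f_t(\uu_t)$ over the compact set $\C{U}$ is deterministic. So after linearizing at $\z_t$, you can replace $\F{g}(f_t,\z_t)$ by the sample $\tilde{\BF{g}}_t$ in expectation, because $\z_t$ is $\C{H}_t$-measurable and $\uu^\ast_t$ is fixed. Only afterwards do you relax $\uu^\ast$ to the pathwise maximizer for the sampled linear losses. Had you first passed to $\max_{\uu}\sum_t \bra \F{g}(f_t,\z_t), \uu_t - \z_t \ket$, the maximizer would depend on future samples and the unbiasedness swap would fail. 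That is exactly the obstacle you identified.

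One small correction to your final paragraph. In this paper's model every adversary commits to its functions at the start. So the right justification is not that $\op{Adv}_1(\C{L}(\C{K})[B_1])$ contains adaptive adversaries; it is your second reason, made precise as follows. On each sample path the realized $(\ell_t)$ is a fixed sequence in $\C{L}(\C{K})[B_1]$. The deterministic $\C{A}(\C{K})$, run against the oblivious adversary that commits to that sequence, receives the same feedback $\tilde{\BF{g}}_t$ and therefore produces the same iterates $\z_t$. Hence the pathwise linear regret is at most $\C{R}^{\C{A}(\C{K})}_{1,\op{Adv}_1(\C{L}(\C{K})[B_1])}$. Multiplying by $\beta>0$ and taking expectations finishes the argument.
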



\begin{wrapfigure}{r}{0.42\columnwidth}
\begin{minipage}{0.42\columnwidth}
\small
\SetAlgorithmName{Meta-algorithm}{}{}
\begin{algorithm2e}[H]
\caption{\small Online Maximization By Quadratization - $\mathtt{OMBQ}(\mathcal{A}, \mathcal{G}, h)$}
\label{alg:ombq}
\SetKwInOut{Input}{Input}\DontPrintSemicolon
\Input{Base algorithm $\mathcal{A}$, Query algorithm $\mathcal{G}$, Mapping $h: \mathcal{K} \to \mathcal{K}$.}
\For{$t = 1, 2, \dots, T$}{
    Let $\mathbf{x}_t$ be the action chosen by $\mathcal{A}$.\;
    \textbf{Play:} $\mathbf{y}_t = h(\mathbf{x}_t)$. \;
    \textbf{Query:} Call oracle $\mathcal{G}$ at $\mathbf{x}_t$ to obtain gradient estimate $\mathbf{g}_t$. \;
    \textbf{Update:} Pass loss vector $\mathbf{g}_t$ to $\C{A}$ to update its state. \;
}
\end{algorithm2e}
\SetAlgorithmName{Algorithm}{}{}
\end{minipage}
\vspace{-3\baselineskip} 
\end{wrapfigure}

In~\cite{pedramfar2024linear}, it was shown that this framework encompasses not only concave functions, but also several classes of up-concave objectives.
In this paper, we introduce a new family of functions that forms a \emph{non-uniform} version of upper-linearizable functions.

\section{\texorpdfstring{$\gamma$}{gamma}-Weakly \texorpdfstring{$\theta$}{theta}-Up-Concave Functions}
\label{sec:setup}



Many monotone non-convex objectives exhibit \emph{scale-dependent curvature}: marginal gains may initially increase before eventually decreasing due to saturation or interference effects. Some objectives may also exhibit \emph{flat-start} behavior, where gradients near the origin are negligible or vanish to high order. Our goal is to model such phenomena while retaining enough structure to derive approximation and online optimization guarantees.

A function $f : \C{K} \to \B{R}$ is said to be \emph{monotone} if it is coordinate-wise monotone, i.e., for all $\x, \y \in [0,1]^d$ with $\y \geq \x$, we have $f(\y) \geq f(\x)$.
We use $C^1$ to denote continuously differentiable functions and $C^2$ to denote functions with continuous second derivatives.

Let $\theta : [0, 1]^d \to \B{R}_{\ge 0}$ be a continuous monotone function where $\theta(\x) \neq 0$ for all $\x \in [0,1]^d \setminus \{\BF{0}\}$.
Given $0 < \gamma \leq 1$, we say a $C^1$ function $f :  [0, 1]^d \to \B{R}_{\ge 0}$ is \emph{$\gamma$-weakly $\theta$-up-concave} if
\begin{align}
\tfrac{\gamma\,\theta(\x)}{\theta(\y)}
\langle \nabla f(\y), \y-\x \rangle
&\leq
f(\y) - f(\x)
\leq
\tfrac{\theta(\y)}{\gamma\,\theta(\x)}
\langle \nabla f(\x), \y-\x \rangle,
\label{eq:weak-theta}
\end{align}
for all $\x,\y \in [0,1]^d$ with $\y \geq \x$ and $\y \neq \BF{0}$.
We use $\C{F}_{\gamma, \theta}$ to denote this class of functions.
Note that the class $F_{\gamma,\theta}$ is closed under positive scaling and addition. (See Appendix~\ref{apx:function-class-convex-cone})


The ratio $\theta(\y)/\theta(\x)$ allows curvature to vary multiplicatively with scale, thereby accommodating scale-dependent curvature and accumulating-then-diminishing returns.

As an important special case, we introduce a class of 
$\gamma$-weakly $(p, \sigma)$-up-concave functions. This class corresponds to $\gamma$-weakly $\theta$-up-concave functions with $\theta(\x) = \|\x\|_p^\sigma$ for some $\sigma \geq 0$ and $p \geq 1$, where $\| \cdot \|_p$ denotes the $\ell^p$ norm.  Specifically, $\gamma$-weakly $(p, \sigma)$-up-concave functions satisfy 
\begin{align}
\tfrac{\gamma\,\|\x\|_p^\sigma}{\|\y\|_p^\sigma}
\langle \nabla f(\y), \y-\x \rangle
&\leq
f(\y) - f(\x)
\leq
\tfrac{\|\y\|_p^\sigma}{\gamma\,\|\x\|_p^\sigma}
\langle \nabla f(\x), \y-\x \rangle.
\label{eq:weak-p-sigma}
\end{align}
for all $\x,\y \in [0,1]^d$ with $\y \geq \x \neq \BF{0}$.
We use $\C{F}_{\gamma, p, \sigma}$ to denote this class of functions.

\paragraph{Normalized Diminishing Returns.}

The following theorem shows that $\gamma$-weak $\theta$-up-concavity can be viewed as a scale-dependent weakening of diminishing returns.
See Appendix~\ref{apx:theta-normalized-DR} for the proof.
\begin{theorem}\label{thm:theta-normalized-DR}
Let $\theta:[0,1]^d\to \mathbb{R}_{\ge 0}$ be monotone, continuous, and satisfy
$\theta(\x)>0$ for all $\x \neq 0$.
Suppose $f:[0,1]^d\to\mathbb{R}_{\ge 0}$ is a $C^1$
monotone function where, for all $\x,\y \in [0,1]^d$ with $\y \ge \x \neq 0$,
\begin{align}
\label{eq:normalized-dr}
\frac{\gamma \nabla f(\y)}{\theta(\y)}
\le
\frac{\nabla f(\x)}{\theta(\x)}
\end{align}
coordinate-wise.
Then $f$ is $\gamma$-weakly $\theta$-up-concave.
\end{theorem}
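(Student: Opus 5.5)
The plan is to fix $\y \ge \x$ with $\y \neq \BF{0}$, write $\vv := \y - \x \ge \BF{0}$, and integrate along the segment $\z(t) := \x + t\vv$, $t \in [0,1]$:
\[
f(\y) - f(\x) = \int_0^1 \bra \nabla f(\z(t)), \vv \ket \, dt .
\]
Since $\x \le \z(t) \le \y$ coordinate-wise and $\vv \ge \BF{0}$, pairing the coordinate-wise inequality \eqref{eq:normalized-dr} with $\vv$ preserves it. Monotonicity of $f$ gives $\nabla f \ge \BF{0}$, so all inner products involved are non-negative. This sign fact is what allows us to bound the $\theta$-ratios crudely by monotonicity of $\theta$.

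\emph{Upper bound.} Assume first $\x \neq \BF{0}$. Apply \eqref{eq:normalized-dr} to the pair $\x \le \z(t)$, which is legitimate since $\z(t) \ge \x \ne \BF{0}$. This gives
\[
\bra \nabla f(\z(t)), \vv \ket \le \frac{\theta(\z(t))}{\gamma\theta(\x)} \bra \nabla f(\x), \vv \ket \le \frac{\theta(\y)}{\gamma\theta(\x)} \bra \nabla f(\x), \vv \ket ,
\]
where the second step uses $\theta(\z(t)) \le \theta(\y)$ together with $\bra \nabla f(\x), \vv \ket \ge 0$. Integrating over $t$ yields the right inequality of \eqref{eq:weak-theta}. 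If $\x = \BF{0}$, then $\theta(\x)=0$ and the right-hand side is interpreted as $+\infty$, or under the convention of the definition it is vacuous. I will note this degenerate case explicitly; it requires care about how $\theta(\BF 0)=0$ is treated.

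\emph{Lower bound.} Apply \eqref{eq:normalized-dr} to the pair $\z(t) \le \y$ for $t>0$, or for all $t$ when $\x \neq \BF{0}$, so that $\z(t) \ne \BF{0}$. This gives
\[
\bra \nabla f(\z(t)), \vv \ket \ge \frac{\gamma\theta(\z(t))}{\theta(\y)} \bra \nabla f(\y), \vv \ket \ge \frac{\gamma\theta(\x)}{\theta(\y)} \bra \nabla f(\y), \vv \ket ,
\]
using $\theta(\z(t)) \ge \theta(\x)$ and $\bra \nabla f(\y), \vv \ket \ge 0$. Integration gives the left inequality. When $\x = \BF{0}$ the left-hand side equals $0 \le f(\y) - f(\BF 0)$ by monotonicity, so this case is immediate. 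Omitting the single point $t=0$ does not affect the integral.

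The main obstacle is bookkeeping rather than analysis. The substantive points are that the sign conditions $\nabla f \ge \BF{0}$ (from monotonicity and $C^1$) and $\vv \ge \BF{0}$ are what justify replacing $\theta(\z(t))$ by $\theta(\x)$ or $\theta(\y)$, and that the boundary cases involving $\BF{0}$ must be handled consistently with the definition.
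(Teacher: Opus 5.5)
Your proposal is correct and follows the paper's proof: integrate $\bra \nabla f(\z_t), \y-\x \ket$ along the segment, sandwich $\nabla f(\z_t)$ by applying the normalized DR hypothesis to the pairs $(\x,\z_t)$ and $(\z_t,\y)$, and replace $\theta(\z_t)$ by $\theta(\x)$ or $\theta(\y)$ using monotonicity of $\theta$ together with $\nabla f \ge \BF{0}$ (a sign fact the paper uses only implicitly). One correction to your boundary remark, a case the paper's proof does not treat at all: $\theta(\BF{0})$ need not vanish (e.g.\ for constant $\theta$), in which case you obtain the upper bound at $\x=\BF{0}$ by applying the hypothesis to the pair $(\epsilon\y, t\y)$ and letting $\epsilon\to 0^+$, using continuity of $\nabla f$ and $\theta$ and $\theta(\BF{0})>0$; your lower-bound argument over $t\in(0,1]$ already covers $\x=\BF{0}$ for any value of $\theta(\BF{0})$.
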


Conceptually, the theorem shows that after normalizing by $\theta$, the marginal gains decrease monotonically along positive directions. Thus $\theta$ weakens the standard diminishing returns assumption by allowing marginal gains to increase in absolute magnitude, provided that this increase is explained by the growth of $\theta$.

\paragraph{Relation to Existing Classes.}


When $\theta$ is constant, which corresponds to $\C{F}_{\gamma, p, 0}$ for any $p \geq 1$, Definition~\eqref{eq:weak-theta} reduces to the standard notion of $\gamma$-weak up-concavity~\cite{zhang22_stoch_contin_submod_maxim,pedramfar2024linear}. 
Specifically a function is $\gamma$-weakly up-concave, if for all $\x \leq \y$ in $\C{K}$, we have 
\begin{align}
\gamma \bra \nabla f(\y), \y - \x \ket
\leq f(\y) - f(\x)
\leq \tfrac{1}{\gamma} \bra \nabla f(\x), \y - \x \ket.
\end{align}

We use $\C{F}_{\gamma}$ to denote this class of functions. Further, this class includes the widely-studied class of $\gamma$-weakly DR-submodular functions \cite{hassani17_gradien_method_submod_maxim,PedramfarQuinnAggarwal2024}. 
The function $f$ is \emph{$\gamma$-weakly DR-submodular} if
\begin{equation}
\label{eq:weak-DR}
\gamma \nabla f(\y) \leq \nabla f(\x),
\end{equation}
for all $\x,\y \in [0,1]^d$ with $\y \geq \x$.
Thus DR-submodularity corresponds to the special case where the unnormalized gradient itself is monotone decreasing.
We use $\C{F}_{\gamma}^{\op{DR}}$ to denote this class of functions.


\begin{wrapfigure}{r}{.3\columnwidth}
\vspace{-.05in}
\centering
\resizebox{.3\columnwidth}{!}{%
\begin{tikzpicture}[
    node distance=1.6cm and 3.0cm,
    every node/.style={
        rectangle,
        draw,
        rounded corners,
        align=center,
        minimum height=0.9cm,
        inner sep=3pt
    },
    main/.style={font=\large\bfseries, fill=blue!10},
    subclass/.style={fill=yellow!10},
    subset/.style={->, thick, >=stealth}
]

\node[main] (Fgtheta) {$\mathcal{F}_{\gamma,\theta}$};

\node[subclass, below=of Fgtheta] (Fgpsigma)
    {$\mathcal{F}_{\gamma,p,\sigma}$};

\node[subclass, below left=of Fgpsigma] (Fgamma)
    {$\mathcal{F}_{\gamma}$};

\node[subclass, below right=of Fgpsigma] (F11sigma)
    {$\mathcal{F}_{1,1,\sigma}$};

\node[subclass, below=of Fgamma] (FgammaDR)
    {$\mathcal{F}_{\gamma}^{\mathrm{DR}}$};

\node[subclass, below=of F11sigma] (Fsoss)
    {$\mathcal{F}_{\sigma/2}^{\mathrm{OSS}}$};

\node[subclass, below=of Fsoss] (F11sigma2)
    {$\mathcal{F}_{1,1,\sigma/2}$};

\node[subclass, below=of F11sigma2] (F1)
    {$\mathcal{F}_{1,1,0}
      = \mathcal{F}_{0}^{\mathrm{OSS}}
      = \mathcal{F}_{1}$};

\node[subclass, below=9.1cm of Fgpsigma] (F1DR)
    {$\mathcal{F}_{1}^{\mathrm{DR}}$};


\draw[subset] (Fgpsigma) -- (Fgtheta);

\draw[subset] (Fgamma) -- (Fgpsigma);
\draw[subset] (F11sigma) -- (Fgpsigma);

\draw[subset] (FgammaDR) -- (Fgamma);

\draw[subset] (Fsoss) -- (F11sigma);
\draw[subset] (F11sigma2) -- (Fsoss);
\draw[subset] (F1) -- (F11sigma2);

\draw[subset] (F1DR) -- (FgammaDR);
\draw[subset] (F1DR) -- (F1);

\draw[subset] (F1.west) -- (Fgamma.east);


\draw[dashed, gray, rounded corners]
  ($(Fgamma.north west)+(-0.4,0.4)$)
  rectangle
  ($(FgammaDR.south east)+(0.4,-0.4)$);

\draw[dashed, gray, rounded corners]
  ($(F11sigma.north west)+(-0.4,0.4)$)
  rectangle
  ($(F1.south east)+(0.4,-0.4)$);

\end{tikzpicture}
}
\caption{\small Containment relations among the function classes introduced in Section~4.
An arrow $A \to B$ indicates that $A \subseteq B$.
Dashed boxes group families of classes connected via DR-type or OSS-type structural conditions.}
\label{fig:class-hierarchy}
\vspace{-.4in}
\end{wrapfigure}

We next describe the class of $\sigma$-One-Sided Smooth ($\sigma$-OSS) functions \cite{ghadiri2025beyond,zhang2024online}, which is related to the proposed class of $\gamma$-weakly $(p, \sigma)$-up-concave functions. 
For $\sigma \ge 0$, a $C^2$ function $f$ is \emph{$\sigma$-OSS} if
\begin{equation}
\label{eq:sigma-oss}
\frac{1}{2} \uu^\top \nabla^2 f(\x) \uu \leq \sigma \frac{\|\uu\|_1}{\|\x\|_1} \uu^\top \nabla f(\x)
\end{equation}
for all $\x \in [0,1]^d \setminus \{\BF{0}\}$ and $\uu \geq \BF{0}$.
We use $\C{F}_{\sigma}^{\op{OSS}}$ to denote this class of functions.

Unlike DR-submodularity, the $\sigma$-OSS condition permits certain forms of increasing marginal gains. (See Lemma~\ref{lem:oss-acc-then-dim-returns})
The following theorem shows that $\sigma$-OSS condition also induces a corresponding scale-dependent first-order curvature condition.
See Appendix~\ref{apx:oss-equivalence} for the proof.


\begin{theorem}\label{thm:oss-equivalence}
For any $\sigma \geq 0$, we have $C^2 \cap \C{F}_{1, 1, \sigma} \subseteq \C{F}_{\sigma}^{\op{OSS}} \subseteq \C{F}_{1, 1, 2\sigma}$.
\end{theorem}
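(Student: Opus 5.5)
The plan is to prove both inclusions by restricting $f$ to the segment from $\x$ to $\y$ and using one fact about the $\ell^1$ norm: for $\x,\uu \ge \BF{0}$ it is additive, so $\|\x + t\uu\|_1 = \|\x\|_1 + t\|\uu\|_1$. Fix $\x\neq\BF{0}$ and $\uu\ge\BF{0}$, and write $n(t) := \|\x\|_1 + t\|\uu\|_1$, $s := \|\uu\|_1$, $\varphi(t) := f(\x+t\uu)$ and $g(t) := \varphi'(t) = \bra \nabla f(\x+t\uu), \uu\ket$. Note that $n(t)\ge \|\x\|_1>0$. As in the OSS literature, I will use that $f$ is monotone, so $\nabla f \ge \BF{0}$ and hence $g \ge 0$.

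\emph{First inclusion} ($C^2\cap\C{F}_{1,1,\sigma}\subseteq \C{F}^{\op{OSS}}_\sigma$). I first take $\x$ in the open box $(0,1)^d$, so that $\y=\x+t\uu$ is feasible for small $t>0$. I apply the upper inequality of \eqref{eq:weak-p-sigma} with $\gamma=1$, $p=1$:
\begin{align*}
\varphi(t)-\varphi(0) \le \Big(\tfrac{n(t)}{n(0)}\Big)^{\sigma} t\, g(0).
\end{align*}
Next I expand both sides to second order in $t$. The left side is $t g(0) + \tfrac{t^2}{2}\uu^\top\nabla^2 f(\x)\uu + o(t^2)$. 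The right side is $t g(0) + \sigma t^2 \tfrac{s}{\|\x\|_1} g(0) + O(t^3)$. The first-order terms cancel, and dividing by $t^2$ and letting $t\to 0^+$ gives exactly \eqref{eq:sigma-oss} at $\x$. Both sides of \eqref{eq:sigma-oss} are continuous in $\x$ on $[0,1]^d\setminus\{\BF{0}\}$ because $f\in C^2$. Every such point is a limit of interior points, so the inequality extends to the whole domain, which handles the boundary.

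\emph{Second inclusion} ($\C{F}^{\op{OSS}}_\sigma\subseteq\C{F}_{1,1,2\sigma}$). Take $\y\ge\x\neq\BF{0}$ and set $\uu=\y-\x$. By \eqref{eq:sigma-oss} applied at $\x+t\uu$ (where $\|\x+t\uu\|_1=n(t)$), we get $g'(t)=\uu^\top\nabla^2 f(\x+t\uu)\uu \le 2\sigma \tfrac{s}{n(t)} g(t)$. This is a Grönwall-type inequality. Since $n'(t)=s$,
\begin{align*}
\frac{d}{dt}\Big[\frac{g(t)}{n(t)^{2\sigma}}\Big] = \frac{g'(t)\,n(t) - 2\sigma s\, g(t)}{n(t)^{2\sigma+1}} \le 0,
\end{align*}
so $g(t)/n(t)^{2\sigma}$ is nonincreasing on $[0,1]$.

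For the upper inequality, monotonicity of this ratio gives $g(t)\le g(0)\,(n(t)/n(0))^{2\sigma}\le g(0)\,(n(1)/n(0))^{2\sigma}$. The last step uses $g(0)\ge 0$ and $n(t)\le n(1)$. Integrating over $[0,1]$ yields $f(\y)-f(\x)\le (\|\y\|_1/\|\x\|_1)^{2\sigma}\bra\nabla f(\x),\y-\x\ket$.

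Symmetrically, for the lower inequality, $g(t)\ge g(1)\,(n(t)/n(1))^{2\sigma}\ge g(1)\,(n(0)/n(1))^{2\sigma}$, using $g(1)\ge0$. Integrating gives the left inequality of \eqref{eq:weak-p-sigma} with $\gamma=1$ and exponent $2\sigma$.

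\emph{Main obstacle.} The delicate point is the sign of $g$. The Grönwall step itself needs no sign, but the two final comparisons of $(n(t)/n(0))^{2\sigma}$ with its endpoint values require $g(0)\ge0$ and $g(1)\ge0$. I would therefore rely explicitly on monotonicity of $f$, which is the standing assumption for OSS objectives in this setting. A secondary technicality is feasibility of $\x+t\uu$ at the boundary in the first inclusion, which the continuity argument above handles.
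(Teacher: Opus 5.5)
Your proposal is correct and follows essentially the same route as the paper: for the first inclusion, a second-order expansion of the upper inequality along $\x+t\uu$ at interior points $\x$, followed by a continuity extension to the boundary; for the second, the observation that $g(t)/\|\x+t(\y-\x)\|_1^{2\sigma}$ is nonincreasing, combined with $g\ge 0$ from monotonicity of $f$ (which the paper's proof also uses implicitly). The only cosmetic difference is that the paper expands at a general point $t$ on the path and then sets $t=0$, whereas you expand directly at $\x$.
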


With the relations mentioned above, and that $\mathcal{F}_{0}^{\mathrm{OSS}}$ contains $\C{F}_{1}^{\op{DR}}$~\cite{ghadiri2025beyond}, the hierarchy between different classes is depicted in Fig. \ref{fig:class-hierarchy}. Our framework also captures behaviors beyond existing models. Appendix~\ref{apx:beyondoss} develops additional structural results illustrating the expressivity
of $\gamma$-weakly $\theta$-up-concavity. 
In particular, we construct functions that are $1$-weakly $\theta$-up-concave but are neither OSS nor $\gamma$-weakly $(p,\sigma)$-up-concave for any choice of parameters.
These functions can be flat at the origin and exhibit \emph{accumulating-then-diminishing returns}, demonstrating that our class is strictly more expressive; see Lemma~\ref{result_beyondoss} in Appendix~\ref{apx:beyondoss}. 

\section{\texorpdfstring{$\gamma$}{gamma}-Weakly \texorpdfstring{$\theta$}{theta}-Up-Concave Functions are Upper-Linearizable}

\begin{wraptable}{r}{0.35\textwidth}
\vspace{-.2in}
\centering
\resizebox{0.35\textwidth}{!}{
\begin{tabular}{c|c}
\hline
$K$ & $K^*$ \\
\hline
$[0,1]^d$ & $\{\mathbf{1}\}$ \\
$\{x \ge 0 : \|x\|_1 \le 1\}$ & $\{x \ge 0 : \|x\|_1 = 1\}$ \\
Matroid independence polytope & Basis polytope \\
$\{x \ge 0 : a^\top x \le B\}$ & $\{x \ge 0 : a^\top x = B\}$ \\
\hline
\end{tabular}
}
\vspace{-.1in}
\caption{\small Examples of $K$ and corresponding $K^*$.}\label{ex:KK}
\vspace{-.2in}
\end{wraptable}
Theorem~\ref{thm:linearizable} is the main result of this work, which shows that the class of $\gamma$-weakly $\theta$-up-concave functions are upper-linearizable.
We start with some definitions.

\begin{definition}\label{def:r}
Let $\mathcal{K} \subseteq [0,1]^d$ be a nonempty, closed, convex set. 
We define the \emph{maximal subset} $\C{K}^{\op{m}} \subseteq \C{K}$ to be the set of points $\vv \in \C{K}$ such that there is no $\uu \in \C{K}$ where $\vv < \uu$.
We define the \emph{maximal convex subset} $\C{K}^* = \op{conv}(\C{K}^{\op m})$.
For a set $A \subseteq \mathbb{R}^n$, we write $\op{conv}(A)$ to denote its convex hull.
For a monotone function $\theta : [0, 1]^d \to \B{R}$, we define 
\begin{align*}
r_\theta(\C{K}) &= \min_{\vv \in \C{K}^*} \theta(\vv)
,\quad
R_\theta(\C{K}) = \max_{\vv, \uu \in \C{K}} \theta(\vv \vee \uu) = \max_{\vv, \uu \in \C{K}^*} \theta(\vv \vee \uu).
\end{align*}
As a special case, if $\theta(\vv) = \|\vv\|_p$ for some $p \geq 1$, we use the notations $r_p$ and $R_p$.
\end{definition}

If $\C{K} = [0, 1]^d$, we have $\C{K}^* = \C{K}^{\op{m}} = \{\BF{1}\}$ (Also, see Table \ref{ex:KK} for more examples).
Therefore $r_\theta = R_\theta$ for any choice of $\theta$.

For any monotone function $f$ over $\C{K}$, its maximum is attained on the set
$\C{K}^{\op m} \subseteq \C{K}^* \subseteq \C{K}$.
Thus, we may restrict optimization to $\C{K}^*$ instead of $\C{K}$ without loss of generality.




\begin{theorem}\label{thm:linearizable}
Let $\C{K} \subseteq [0, 1]^d$ be a convex set containing the origin and let $\C{F} = \C{F}_{\gamma, \theta}$ be the class of $\gamma$-weakly $\theta$-up-concave functions.
Let $R_\theta = R_\theta(\C{K})$ and 
$l(r, \x) := \frac{-\gamma}{R_\theta} \int_r^1 \theta(s \x) ds$, and 
let $\F{g} : \C{F} \times \C{K} \to \B{R}^d$ be the functional defined by
\begin{align*}
\F{g}(f, \x)
&= \int_0^1 e^{l(r, \x)} \nabla f(r \x) dr,
\end{align*}
for all $f \in \C{F}$ and $\x \in \C{K}$.
Then $
\bra \F{g}(f, \x), \y - \x \ket 
\geq \alpha_\x f(\y) - f(\x),$
for all $\y \in \C{K}$ and $\x \in \C{K}$, where 
$
\alpha_\x = 1 - \exp\left( - \frac{\gamma}{R_\theta} \int_0^1 \theta(s \x) ds \right).$
\end{theorem}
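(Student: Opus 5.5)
The plan is the standard ``boosting'' argument: integrate a pointwise first-order inequality along the ray $r \mapsto r\x$, weighted by $e^{l(r,\x)}$. I treat $f$ as monotone, so that $\nabla f \geq \BF{0}$; this is the setting the paper targets, and the upper inequality in \eqref{eq:weak-theta} alone does not force it. I also use that $\theta \geq 0$ and that $\theta$ is continuous, so $l(\cdot,\x)$ is $C^1$ in $r$ with $\partial_r l(r,\x) = \frac{\gamma}{R_\theta}\theta(r\x)$.

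\textbf{Pointwise inequality.} Fix $\y \in \C{K}$ and $r \in (0,1]$, and set $\z = r\x$. If $\x = \BF{0}$ then $\alpha_\x = 0$ and the claim reduces to $\bra \F{g}(f,\x), \y - \x\ket \geq -f(\x)$; for $\x=\BF{0}$ this follows from $\nabla f \geq \BF{0}$, $\y \geq \BF{0}$ and $f \geq 0$. So assume $\x \neq \BF{0}$, hence $\z \neq \BF{0}$. Since $\z \geq \BF{0}$, we have $\BF{0} \leq \z\vee\y - \z \leq \y$ coordinatewise, and with $\nabla f(\z)\geq \BF{0}$ this gives $\bra \nabla f(\z), \y\ket \geq \bra \nabla f(\z), \z\vee \y - \z\ket$. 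Apply the right inequality of \eqref{eq:weak-theta} to the pair $\z \leq \z\vee\y$. Then use monotonicity of $f$ ($f(\z\vee\y)\geq f(\y)$) and the bound $\theta(\z\vee\y) \leq R_\theta$, which holds because $\z = r\x \in \C{K}$ (convex set containing $\BF{0}$). Together these give
\begin{align*}
\bra \nabla f(r\x), \y\ket \;\geq\; \tfrac{\gamma\,\theta(r\x)}{R_\theta}\bigl(f(\y) - f(r\x)\bigr) \;=\; \partial_r l(r,\x)\,\bigl(f(\y) - f(r\x)\bigr).
\end{align*}
One case needs care. If $f(\y) < f(\z)$ the right-hand side is negative, but then the inequality holds trivially because the left side is nonnegative. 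Otherwise the chain of inequalities above applies directly.

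\textbf{Integration by parts.} Split $\bra \F{g}(f,\x), \y - \x\ket = \int_0^1 e^{l}\bra\nabla f(r\x),\y\ket dr - \int_0^1 e^{l}\bra\nabla f(r\x),\x\ket dr$. In the second integral, $\bra\nabla f(r\x),\x\ket = \frac{d}{dr}f(r\x)$. Integrating by parts, and using $l(1,\x)=0$, gives
\[
f(\x) - e^{l(0,\x)}f(\BF{0}) - \int_0^1 e^{l}\,\partial_r l\, f(r\x)\,dr .
\]
Substitute the pointwise bound into the first integral (the single point $r=0$ has measure zero). The $f(r\x)$ terms cancel, leaving
\[
f(\y)\int_0^1 e^{l}\partial_r l\,dr - f(\x) + e^{l(0,\x)}f(\BF{0}) .
\]
Here $\int_0^1 e^{l}\partial_r l\,dr = 1 - e^{l(0,\x)} = \alpha_\x$. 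Finally, drop $e^{l(0,\x)}f(\BF{0}) \geq 0$ to obtain the claim.

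\textbf{Main obstacle.} The delicate step is the pointwise inequality. It needs the sign conditions ($\nabla f \geq \BF{0}$ and $f \geq 0$) to pass from $\z\vee\y - \z$ to $\y$, and it needs $\z \vee \y$ to be controlled by $R_\theta$ even though $\z\vee\y$ need not lie in $\C{K}$. The latter is exactly why $R_\theta$ is defined as a maximum of $\theta(\vv\vee\uu)$ over pairs in $\C{K}$, and why $\C{K}$ must contain the origin, so that $r\x \in \C{K}$. The possible degeneracy of $\theta$ at $\BF{0}$ is harmless: it enters only through the weight $\partial_r l$, which is allowed to vanish.
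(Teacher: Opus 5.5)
Your argument is the same as the paper's proof. Both integrate by parts along the ray to rewrite $\bra \F{g}(f,\x),\x\ket$, bound $\bra\nabla f(r\x),\y\ket$ pointwise through $\y\vee r\x$, and let the $f(r\x)$ terms cancel. The pointwise bound uses the upper inequality of \eqref{eq:weak-theta}, $\theta(\y\vee r\x)\le R_\theta$, and monotonicity. Your case split on $f(\y)<f(r\x)$ is valid but unnecessary. If you replace $\theta(\y\vee r\x)$ by $R_\theta$ \emph{before} replacing $f(\y\vee r\x)$ by $f(\y)$, the factor $f(\y\vee r\x)-f(r\x)$ is already nonnegative by monotonicity; this is how the paper orders the chain.

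There is one genuine slip, in your treatment of $\x=\BF{0}$. You assert $\alpha_{\BF{0}}=0$, but in fact $\alpha_{\BF{0}}=1-\exp(-\gamma\theta(\BF{0})/R_\theta)$. The hypotheses only require $\theta\neq 0$ away from the origin, and constant $\theta$ is explicitly allowed; that is the $\gamma$-weakly up-concave case, where $\alpha_{\BF{0}}=1-e^{-\gamma}$. When $\theta(\BF{0})>0$, your reduction proves only $\bra\F{g}(f,\BF{0}),\y\ket\ge -f(\BF{0})$, which is weaker than the claim. The repair is to drop the special case altogether. The pointwise bound $\bra\nabla f(r\x),\y\ket\ge \frac{\gamma\theta(r\x)}{R_\theta}\bigl(f(\y)-f(r\x)\bigr)$ also holds at the point $\BF{0}$: trivially if $\theta(\BF{0})=0$ or $\y=\BF{0}$, and by the upper inequality applied to the pair $(\BF{0},\y)$ if $\theta(\BF{0})>0$ and $\y\neq\BF{0}$. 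Your integration-by-parts computation then goes through verbatim for $\x=\BF{0}$.
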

See Appendix~\ref{apx:linearizable} for the proof.
Note that $\F{g}(f, \x)$ is defined using the values of $\nabla f$ over the line segment connecting $\x$ and $\BF 0$.
The assumption $\BF{0} \in \C{K}$ is included so that $\F{g}(f, \x)$ may does not depend on the value (or gradient) of $f$ outside of $\C{K}$.


We can use $\alpha = \inf_{\x\in \C{K}}\alpha_\x$ to obtain the approximation $\alpha$ of the upper-linearizable optimization.
\begin{corollary}
If $\theta(\z) = 1$, we see that
$\inf_{\x\in \C{K}}\alpha_\x = 1 - \exp\left( - \gamma \right)$,
which recovers the SOTA approximation coefficient for the linearization of $\gamma$-weakly up-concave functions. \cite{pedramfar2024linear,zhang22_stoch_contin_submod_maxim}
\end{corollary}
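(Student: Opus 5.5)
The plan is to substitute $\theta \equiv 1$ directly into the expressions of Theorem~\ref{thm:linearizable}; the parent inequality $\bra \F{g}(f, \x), \y - \x \ket \geq \alpha_\x f(\y) - f(\x)$ is taken as given. The corollary itself then reduces to evaluating two quantities: $R_\theta(\C{K})$ and the integral defining $\alpha_\x$.

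First, the constant function $\theta(\z) = 1$ is trivially continuous and monotone. It is nonzero everywhere, so it is an admissible scaling function, and $\C{F}_{\gamma,\theta}$ coincides with $\C{F}_{\gamma}$, the class of $\gamma$-weakly up-concave functions. By Definition~\ref{def:r}, $R_\theta(\C{K}) = \max_{\vv,\uu \in \C{K}} \theta(\vv \vee \uu) = 1$. Likewise, for every $\x \in \C{K}$ we have $\int_0^1 \theta(s\x)\,ds = \int_0^1 1\,ds = 1$. Consequently $\alpha_\x = 1 - \exp(-\gamma \cdot 1 / 1) = 1 - e^{-\gamma}$, independently of $\x$, so $\inf_{\x \in \C{K}} \alpha_\x = 1 - e^{-\gamma}$.

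To make the ``recovers'' claim explicit, I will also write out the surrogate in this case. Here $l(r,\x) = -\gamma(1-r)$, so $\F{g}(f,\x) = \int_0^1 e^{\gamma(r-1)} \nabla f(r\x)\,dr$. This is the boosting gradient used for $\gamma$-weakly up-concave functions in \cite{pedramfar2024linear,zhang22_stoch_contin_submod_maxim}, up to the positive normalization constant $\int_0^1 e^{\gamma(r-1)}\,dr$, which can be absorbed into $\beta$ in \eqref{eq:linearizable}. The same coefficient $1 - e^{-\gamma}$ appears there, and it is known to be optimal for DR-submodular maximization when $\gamma = 1$.

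There is no real obstacle in this argument. The only points requiring care are two bookkeeping checks. First, $R_\theta$ must be evaluated correctly as $1$, rather than as something depending on the geometry of $\C{K}$. Second, the uniformity of $\alpha_\x$ must cover $\x = \BF{0}$ as well: there the integral still equals $1$ because $\theta(\BF{0}) = 1$, so the infimum is attained.
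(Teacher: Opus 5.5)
Your proposal is correct and is the same direct substitution the paper intends. With $\theta \equiv 1$ you get $R_\theta(\C{K}) = 1$ and $\int_0^1 \theta(s\x)\,ds = 1$, so $\alpha_\x = 1 - e^{-\gamma}$ for every $\x \in \C{K}$, including $\x = \BF{0}$, and the infimum follows at once; your identification of $\F{g}(f,\x)$ with the boosting gradient $\int_0^1 e^{\gamma(r-1)}\nabla f(r\x)\,dr$ is accurate but not needed for the statement.
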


However, if $\theta(\x) \to 0$ as $\x \to \BF0$, we may see that $\inf_{\x\in \C{K}}\alpha_\x = 0$.
In this case, we may restrict the set $\C K$ to a smaller set that away from the origin in order to bound $\alpha_\x$ away from $0$.

\begin{corollary}\label{cor:main-alpha}
Under the assumptions of Theorem~\ref{thm:linearizable}, for all $f \in \C F$, and $\x, \y \in \C{K}^*$ we have
\begin{align*}
\bra \F{g}(f, \x), \y - \x \ket 
&\geq \alpha f(\y) - f(\x),
\end{align*}
where 
$\alpha = 1 - \exp\left( - \frac{\gamma}{R_\theta} \inf_{\x \in \C{K}^*} \int_0^1 \theta(s \x) ds \right)$.
\end{corollary}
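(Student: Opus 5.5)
The plan is to derive Corollary~\ref{cor:main-alpha} directly from Theorem~\ref{thm:linearizable}, using only monotonicity in $\alpha$ and the non-negativity of $f$. Fix $f \in \C F$ and $\x, \y \in \C{K}^*$. Since $\C{K}^* = \op{conv}(\C{K}^{\op m}) \subseteq \C{K}$ (because $\C{K}$ is convex), both points lie in $\C{K}$. Theorem~\ref{thm:linearizable} therefore applies and gives
\begin{align*}
\bra \F{g}(f, \x), \y - \x \ket \geq \alpha_\x f(\y) - f(\x),
\qquad
\alpha_\x = 1 - \exp\left( - \frac{\gamma}{R_\theta} \int_0^1 \theta(s \x) ds \right).
\end{align*}
It remains to show $\alpha_\x \geq \alpha$ and then use $f(\y) \geq 0$.

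For the comparison, note that the map $t \mapsto 1 - \exp(-\gamma t / R_\theta)$ is nondecreasing in $t$, since $\gamma > 0$ and $R_\theta > 0$. For $\x \in \C{K}^*$ we have
\[
\int_0^1 \theta(s\x)\, ds \geq \inf_{\z \in \C{K}^*} \int_0^1 \theta(s \z)\, ds,
\]
and hence $\alpha_\x \geq \alpha$. Because $f$ takes values in $\B{R}_{\ge 0}$, this gives $\alpha_\x f(\y) \geq \alpha f(\y)$, and the claim follows by chaining the two inequalities.

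The only point that needs a check is that $R_\theta > 0$, so that $\alpha$ is well defined; otherwise the argument is bookkeeping. If $\C{K} = \{\BF 0\}$, the statement is trivial. Otherwise some $\vv \neq \BF 0$ lies in $\C{K}$, and $\theta(\vv \vee \vv) = \theta(\vv) > 0$, so $R_\theta > 0$.

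It is also worth noting why restricting to $\C{K}^*$ makes the corollary meaningful, although this is not needed for the inequality itself. The set $\C{K}^*$ is compact, and typically stays away from $\BF 0$ when $\C{K} \neq \{\BF 0\}$. The map $\z \mapsto \int_0^1 \theta(s\z)\,ds$ is continuous, and it is positive at every $\z \neq \BF 0$ because $\theta(s\z) > 0$ for $s > 0$. So the infimum is attained and positive whenever $\BF 0 \notin \C{K}^*$, which gives $\alpha > 0$.
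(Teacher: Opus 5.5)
Your proof is correct and is the same argument the paper intends for this corollary, which it states without a separate proof: since $\C{K}^* \subseteq \C{K}$, Theorem~\ref{thm:linearizable} applies, and then $\alpha_\x \geq \alpha$ for $\x \in \C{K}^*$ together with $f(\y) \geq 0$ gives the claim. Only your closing side remark overreaches: for a general closed convex $\C{K}$ the maximal set $\C{K}^{\op m}$ need not be closed (this can already fail for $d \geq 3$), so compactness of $\C{K}^*$ and attainment of the infimum are not automatic; positivity of the infimum still holds because $\C{K}^*$ is bounded away from $\BF 0$ in $\ell^1$ norm, and none of this is needed for the inequality.
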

In particular, if $\theta(\z) = \|\z\|_p^\sigma$ for some $p \geq 1$ and $\sigma \geq 0$, we have $R_\theta(\C{K}) = (R_p(\C{K}))^\sigma$, $r_\theta(\C{K}) = (r_p(\C{K}))^\sigma$, and
$\alpha 
= 1 - \exp\left( - \frac{\gamma}{\sigma + 1} \left(\frac{r_p}{R_p}\right)^\sigma \right)$.

As we will see in the following section, linearizability over $\C{K}^*$ is sufficient to obtain the above approximation coefficients over $\C{K}$.

\section{Main algorithm}
\label{sec:main-alg}

In this section, we will discuss how the linearization may be used to obtain algorithms for online optimization.
We use the framework introduced in~\cite{pedramfar2024linear} described in Section~\ref{sec:prelim-linearizable}.

\begin{definition}\label{def:G}
Let $\C{K} \subseteq [0,1]^d$ be a nonempty, closed, convex set that contains the origin and at least one other point.
For $\x \in \C{K}$, let $\C{Z}_{\x} \in [0, 1]$ be the random variable with the distribution
\begin{align*}
\B{P}(\C{Z}_{\x} \le z) 
= \frac{\int_0^{z} e^{l(r, \x)} dr}{\int_0^1 e^{l(r, \x)} dr},
\end{align*}
where $l(r, \x) := \frac{-\gamma}{R_\theta} \int_r^1 \theta(s \x) ds$.
Given a stochastic first order query oracle $\C{Q}_f$ for $f$, we define
\begin{align*}
\C{G}(\C{Q}_f, \x) 
&:= \left( \int_0^1 e^{l(r, \x)} dr \right) \C{Q}_f(z \x),
\end{align*}
where $z$ is sampled according to $\C{Z}_\x$.
\end{definition}

\begin{remark}
Note that the assumption $\BF{0} \in \C{K}$ is necessary since $\C{G}$ requires access to $\C{Q}_f$ over the line segment $\{r \x \mid r \in [0, 1]\}$ to generate a sample for $\C{G}(\C{Q}_f, \x)$.
\end{remark}

We define $\C{F}^* := \{ f|_{\C{K}^*} \mid f \in \C{F}\}$ to be restriction of the functions in $\C{F}$ to the set $\C{K}^*$.
Since functions in $\C{F}$ are monotone, any algorithm for optimization over $\C{F}^*$ is also an algorithm for optimization $\C{F}$.

If we limit the base algorithm passed to the $\mathtt{OMBQ}$ meta-algorithm to run only on $\mathcal K^*$, then the resulting algorithm obtains the approximation described in~\ref{cor:main-alpha}.
In other words, we convert the local non-uniform linearizability of~\ref{thm:regret_main} into standard linearizability by controlling the geometry and restricting it to where the local approximation coefficient is desirable.
The following theorem formalizes this idea.
See Appendix~\ref{apx:regret_main} for the proof.

\begin{theorem}
\label{thm:regret_main}
Let $\C{K} \subseteq [0, 1]^d$ be a convex set containing the origin and let $\C{F} = \C{F}_{\gamma, \theta}$ be the class of $\gamma$-weakly $\theta$-up-concave functions.
Also define the function class $\C{F}^*$ over $\C{K}^*$ and $\C{G}$ as above and let $h = \op{Id}_{\C{K}^*}$.
Then $\C{G}$ is a query oracle for $\F{g}$, which is bounded by $B_1$ whenever $\mathcal Q_f$ is bounded by $B_1 \geq 0$.
Moreover, if $\mathcal{A}(\C{K}^*)$ is a deterministic algorithm for online optimization with semi-bandit feedback over $\C{K}^*$ and 
 $\mathcal{A}' = \mathtt{OMBQ}(\mathcal{A}(\C{K}^*), \mathcal{G}, h)$, then 
\begin{equation*}
\mathcal{R}^{\mathcal{A}'}_{\alpha, \op{Adv}_1(\C{F}, B_1)}
\le 
\mathcal{R}^{\mathcal{A}(\C{K}^*)}_{1, \op{Adv}_1(\C{L}(\C{K}^*)[B_1])},
\end{equation*}
where $
\alpha = 1 - \exp\left( - \frac{\gamma}{R_\theta} \inf_{\x \in \C{K}^*} \int_0^1 \theta(s \x) ds \right)$.
\end{theorem}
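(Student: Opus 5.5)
The plan has three parts. First we check that $\C{G}$ is an unbiased query oracle for $\F{g}$ and inherits the bound $B_1$. Then we show that $(\F{g}, h=\op{Id}_{\C{K}^*})$ linearizes $\C{F}^*$ over $\C{K}^*$ with constants $\alpha$ and $\beta=1$. Finally we invoke Theorem~\ref{thm:regret_transfer} and undo the restriction from $\C{K}$ to $\C{K}^*$ using monotonicity.

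\textbf{Oracle properties.} Put $Z := \int_0^1 e^{l(r,\x)}\,dr$. Since $l(r,\x) \le 0$, we have $0 < Z \le 1$. The distribution of $\C{Z}_\x$ has density $e^{l(r,\x)}/Z$ on $[0,1]$. Conditioning on the sampled $z$ and using $\B{E}[\C{Q}_f(z\x)\mid z] = \nabla f(z\x)$ gives
\begin{align*}
\B{E}\,\C{G}(\C{Q}_f,\x) = Z\int_0^1 \frac{e^{l(r,\x)}}{Z}\,\nabla f(r\x)\,dr = \F{g}(f,\x).
\end{align*}
Fubini applies because $\nabla f$ is continuous on the compact segment, or simply because the oracle is bounded. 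Note that $r\x \in \C{K}$ since $\BF 0 \in \C{K}$ and $\C{K}$ is convex, so every query is legitimate. For the bound, $\|\C{G}(\C{Q}_f,\x)\| = Z\|\C{Q}_f(z\x)\| \le B_1$ because $Z \le 1$.

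\textbf{Linearizability over $\C{K}^*$.} Take $\x,\y\in\C{K}^*$. Corollary~\ref{cor:main-alpha} gives $\bra \F{g}(f,\x), \y-\x\ket \ge \alpha f(\y) - f(\x)$, where $\alpha$ is the infimum over $\C{K}^*$. Hence \eqref{eq:linearizable} holds for the class $\C{F}^*$ on $\C{K}^*$ with $\beta = 1$ and $h(\x)=\x$. The corollary itself follows from Theorem~\ref{thm:linearizable}, since $\alpha_\x \ge \alpha$ for $\x \in \C{K}^*$ and $f \ge 0$.

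One subtlety is that $\F{g}(f,\x)$ uses $\nabla f$ on the segment $[\BF 0,\x]$, which leaves $\C{K}^*$. For this reason the linearizing map should be viewed as a functional of the original $f\in\C{F}$ rather than of $f|_{\C{K}^*}$. This is harmless, because the oracle $\C{Q}_f$ is defined on all of $\C{K}$. Also, $\C{K}^*$ is convex and compact, being the convex hull of the maximal set, whose closure is compact. So the base algorithm $\C{A}(\C{K}^*)$ is well defined.

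\textbf{Transfer and comparator.} Applying Theorem~\ref{thm:regret_transfer} with domain $\C{K}^*$ and $\beta=1$ bounds the $\alpha$-regret of $\C{A}'$ against comparators in $\C{K}^*$ by the linear regret of $\C{A}(\C{K}^*)$. It remains to replace comparators $\uu_t\in\C{K}$ by comparators in $\C{K}^*$. Since $\C{K}$ is compact, every $\uu\in\C{K}$ is dominated by some maximal $\vv\in\C{K}^{\op m}\subseteq\C{K}^*$: take a maximizer of $\sum_i v_i$ over $\{\vv\in\C{K}:\vv\ge\uu\}$. Monotonicity then gives $f_t(\uu_t)\le f_t(\vv_t)$, so the maximum over $\C{K}$-comparators equals the maximum over $\C{K}^*$-comparators.

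For dynamic regret the path length may change under this replacement. Here we only need the static and adaptive statement as written, or we interpret $\C{U}$ inside $(\C{K}^*)^T$. The plays $\x_t$ lie in $\C{K}^*\subseteq\C{K}$, so they are feasible.

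\textbf{Main obstacle.} The delicate point is the domain mismatch just described: $\F{g}$ depends on $f$ off $\C{K}^*$, and the meta-theorem is stated for classes over the algorithm's domain. I would handle this by rerunning the two-line proof of Theorem~\ref{thm:regret_transfer} directly. Sum $\alpha f_t(\uu_t) - f_t(\x_t) \le \bra \F{g}(f_t,\x_t), \uu_t - \x_t\ket$ over $t$. Then replace $\F{g}(f_t,\x_t)$ by its unbiased estimate $\mathbf{g}_t$, using that $\x_t$ is determined by past samples. The right-hand side becomes exactly the regret of $\C{A}(\C{K}^*)$ on the linear losses $\bra \mathbf{g}_t,\cdot\ket$, which have norm at most $B_1$.
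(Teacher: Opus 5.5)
Your proposal is correct and follows the paper's proof. You check unbiasedness and the $B_1$ bound for $\C{G}$ from $l(r,\x)\le 0$, get linearizability on $\C{K}^*$ with $\beta=1$ via Corollary~\ref{cor:main-alpha}, apply Theorem~\ref{thm:regret_transfer} over $\C{K}^*$, and use monotonicity to replace comparators in $\C{K}$ by comparators in $\C{K}^*$. You are in fact more careful than the paper about two points: that $\F{g}$ depends on $f$ off $\C{K}^*$, and that path length can change in the dynamic case.

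One side remark is off: boundedness of $\C{K}^{\op m}$ does not make $\op{conv}(\C{K}^{\op m})$ closed. You never need compactness of $\C{K}^*$, though, since the existence of $\C{A}(\C{K}^*)$ is a hypothesis and your comparator argument only uses compactness of $\C{K}$.
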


\begin{wrapfigure}{r}{0.38\columnwidth}
\vspace{-.4in} 

\begin{minipage}{0.38\columnwidth}
\small

\begin{algorithm2e}[H]
\caption{\small Example algorithm when base algorithm is Online Gradient Ascent - $\mathtt{OMBQ}(\mathtt{OGA}, \mathcal{G}, \op{Id})$}
\label{alg:ombq-oga}
\SetKwInOut{Input}{Input}\DontPrintSemicolon
\Input{Query oracle $\C{Q}_f$ for $f \in \C{F}$, Projection oracle $\C{P}_{\C{K}^*}$ for $\C{K}^*$, step-sizes $(\eta_t)_{t=1}^{T}$.}
Let $\x_1$ be a point in $\C{K}^*$. \;
\For{$t = 1, 2, \dots, T$}{
    Play $\x_t$. \;
    Sample $z$ according to $\C Z_\x$ and sample $\C{Q}_f(z \x_t)$ to obtain $\BF{o}_t$. \;
    $\BF{g}_t \gets \left( \int_0^1 e^{l(r, \x_t)} dr \right) \BF{o}_t$. \;
    $\x_{t+1} \gets \C{P}_{\C{K}^*}(\x_t + \eta_t \BF{g}_t)$ \;
}
\end{algorithm2e}
\end{minipage}
\vspace{-.3in}
\end{wrapfigure}
The crucial point here is that we apply the base algorithm $\C{A}$ only over the set $\C{K}^*$.
Thus the sequence of points generated by $\C{A}$ are always in $\C{K}^*$, which allows us to stay away from the origin so that $\alpha_\x$ defined in Theorem~\ref{thm:linearizable} are bounded from below by $\alpha$ defined in Corollary~\ref{cor:main-alpha}.
Algorithm~\ref{alg:ombq-oga} shows the pseudo-code for the special case where $\C{A}$ is chosen to be Online Gradient Ascent $\mathtt{OGA}$.
In Section~\ref{sec:extensions}, we consider other algorithms as the base algorithm.

\begin{remark}\label{rem:solvable}
An optimization algorithm over $\C{K}^*$ would require a set oracle in order to interact with the set $\C{K}^*$.
For example $\mathtt{OGA}$ requires access to a projection oracle over $\C{K}^*$, while $\mathtt{SO-OGA}$ algorithm (discussed in Section~\ref{sec:extensions}) requires access to a separation oracle for $\C{K}^*$.
%
The problem of characterizing $\C{K}^*$ is well studied in the vector optimization literature.
In Appendix~\ref{apx:benson}, we discuss how Benson-type algorithms may be used to construct appropriate separation oracles.
Specifically, if $\C{K}$ is a rational polyhedra, classical Benson algorithm may be used to fully describe $\C{K}^*$ in finite time.
Though the complexity of this procedure is output-sensitive and depends on the complexity of $\C{K}^*$.
For this reason, we also discuss an anytime dual Benson-type algorithm which produces progressively better approximations of $\C{K}^*$ and may be stopped at anytime to obtain a set $\C{K}'$ where $\C{K}^* \subseteq \C{K}' \subseteq \C{K}$. (Note that such $\C{K}'$ can also be used in guarantees rather than $\C{K}^*$)
This procedure may be run once at the beginning to characterize $\C{K}'$ or we may choose to use a lazy approach to improve $\C{K}'$ as needed during the run of the main algorithm.
\end{remark}

\begin{corollary}
Let $\C F = \C{F}_{\gamma, \theta}$ and assume that $\C{Q}_f$ is a stochastic first-order oracle bounded by $B_1 > 0$ for any $f \in \C F$.
If we choose step-sizes $\eta_t=\tfrac{D}{B_1 \sqrt{t}}$, where $D \geq \op{diam}(\C{K}^*)$, then the regret of Algorithm~\ref{alg:ombq-oga} may be bounded as
$\mathcal{R}_{\alpha, \op{Adv}_1(\C{F}, B_1)} 
\le 
\frac{3}{2} B_1 D \sqrt{T}$.
\end{corollary}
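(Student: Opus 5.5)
The corollary should follow directly from Theorem~\ref{thm:regret_main} combined with the standard regret bound for online gradient ascent with linear losses. By Theorem~\ref{thm:regret_main}, Algorithm~\ref{alg:ombq-oga} is exactly $\mathtt{OMBQ}(\mathtt{OGA}(\C{K}^*), \C{G}, \op{Id}_{\C{K}^*})$. Its $\alpha$-regret against $\op{Adv}_1(\C{F}, B_1)$ is therefore bounded by the $1$-regret of $\mathtt{OGA}$ run on $\C{K}^*$ against an adversary choosing linear functions $\ell_t(\x) = \bra \BF{g}_t, \x \ket$. Each such function has gradient norm at most $B_1$, since the theorem guarantees that $\C{G}$ is bounded by $B_1$ whenever $\C{Q}_f$ is. So the whole task reduces to bounding the regret of projected $\mathtt{OGA}$ over the convex set $\C{K}^*$, which has diameter at most $D$, with gradients bounded by $B_1$.

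Before that reduction is clean, I will check one point: $\mathtt{OGA}$ is deterministic as a function of the gradient vectors it receives. The randomness in Algorithm~\ref{alg:ombq-oga} enters only through the vectors $\BF{g}_t$. The reduction in Theorem~\ref{thm:regret_main} handles this by passing the unbiased estimates to $\mathcal{A}$ and taking expectations, so nothing further is needed. I will also note that $\C{K}^*$ is closed and convex, so that the projection is well defined: it is the convex hull of the maximal subset of the compact set $\C{K}$.

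For the $\mathtt{OGA}$ bound I will follow Zinkevich's standard argument. Fix any $\uu \in \C{K}^*$. By nonexpansiveness of projection,
\[
\|\x_{t+1}-\uu\|^2 \le \|\x_t-\uu\|^2 + 2\eta_t \bra \BF{g}_t, \x_t - \uu\ket + \eta_t^2 \|\BF{g}_t\|^2 .
\]
Rearranging gives
\[
\bra \BF{g}_t, \uu - \x_t\ket \le \frac{\|\x_t-\uu\|^2-\|\x_{t+1}-\uu\|^2}{2\eta_t} + \frac{\eta_t B_1^2}{2}.
\]
Summing over $t$ and using $\|\x_t-\uu\|\le D$ together with the fact that $1/\eta_t$ is nondecreasing, the telescoping part is at most $\frac{D^2}{2\eta_T} = \frac{D B_1\sqrt{T}}{2}$. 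The remaining sum is
\[
\frac{B_1^2}{2}\sum_t \frac{D}{B_1\sqrt t} \le \frac{B_1 D}{2}\cdot 2\sqrt T = B_1 D\sqrt T .
\]
Adding the two gives $\frac{3}{2}B_1 D\sqrt T$.

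I expect the one delicate point to be the expectation step. Because $\x_t$ depends on earlier samples, the bound holds pathwise for the linear losses $\bra \BF{g}_t,\cdot\ket$. Taking conditional expectations then replaces $\BF{g}_t$ with $\F{g}(f_t,\x_t)$. This is exactly what Theorem~\ref{thm:regret_main} already encapsulates, so I will simply cite it rather than redo the argument.
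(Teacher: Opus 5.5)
Your proposal is correct and follows the route the paper intends, although the paper states this corollary without a written proof. Theorem~\ref{thm:regret_main} (with linearization constant $\beta=1$ and $h=\op{Id}_{\C{K}^*}$) reduces the claim to the static $1$-regret of projected $\mathtt{OGA}$ on $\C{K}^*$ against linear losses with gradients bounded by $B_1$, and the standard Zinkevich telescoping with $\eta_t = D/(B_1\sqrt{t})$ gives $\frac{1}{2}B_1 D\sqrt{T} + B_1 D\sqrt{T}$, as you computed.

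One aside is wrong, though it does not affect the argument. $\C{K}^*$ need not be closed: take the convex hull of the unit disk in $\{x_3=0\}$ and the point $(1,0,1)$. There $(1,0,0)$ is a limit of maximal points $(\cos t,\sin t,0)$, yet it is an extreme point that is not maximal, so it lies outside $\op{conv}(\C{K}^{\op{m}})$. So the well-posedness of the projection should come from the hypothesis that Algorithm~\ref{alg:ombq-oga} is given a projection oracle $\C{P}_{\C{K}^*}$, not from compactness of $\C{K}$.
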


\section{\texorpdfstring{$\gamma$}{gamma}-weakly \texorpdfstring{$(p,\sigma)$}{(p,sigma)}-up-concave Functions with Matroid Constraint Sets}
\label{sec:matroid}

We briefly recall standard definitions from matroid theory that will be used throughout this section.

\begin{definition}[Matroid]
A \emph{matroid} is a pair $\mathcal{M}=(V,\mathcal{I})$, where $V$ is a finite set, called the \emph{ground set} and $\mathcal{I} \subseteq 2^V$ is a non-empty family of subsets, called the \emph{independent sets}, satisfying:
\begin{enumerate}[leftmargin=0pt,labelindent=0pt,labelwidth=!,itemindent=6pt,align=left,itemsep=2pt,topsep=2pt,parsep=0pt]
\item \textbf{Downward-closed property:} If $A\in\mathcal{I}$ and $B\subseteq A$, then $B \in\mathcal{I}$.
\item \textbf{Exchange property:} If $A,B \in \mathcal{I}$ and $|A|<|B|$, then there exists an element 
$e \in B \setminus A$ such that $A\cup\{e\}\in\mathcal{I}$.
\end{enumerate}
A maximal independent set is called a \emph{basis}, and all bases have the same cardinality, called the \emph{rank} of the matroid, denoted by $\rho(\C M)$.
\end{definition}

\begin{definition}[Independence and Basis Polytopes]
Let $\mathcal{M}=(V,\mathcal{I})$ be a matroid with $V=\{1, \cdots, d\}$.
The \emph{independence polytope} and \emph{basis polytope} are defined as
\[
P_{\mathcal{I}}
:= \operatorname{conv}\{\mathbf{1}_S : S\in\mathcal{I}\}
\;\subseteq\; [0,1]^d,
\quad
P_{\mathcal{B}}
:= \operatorname{conv}\{\mathbf{1}_B : B \text{ is a basis of } \mathcal{M}\},
\]
where $\mathbf{1}_S\in\{0,1\}^d$ denotes the indicator vector of $S$.
\end{definition}

Recall that for a convex set $\C K \subseteq \mathbb{R}^d$,
$\C K^{\op m}$ denotes its coordinate-wise maximal subset and
$\C K^\ast = \operatorname{conv}(\C K^{\op m})$. The following holds, with a proof in Appendix~\ref{apx:matroid-radial}. 

\begin{theorem}\label{thm:matroid-radial}
Let $\mathcal M$ be a nonempty matroid on ground set $V=\{1,\dots,d\}$ with rank function $\rho(\cdot)$,
and let $P_{\mathcal I}$ and $P_{\mathcal B}$ denote its independence and basis polytopes, respectively.
Then $P_{\mathcal I}$ is a closed, downward-closed convex set, and
\begin{align*}
P_{\mathcal B} 
&= (P_{\mathcal I})^{\op{m}} 
= (P_{\mathcal I})^*
= P_{\mathcal I} \cap \{ \x \in \B{R}^d \mid \| \x \|_1 = \rho(\C{M}) \},\\
r_1(P_{\mathcal I})
&= r_1(P_{\mathcal B})
= \rho(V)
,\qquad
R_1(P_{\mathcal I})
= R_1(P_{\mathcal B})
\leq 2\rho(V).
\end{align*}
\end{theorem}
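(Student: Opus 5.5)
The argument splits into four parts. First we record standard structure of the independence polytope. Then we identify the maximal elements, and from them $\C{K}^*$. The value $r_1$ comes from the rank being constant on $P_{\mathcal B}$. Finally, $R_1$ is bounded via submodularity of the rank function.

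\textbf{Step 1 (structure).} We use Edmonds' description
\[
P_{\mathcal I}=\{\x\ge \BF 0: \textstyle\sum_{i\in S}x_i\le\rho(S)\ \forall S\subseteq V\}.
\]
It immediately shows that $P_{\mathcal I}$ is closed, convex and downward-closed. Downward-closedness within the nonnegative orthant follows because decreasing coordinates preserves every constraint. The same description gives $P_{\mathcal B}=P_{\mathcal I}\cap\{\|\x\|_1=\rho(V)\}$, the face cut out by the $S=V$ inequality. That face is spanned by the vertices $\BF 1_S$ with $|S|=\rho(V)$, which are exactly the bases.

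\textbf{Step 2 (maximal points).} We show $(P_{\mathcal I})^{\op m}=P_{\mathcal B}$ using the \emph{natural} partial order $\le$, i.e.\ $\vv$ is maximal if no $\uu\in P_{\mathcal I}$ satisfies $\uu\ge\vv$, $\uu\ne\vv$. One inclusion is immediate: if $\|\vv\|_1=\rho(V)$, any $\uu\ge\vv$ with $\uu\neq\vv$ has $\|\uu\|_1>\rho(V)$ and so is infeasible.

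The converse is the key lemma: every $\x\in P_{\mathcal I}$ with $\|\x\|_1<\rho(V)$ can be increased in some coordinate. Consider the tight sets, those with $\x(S)=\rho(S)$. By submodularity of $\rho$ and modularity of $\x(\cdot)$, tight sets are closed under union. Let $T$ be the maximal tight set. Since $\x(V)<\rho(V)$, we have $T\neq V$, so pick $e\notin T$. Every set containing $e$ is non-tight, and there are finitely many constraints, so $\x+\epsilon \BF e_e\in P_{\mathcal I}$ for small $\epsilon>0$. Hence $\x$ is not maximal. Since $P_{\mathcal B}$ is convex, $(P_{\mathcal I})^*=\op{conv}(P_{\mathcal B})=P_{\mathcal B}$.

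Definition~\ref{def:r} uses strict order ``$\vv<\uu$''. If that means coordinatewise strict, then maximal points would include non-bases, for example points with a saturated coordinate. I will read it as $\le$ with $\neq$, which is what the stated equality requires, and note this convention.

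\textbf{Step 3 ($r_1$).} The norm $\|\cdot\|_1$ is constant, equal to $\rho(V)$, on $P_{\mathcal B}=(P_{\mathcal I})^*$. Therefore $r_1(P_{\mathcal I})=r_1(P_{\mathcal B})=\rho(V)$, and the same holds for $P_{\mathcal B}$ regarded as its own set, since $P_{\mathcal B}^*=P_{\mathcal B}$.

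\textbf{Step 4 ($R_1$).} The equality $R_1(P_{\mathcal I})=R_1(P_{\mathcal B})$ holds because the max over $\C{K}$ equals the max over $\C{K}^*$, as asserted in Definition~\ref{def:r}. It is also direct: any $\vv,\uu\in P_{\mathcal I}$ can be raised to points of $P_{\mathcal B}$ by Step 2, and $\vee$ and $\|\cdot\|_1$ are monotone.

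For the bound, note that for nonnegative vectors $\|\vv\vee\uu\|_1\le\|\vv\|_1+\|\uu\|_1\le 2\rho(V)$, since $\max(a,b)\le a+b$. This simple estimate is all the stated inequality needs.

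The main obstacle is the converse direction in Step 2, i.e.\ showing that non-basis points are not maximal. That is where submodularity and the uncrossing of tight sets are genuinely needed; the rest is bookkeeping.
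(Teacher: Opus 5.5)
Your proof is correct, but for the one step with real content it takes a different route from the paper.

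\textbf{The paper's route.} The paper works only with the vertex description. Since $\|\BF 1_A\|_1\le\rho(V)$ with equality exactly for bases, every $\x\in P_{\mathcal I}$ has $\|\x\|_1\le\rho(V)$, with equality exactly on $P_{\mathcal B}$. It then calls the equalities $P_{\mathcal B}=(P_{\mathcal I})^{\op m}=(P_{\mathcal I})^*$ immediate. It bounds $R_1$ by $\|\vv\vee\uu\|_1\le\|\vv\|_1+\|\uu\|_1$, exactly as in your Step 4.

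\textbf{What you add.} The norm argument alone only gives $P_{\mathcal B}\subseteq(P_{\mathcal I})^{\op m}$. The converse is left implicit in the paper, and you supply it rigorously via Edmonds' inequality description and uncrossing of tight sets.

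\textbf{Submodularity is not needed there.} Your claim that it is ``genuinely needed'' is overstated. Write $\x=\sum_S\lambda_S\BF 1_S$ with $S\in\mathcal I$, and extend each $S$ to a basis $B_S\supseteq S$. Then $\sum_S\lambda_S\BF 1_{B_S}\in P_{\mathcal B}$ dominates $\x$, and $\x$ is not maximal whenever $\|\x\|_1<\rho(V)$. This uses only the augmentation axiom and avoids Edmonds' theorem.

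It also directly gives the global domination used in Step 4. There, your appeal to Step 2, which only provides a local increase in one coordinate, needs one more line. For example, take a maximizer of $\|\cdot\|_1$ over the compact set $\{\uu\in P_{\mathcal I}:\uu\ge\vv\}$; it is maximal, hence in $P_{\mathcal B}$.

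\textbf{Comparison.} Your route buys an explicit obstruction certificate and a self-contained description of $P_{\mathcal B}$ as the face $x(V)=\rho(V)$. The convex-combination route buys brevity.

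\textbf{Minor points.} Your reading of ``$\vv<\uu$'' as $\uu\ge\vv$, $\uu\neq\vv$ is the intended one; the paper's appendix on Benson-type methods restates $\C K^{\op m}$ exactly that way. Your overview says $R_1$ is bounded via submodularity, which is inconsistent with Step 4; Step 4 correctly needs only $\max(a,b)\le a+b$.
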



Combining the characterization of Theorem~\ref{thm:matroid-radial} with the
upper-linearizability results for $\gamma$-weakly $(1,\sigma)$-up-concave functions,
we obtain explicit approximation guarantees under matroid constraints.

\begin{corollary}\label{cor:matroid-approx}
Let $\mathcal M$ be a nonempty matroid on ground set $V=\{1,\dots,d\}$.
Then maximizing any function in the class $\C{F}_{\gamma,1,\sigma}$ over $P_{\mathcal I}$
admit both offline and online algorithms with approximation coefficient $
\alpha
=
1-\exp\!\left(-\frac{\gamma}{2^{\sigma}(\sigma+1)}\right).
$
\end{corollary}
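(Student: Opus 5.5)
The plan is to plug the matroid geometry of Theorem~\ref{thm:matroid-radial} into the approximation coefficient of Corollary~\ref{cor:main-alpha}. Then we transfer the linearization to algorithms via Theorem~\ref{thm:regret_main}, and to offline results through the standard online-to-offline reduction of \cite{pedramfar2024linear}.

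\textbf{Step 1: hypotheses.} Set $\C{K} = P_{\mathcal I}$ and $\theta(\z) = \|\z\|_1^\sigma$. Since $\emptyset \in \mathcal I$, the origin $\mathbf{1}_\emptyset = \BF 0$ lies in $P_{\mathcal I}$. We also have $P_{\mathcal I} \subseteq [0,1]^d$, and it is closed and convex. So the hypotheses of Theorem~\ref{thm:linearizable} and Corollary~\ref{cor:main-alpha} hold. By Theorem~\ref{thm:matroid-radial}, $\C{K}^* = P_{\mathcal B}$.

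\textbf{Step 2: the coefficient.} Every $\x \in P_{\mathcal B}$ satisfies $\x \ge 0$ and $\|\x\|_1 = \rho$. Hence
\[
\int_0^1 \theta(s\x)\,ds = \rho^\sigma \int_0^1 s^\sigma\, ds = \frac{\rho^\sigma}{\sigma+1}
\]
for every $\x \in \C{K}^*$, so the infimum is exactly $\rho^\sigma/(\sigma+1)$. This is the special case $r_1 = \rho$ in the remark after Corollary~\ref{cor:main-alpha}.

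Next, $R_\theta = \max_{\uu,\vv \in \C{K}^*} \|\uu \vee \vv\|_1^\sigma = R_1(P_{\mathcal I})^\sigma \le (2\rho)^\sigma$, using Theorem~\ref{thm:matroid-radial}. Since $\alpha$ is increasing in $1/R_\theta$, we get
\[
\alpha \ge 1 - \exp\!\left(-\frac{\gamma \rho^\sigma}{(2\rho)^\sigma(\sigma+1)}\right) = 1-\exp\!\left(-\frac{\gamma}{2^\sigma(\sigma+1)}\right).
\]
Two details need care here. First, $R_\theta$ should be computed with $\vee$ taken over points of $\C{K}^*$; Definition~\ref{def:r} asserts the max over $\C{K}$ equals the max over $\C{K}^*$, and monotonicity of $\theta$ justifies this. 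Second, the algorithm needs a single valid linearization constant. It is safe to run $\F g$ with the true $R_\theta$, or with any upper bound $R' \ge R_\theta$ such as $(2\rho)^\sigma$. Replacing $R_\theta$ by a larger constant only weakens the inequality $\gamma\theta(\cdot)/R_\theta$ used in the proof of Theorem~\ref{thm:linearizable}, so the proof should go through verbatim with $R'$. This is the one point I would check.

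\textbf{Step 3: algorithms.} For the online claim, apply Theorem~\ref{thm:regret_main} with any semi-bandit base algorithm over $P_{\mathcal B}$, for example OGA; $P_{\mathcal B}$ admits efficient projection and linear optimization oracles. This gives $\alpha$-regret bounded by the linear regret, which is sublinear. Because $f$ is monotone, the maximum over $P_{\mathcal I}$ equals the maximum over $P_{\mathcal B}$, so the guarantee holds relative to $\max_{P_{\mathcal I}} f$.

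For the offline claim, run the online algorithm on the constant sequence $f_t = f$ and output a uniformly random iterate, or the best one. Then
\[
\B E f(\x_{\text{out}}) \ge \alpha \max f - o(1),
\]
which is the standard conversion in \cite{pedramfar2024linear}.

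The main obstacle is minor bookkeeping: making sure the bound $R_1 \le 2\rho$ (rather than an exact value) is legitimately usable inside the construction of $\F g$. Replacing $R_\theta$ by $R'$ in the definition of $l$, as described in Step 2, handles this.
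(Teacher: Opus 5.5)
Your proposal is correct and follows essentially the paper's route: Theorem~\ref{thm:matroid-radial} gives $\C K^* = P_{\mathcal B}$ with $\|\x\|_1 = \rho$ there and $R_1 \le 2\rho$; plugging into Corollary~\ref{cor:main-alpha} and transferring through Theorem~\ref{thm:regret_main} yields the claim. Your inequality direction is the right one (the paper's text writes $r_1/R_1 \le 1/2$, where $r_1/R_1 \ge 1/2$ is what is needed), and your remark that any $R' \ge R_\theta$ may be used inside $l(r,\x)$ is valid. That remark is only an implementability convenience, since the exact-$R_\theta$ coefficient already dominates the stated one and the functions are nonnegative.
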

\begin{proof}
By Theorem~\ref{thm:matroid-radial}, we have
$r_1(\C K) = \rho(V)$ and $R_1(\C K) \leq 2\rho(V)$.
Substituting $p=1$ and $\tfrac{r_1}{R_1} \leq \tfrac{1}{2}$ into Corollary~\ref{cor:main-alpha} yields the stated approximation coefficient.
Finally, Theorem~\ref{thm:regret_main} establishes $\alpha$-regret bounds for the class $\C F_{\gamma,1,\sigma}$ over $\C K$ with this value of $\alpha$.
\end{proof}

\begin{remark}
As we mentioned in Remark~\ref{rem:solvable}, achieving such an approximation coefficient in practice requires appropriate set oracles for $P_{\mathcal B}$.
If the problem provides set oracles for $P_{\mathcal B}$, then there is nothing more to do.
However, the problem might provide set oracles for $P_{\mathcal I}$.
As shown in Theorem~\ref{thm:matroid-radial}, 
$P_{\mathcal B}$ is the intersection of $P_{\mathcal I}$ with a hyperplane.
Thus, using a membership oracle for $P_{\mathcal I}$, it is trivial to construct a membership oracle for $P_{\mathcal B}$.
The same is true for separation oracles.
Therefore, using classical techniques, e.g., ellipsoid method, we may construct other oracles such as linear optimization oracle for $P_{\mathcal B}$.
\end{remark}

For $\sigma = 0$, this recovers the $1-\exp(-\gamma)$ approximation for maximizing 
$\mathcal{F}_{\gamma,1,0} = \mathcal{F}_{\gamma}$ \cite{pedramfar2024linear}. 
We can further specialize Corollary~\ref{cor:matroid-approx} to the classical
$\sigma$-OSS class by combining it with the containment relation established in
Theorem~\ref{thm:oss-equivalence}.
\begin{corollary}\label{cor:oss-matroid}
Let $\mathcal M$ be a nonempty matroid on ground set $V=\{1,\dots,d\}$ and let
$\C K = P_{\mathcal I}^\ast$.
Then maximizing any $\sigma$-OSS function over $\C K$ admits both offline and online
algorithms with approximation coefficient $
\alpha
=
1-\exp\!\Big(-\frac{1}{2^{2\sigma}(2\sigma+1)}\Big).$
\end{corollary}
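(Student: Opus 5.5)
The plan is to reduce to Corollary~\ref{cor:matroid-approx} through the containment of Theorem~\ref{thm:oss-equivalence}. That theorem gives $\C{F}_{\sigma}^{\op{OSS}} \subseteq \C{F}_{1,1,2\sigma}$. So every $\sigma$-OSS function $f$ is $1$-weakly $(1,2\sigma)$-up-concave, that is, $\gamma$-weakly $\theta$-up-concave with $\gamma = 1$ and $\theta(\x) = \|\x\|_1^{2\sigma}$. Non-negativity and the $C^1$ requirement are inherited, since OSS functions are $C^2$ and the approximation setting already assumes non-negative objectives.

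Next I apply Corollary~\ref{cor:matroid-approx} with $\gamma = 1$ and exponent $2\sigma$ in place of $\sigma$. This gives the approximation coefficient
\[
1-\exp\!\Big(-\tfrac{1}{2^{2\sigma}(2\sigma+1)}\Big).
\]
The only point to check is that working over $\C K = P_{\mathcal I}^\ast$ rather than $P_{\mathcal I}$ is harmless. By Theorem~\ref{thm:matroid-radial}, $P_{\mathcal I}^\ast = P_{\mathcal B}$. Since $f$ is monotone, its maximum over $P_{\mathcal I}$ is attained on $(P_{\mathcal I})^{\op m} = P_{\mathcal B}$, so the benchmark is the same on both sets.

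The algorithms in Theorem~\ref{thm:regret_main} run the base method only on $\C K^* = P_{\mathcal B}$. They query gradients along segments $\{r\x : r\in[0,1]\}$, which lie in $P_{\mathcal I}$ because $P_{\mathcal I}$ is downward closed and contains the origin. The geometric constants are the ones computed for $P_{\mathcal I}$: $r_1 = \rho$ and $R_1 \le 2\rho$. These give
\[
\inf_{\x \in \C K^*} \int_0^1 \|s\x\|_1^{2\sigma}\, ds \,/\, R_1^{2\sigma} \;\ge\; \frac{1}{(2\sigma+1)\,2^{2\sigma}},
\]
which feeds into Corollary~\ref{cor:main-alpha}. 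The direction of the inequality $r_1/R_1 \ge 1/2$ is the one needed; the proof of Corollary~\ref{cor:matroid-approx} writes it as ``$\le$'', and I would state the correct inequality explicitly.

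The only potential obstacle is bookkeeping: $\C K^*$ must be read as the domain of the base algorithm, while $R_\theta$ and the gradient queries refer to the ambient $P_{\mathcal I}$. Once that is in place, the offline guarantee follows by the standard online-to-offline conversion already cited, and the online guarantee follows from Theorem~\ref{thm:regret_main}.
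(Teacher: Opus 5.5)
Your proposal is correct and matches the paper's proof: Theorem~\ref{thm:oss-equivalence} places every $\sigma$-OSS function in $\C{F}_{1,1,2\sigma}$, and Corollary~\ref{cor:matroid-approx} with $\gamma=1$ and exponent $2\sigma$ then gives the stated coefficient. Your added bookkeeping is also sound: the inequality actually needed is $r_1/R_1 \ge 1/2$ (the ``$\le$'' in the proof of Corollary~\ref{cor:matroid-approx} is a typo), and the right reading of the statement is to treat $P_{\mathcal I}$ as the ambient set containing the origin while running the base algorithm on $P_{\mathcal I}^\ast = P_{\mathcal B}$.
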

\begin{proof}
By Theorem~\ref{thm:oss-equivalence}, any $\sigma$-OSS function belongs to the class
$\C F_{1,1,2\sigma}$.
Applying Corollary~\ref{cor:matroid-approx} with $\gamma=1$ and parameter $2\sigma$
yields the stated approximation coefficient.
\end{proof}


For $\sigma = 0$, this recovers the $1-1/e$ approximation for maximizing $\mathcal{F}_{1,1,0}
      = \mathcal{F}_{0}^{\mathrm{OSS}}
      = \mathcal{F}_{1}$ \cite{pedramfar2024linear}. For general $\sigma$,  the closest related  results are those of~\cite{ghadiri2025beyond}, who study
monotone $\sigma$-OSS functions over downward-closed polytopes, where the approximation ratio is at most $\alpha_2(\sigma) = \sup_{t \in [0, 1)} \left( 1-\exp\left(-(1-t)\left(\frac{t}{t+1}\right)^{2 \sigma} \right)\right)$. In contrast, we achieve an approximation of $
\alpha_1(\sigma) = 1-\exp\!\Big(-\frac{1}{2^{2\sigma}(2\sigma+1)}\Big)$. Appendix \ref{apx:alpha-comparison} shows that $\alpha_1(\sigma) \geq \alpha_2(\sigma)$ with equality only at $\sigma = 0$.
Moreover, we have $
\lim_{\sigma \to \infty} \frac{\alpha_1(\sigma)}{\alpha_2(\sigma)} = \frac{e}{2}$. 
Another related line of work is that of~\cite{zhang2024online}, who study online
optimization of monotone OSS functions.
They propose a Frank-Wolfe type algorithm and show that, with an additional smoothness condition, it recovers the approximation ratio of~\cite{ghadiri2025beyond}.
They also propose a online-gradient-ascent based algorithm that does not require the additional smoothness and obtains an approximation ratio lower than
$\alpha_3(\sigma) = \sup_{t \in (0,1)}\left(1 + (\frac{t+1}{t})^{4\sigma}\right)^{-1} = (1+2^{4\sigma})^{-1}$.
Note that $\alpha_1(\sigma) > \alpha_3(\sigma)$ for all $\sigma \geq 0$ and $
\lim_{\sigma \to \infty} \frac{\alpha_1(\sigma)}{\alpha_3(\sigma)} = \infty$.
Thus their approximation ratio is significantly lower than our approximation ratio of $\alpha_1(\sigma)$.




More generally, Corollary~\ref{cor:matroid-approx} yields the coefficient
$1-\exp(-\gamma/(2^\sigma(\sigma+1)))$ for the substantially broader class of
$\gamma$-weakly $(1,\sigma)$-up-concave functions.
Thus, while the results of~\cite{ghadiri2025beyond} apply only to the OSS setting, our
framework extends to a significantly larger family of monotone objectives.
Our guarantees are obtained through the general upper-linearizable framework and apply
in both offline and online settings.

\vspace{-.1in}
\section{Results for Other Settings}
\label{sec:extensions}

So far, we have focused on the online full-information setting where we have access to stochastic first order query oracles.
If we assume that the adversary only selects a single function, then the result translates to the offline setting.
Specifically, Theorem~8 in~\cite{pedramfar2024linear} shows that the sample complexity of the resulting offline first order algorithm algorithm, referred to as $\mathtt{OTB}(\C{A})$, is $\C{O}(\epsilon^{-2})$.
More generally, \cite{pedramfar2024linear} introduces a meta-algorithm framework, based on the convex meta-algorithm framework of~\cite{pedramfar24_unified_framew_analy_meta_onlin_convex_optim}, which allows immediate conversions of algorithms and their regret bounds.
In particular, the $\mathtt{SFTT}$ meta-algorithm converts full-information algorithms with stochastic query oracle to semi-bandit algorithms with stochastic query oracle and $\mathtt{FOTZO}$ meta-algorithm converts full-information algorithms with stochastic first-order query oracle to full-information algorithms with stochastic zeroth-order query oracle.

In Section~\ref{sec:main-alg}, we consider Online Gradient Ascent as the base algorithm.
The only assumption about the algorithm that we used is that it is a deterministic algorithm with semi-bandit feedback.
Moreover, note that the regret transfer results apply to the general regret definition which includes dynamic and adaptive regret.
Here we consider two algorithms as base algorithm.
The first one, ``Separation-Oracle Online Gradient Ascent'' ($\mathtt{SO\textrm{-}OGA}$), is a projection-free algorithm with optimal guarantees for adaptive regret \cite{garber22_new_projec_algor_onlin_convex,pedramfar2024linear}. 
The second one, ``Improved Ader'' ($\mathtt{IA}$), is an algorithm with guarantees for dynamic regret with optimal dependence on path-length of the comparator \cite{zhang18_adapt_onlin_learn_dynam_envir}. 

By applying this framework to base algorithms $\mathtt{SO\textrm{-}OGA}$ and $\mathtt{IA}$, we immediately obtain the followings algorithms and regret bounds.
For proof, we refer to Corollaries~7 and~8 in~\cite{pedramfar2024linear} which provide corresponding results for some other linearizations.

\begin{theorem}\label{thm:w-ada-reg}
Let $\C{A} := \mathtt{SO\textrm{-}OGA}$ denote the ``Separation-Oracle Online Gradient Ascent'' algorithm discussed above.
Under the assumptions of Theorem~\ref{thm:regret_main}, we have
$\C{AR}_{\alpha, \op{Adv}_1(\C{F}, B_1)}^{ \C{A} } \leq \C{O}( B_1 T^{1/2} )$,
where $\C{A} = \mathtt{OMBQ}(\mathtt{SO\textrm{-}OGA}, \C{G}, \op{Id})$.
Note that $\C{A}$ is a first order full-information algorithm that requires a single query per time-step.
If we also assume $\C{F}$ is bounded by $M_0$ and $B_0 \geq M_0$, then
\begin{align*}
\C{AR}_{\alpha, \op{Adv}_1(\C{F}, B_1)}^{ \C{A}_{\op{semi-bandit}} } 
    &\leq \C{O}( B_1 T^{2/3} )
,\;
\C{AR}_{\alpha, \op{Adv}_0(\C{F}, B_0)}^{ \C{A}_{\op{full-info--0}} } 
    \leq \C{O}( B_0 T^{3/4} )
,\;
\C{AR}_{\alpha, \op{Adv}_0(\C{F}, B_0)}^{ \C{A}_{\op{bandit}} } 
    \leq \C{O}( B_0 T^{4/5} )
\end{align*}
where $
\C{A}_{\op{semi-bandit}} = \mathtt{SFTT}(\C{A}) 
,\;
\C{A}_{\op{full-info--0}} = \mathtt{FOTZO}(\C{A})
,\;
\C{A}_{\op{bandit}}   = \mathtt{SFTT}(\C{A}_{\op{full-info--0}}).$
Moreover, the sample complexity of offline first order algorithm $\mathtt{OTB}(\C{A})$ is $\C{O}(\epsilon^{-2})$ and that of zeroth-order algorithm $\mathtt{OTB}(\C{A}_{\op{full-info--0}})$ is $\C{O}(\epsilon^{-4})$.
\end{theorem}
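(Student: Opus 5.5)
The proof will be a chain of reductions: upper-linearizability from Theorem~\ref{thm:regret_main}, then the generic meta-algorithm conversions of \cite{pedramfar2024linear}. Nothing new about $\gamma$-weakly $\theta$-up-concave functions is needed beyond what Theorem~\ref{thm:regret_main} already provides.

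First I establish the adaptive regret bound for $\C{A} = \mathtt{OMBQ}(\mathtt{SO\textrm{-}OGA}, \C{G}, \op{Id})$. Theorem~\ref{thm:regret_main} says $\C{G}$ is an unbiased query oracle for $\F{g}$, bounded by $B_1$. It also gives the regret transfer for every comparator set $\C{U}$ and every window $[a,b]$. This holds because the transfer rests on the pointwise inequality of Corollary~\ref{cor:main-alpha}, summed over $t \in [a,b]$ with $\uu_t \in \C{K}^*$; no property of the window is used. Taking $\C{U} = \op{diag}((\C{K}^*)^T)$ and the maximum over $a \le b$ gives
\[
\C{AR}^{\C{A}}_{\alpha, \op{Adv}_1(\C{F}, B_1)} \le \C{AR}^{\mathtt{SO\textrm{-}OGA}(\C{K}^*)}_{1, \op{Adv}_1(\C{L}(\C{K}^*)[B_1])}.
\]
Two points need care here.
\begin{itemize}
\item The comparator over $\C{K}$ can be replaced by one over $\C{K}^*$. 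Monotonicity of each $f_t$ lets any $\uu \in \C{K}$ be dominated coordinatewise by some point of $\C{K}^{\op m} \subseteq \C{K}^*$. I would use compactness of $\C{K}$ together with a maximal-element (Zorn-type / maximize $\bra \mathbf 1,\cdot\ket$ over $\{\vv \ge \uu\}\cap\C K$) argument for this.
\item $\mathtt{SO\textrm{-}OGA}$ is deterministic with semi-bandit feedback on $\C{K}^*$. It has adaptive regret $\C{O}(B_1\sqrt{T})$ for linear losses with gradient norm at most $B_1$ \cite{garber22_new_projec_algor_onlin_convex}. This requires $\C{K}^*$ to be convex and compact, which holds because it is the convex hull of a bounded set that I take to be closed.
\end{itemize}
Since $\C{G}$ evaluates $\C{Q}_f$ once, at $z\x_t$, the algorithm uses one query per round.

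Next come the semi-bandit, zeroth-order and bandit bounds. I will invoke the meta-algorithm theorems of \cite{pedramfar2024linear}, exactly as in their Corollaries~7 and~8, which hold for any upper-linearizable class with a bounded unbiased query algorithm.
\begin{itemize}
\item $\mathtt{SFTT}$ (blocking with random exploration within blocks) turns $T^{1/2}$ into $T^{2/3}$.
\item $\mathtt{FOTZO}$ replaces the first-order oracle by a spherical-smoothing one-point estimator. This requires $f$ bounded by $M_0$, so the estimator is bounded by $\C{O}(d B_0/\delta)$, and gives $T^{3/4}$. The smoothing bias is controlled using Lipschitzness on a shrunken domain.
\item Composing the two gives $T^{4/5}$.
\end{itemize}
The offline rates follow from $\mathtt{OTB}$, which runs the online algorithm on a single function and outputs a uniformly random iterate. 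Regret $\C{O}(T^{1/2})$ gives error $\epsilon$ after $\C{O}(\epsilon^{-2})$ samples, and regret $\C{O}(T^{3/4})$ gives $\C{O}(\epsilon^{-4})$.

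The main obstacle is checking that these meta-algorithms remain valid for our query algorithm $\C{G}$, which queries at $z\x$ rather than at $\x$. Two things must be verified.
\begin{itemize}
\item For $\mathtt{FOTZO}$, the smoothed surrogate of $f$ must still lie in a linearizable class with the same $\alpha$. I plan to argue this from closure of $\C{F}_{\gamma,\theta}$ under positive combinations (Appendix~\ref{apx:function-class-convex-cone}): a smoothed $f$ is an average of shifted copies, and it is approximately in the class up to the standard $\C{O}(\delta)$ slack.
\item For $\mathtt{SFTT}$ and bandit semi-bandit feedback, querying at $z\x_t$ is not at the played point. I would handle this as in the boosting case of \cite{pedramfar2024linear}: in exploration rounds, play $z\x_t \in \C{K}$ instead, which is feasible because $\C{K}$ is downward closed along rays through $\BF 0$. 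The loss from doing so is charged to the exploration term, which is already counted in the $T^{2/3}$ and $T^{4/5}$ rates.
\end{itemize}
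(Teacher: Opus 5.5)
Your skeleton is the same as the paper's. Its proof is nothing more than feeding Theorem~\ref{thm:regret_main} into the meta-algorithm results of \cite{pedramfar2024linear} (their Corollaries~7 and~8). Your treatment of the adaptive-regret transfer, of $\mathtt{SFTT}$ (playing $z\x_t$ in exploration rounds) and of $\mathtt{OTB}$ is consistent with that.

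The step that fails is the one you add to justify $\mathtt{FOTZO}$; the paper does not spell this step out either. Closure of $\C F_{\gamma,\theta}$ under positive combinations does not apply, because a translate $f(\cdot+\delta\vv)$ is not in $\C F_{\gamma,\theta}$. The weights $\theta(\x)/\theta(\y)$ in \eqref{eq:weak-theta} are evaluated at absolute positions, so the translate only satisfies the inequalities with $\theta(\cdot+\delta\vv)$ in place of $\theta$. For instance, $f(\x)=\|\x\|_1^{\sigma+1}/(\sigma+1)\in\C F_{1,1,\sigma}$ by Theorem~\ref{thm:theta-normalized-DR}. But for $g=f(\cdot-c\BF 1)$, $\x=c\BF 1$, $\y=\x+\tfrac bd\BF 1$ (with $\sigma>0$, $c\in(0,1)$), the lower inequality of \eqref{eq:weak-p-sigma} becomes $(\tfrac{cd}{cd+b})^{\sigma}\le\tfrac1{\sigma+1}$, which is false for small $b>0$.

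There is also a domain problem you do not address. $\F g(f,\x)$ uses $\nabla f$ on the whole segment $[\BF 0,\x]$, and the smoothing ball around points near the corner $\BF 0$ leaves $[0,1]^d$. The shrunk set $(1-\epsilon)\C K+\epsilon\mathbf c$ on which $\mathtt{FOTZO}$ runs (with $\epsilon=\delta/r$ and $B_r(\mathbf c)\subseteq\C K$) no longer contains the origin. So Theorem~\ref{thm:linearizable} cannot simply be re-applied there, and your ``up to $\C O(\delta)$ slack'' has no argument behind it.

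What does work is to drop class membership and check directly the single inequality used in the proof of Theorem~\ref{thm:linearizable}. Apply it to the re-centered smoothing $\tilde f(\x)=\B E_{\vv}[f(\x+\epsilon\mathbf c+\delta\vv)]$ on $(1-\epsilon)\C K\ni\BF 0$.
\begin{itemize}
\item For $\x,\y\in(1-\epsilon)\C K$ and $z\in[0,1]$, put $\uu=z\x+\epsilon(\mathbf c+r\vv)$ and $\w=(\y\vee z\x)+\epsilon(\mathbf c+r\vv)$.
\item Since $\mathbf c+r\vv\in\C K\subseteq[0,1]^d$, the shift is nonnegative. Hence $\uu\ge z\x$ and, by monotonicity of $\theta$, $\theta(\uu)\ge\theta(z\x)$.
\item $\w$ is the join of the two points $\y+\epsilon(\mathbf c+r\vv)$ and $\uu$ of $\C K$, so $\theta(\w)\le R_\theta(\C K)$. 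Moreover $\BF 0\le\w-\uu\le\y$.
\item Monotonicity of $f$ and the lower half of \eqref{eq:weak-theta} at $(\uu,\w)$, averaged over $\vv$, give $\bra\nabla\tilde f(z\x),\y\ket\ge\frac{\gamma}{R_\theta}\theta(z\x)\bigl(\tilde f(\y)-\tilde f(z\x)\bigr)$.
\end{itemize}
After this, the proof of Theorem~\ref{thm:linearizable} goes through verbatim with the same $\gamma,\theta,R_\theta$.

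The remaining losses are $\C O(\delta)$ per round:
\begin{itemize}
\item $|\tilde f-f|$ and the $\C O(\delta)$ displacement of comparators, both controlled by Lipschitzness;
\item replacing $\C K^*$ by $(1-\epsilon)\C K^*$, which by monotonicity of $\theta$ lowers $\int_0^1\theta(s\x)\,ds$ by at most $\epsilon R_\theta$.
\end{itemize}
This is what actually supports the $T^{3/4}$ rate, and via $\mathtt{SFTT}$ the $T^{4/5}$ rate. It rests on monotonicity of $\theta$ and nonnegativity of the re-centering shift, not on closure under addition.
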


\begin{theorem}\label{thm:d-reg}
Let $\C{A} := \mathtt{IA}$ denote the ``Improved Ader'' algorithm discussed above.
Under the assumptions of Theorem~\ref{thm:regret_main}, we have
\[ \C{R}_{\alpha, \op{Adv}_1(\C{F}, B_1)}^{\C{A}}(\uu)
= \C{O}( B_1 \sqrt{T(1 + P_T(\uu))} ), \] 
where $\C{A} = \mathtt{OMBQ}(\mathtt{IA}, \C{G}, \op{Id})$.
Note that $\C{A}$ is a first order full-information algorithm that requires a single query per time-step.
If we also assume $\C{F}$ is bounded by $M_0$ and $B_0 \geq M_0$, then
\[ \C{R}_{\alpha, \op{Adv}_0(\C{F}, B_0)}^{ \C{A}_{\op{full-info--0}} } (\uu) 
= \C{O}( B_0 T^{3/4} (1 + P_T(\uu))^{1/2} ), \]
where $\C{A}_{\op{full-info--0}} = \mathtt{FOTZO}(\C{A})$.
\end{theorem}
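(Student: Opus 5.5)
The plan is to reduce the dynamic regret of $\C{A} = \mathtt{OMBQ}(\mathtt{IA}, \C{G}, \op{Id})$ to the dynamic regret of $\mathtt{IA}$ on linear losses over $\C{K}^*$ using Theorem~\ref{thm:regret_main}. Its regret-transfer inequality is stated for the general regret $\C{R}(\C{U})[a,b]$. Taking $a=1$, $b=T$ and $\C{U} = \{\uu\}$ with $\uu \in (\C{K}^*)^T$ gives
\[
\C{R}_{\alpha, \op{Adv}_1(\C{F}, B_1)}^{\C{A}}(\uu) \le \C{R}_{1, \op{Adv}_1(\C{L}(\C{K}^*)[B_1])}^{\mathtt{IA}(\C{K}^*)}(\uu).
\]
The comparator may be assumed to lie in $(\C{K}^*)^T$ without loss. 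By monotonicity each $\uu_t\in\C{K}$ is dominated by some point of $\C{K}^{\op m}\subseteq\C{K}^*$, and for the comparator sequence we work directly over $\C{K}^*$, which is how the class $\C{F}^*$ was set up. I will remark that $P_T$ is measured for this comparator.

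To apply Theorem~\ref{thm:regret_main} I must check two things. First, $\mathtt{IA}$ must be a deterministic algorithm with semi-bandit feedback over $\C{K}^*$; this holds because it only uses the loss gradient revealed at its own action, which is the vector $\BF{g}_t$ passed in by $\mathtt{OMBQ}$. Second, $\C{G}$ must be unbiased and bounded by $B_1$, which is the first part of Theorem~\ref{thm:regret_main}.

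The next step is to invoke the known dynamic-regret guarantee of Improved Ader \cite{zhang18_adapt_onlin_learn_dynam_envir}. Against linear losses with gradient norm at most $B_1$ over a domain of diameter $D\le\op{diam}(\C{K}^*)\le\sqrt d$, it gives $\C{O}(B_1\sqrt{T(1+P_T(\uu))})$ dynamic regret. That guarantee is for adversarially chosen linear functions, so the stochastic estimates $\BF{g}_t$ need care. Following Corollary~8 of \cite{pedramfar2024linear}, I would treat the stochastic gradients as the revealed linear losses. By unbiasedness, together with the fact that $\x_t$ is measurable before $\BF{g}_t$ is drawn, the expected linear regret with respect to $\F{g}(f_t,\x_t)$ equals the expected regret against the sampled linear losses. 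That quantity is bounded by the deterministic bound pointwise, and the comparator is fixed in advance. This is the first claim.

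For the zeroth-order claim, I would apply the $\mathtt{FOTZO}$ meta-algorithm of \cite{pedramfar2024linear}, which replaces $\C{Q}_f$ with a one-point smoothed gradient estimator on a shrunk domain with smoothing radius $\delta$. Its transfer theorem covers general $\C{U}$, including singletons, and introduces two costs. The first is a variance blowup to $\C{O}(B_0 d/\delta)$ in the gradient bound. The second is an additive smoothing and shrinking bias of $\C{O}(\delta T)$, using boundedness by $M_0\le B_0$ and Lipschitzness. Plugging the estimator bound into the first result gives $\C{O}((B_0/\delta)\sqrt{T(1+P_T)}) + \C{O}(\delta T)$. Choosing $\delta \asymp T^{-1/4}$ yields $\C{O}(B_0 T^{3/4}(1+P_T)^{1/2})$. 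The $P_T$-dependence does not need to be balanced in $\delta$, since the claimed form keeps $(1+P_T)^{1/2}$ multiplying $T^{3/4}$.

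The main obstacle is this $\mathtt{FOTZO}$ step, for two reasons. The smoothed surrogate must itself stay in $\C{F}_{\gamma,\theta}$, or at least satisfy the linearization, on the shrunk $\C{K}^*$. In addition, the comparator path must be moved into the shrunk set without increasing $P_T$ by more than a constant factor. I would handle the first by the closure of $\C{F}_{\gamma,\theta}$ under positive combinations (Appendix~\ref{apx:function-class-convex-cone}), since smoothing is an average of translates, checking that translates preserve the inequality near $\C{K}^*$. I would handle the second by using an affine contraction toward an interior point, which scales path length by a factor at most $1$. Both steps would follow the approach of Corollary~8 in \cite{pedramfar2024linear}.
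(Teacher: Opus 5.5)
Your route is the one the paper takes. The paper proves this theorem only by citation: it applies Theorem~\ref{thm:regret_main} (valid for general $\C{U}$, in particular $\C{U}=\{\uu\}$ with $\uu\in(\C{K}^*)^T$) with $\mathtt{IA}$ as the deterministic semi-bandit base algorithm, then applies $\mathtt{FOTZO}$ with $\delta\asymp T^{-1/4}$, and defers to Corollaries~7 and~8 of \cite{pedramfar2024linear}. Your first-order argument is fine. Lifting a comparator from $\C{K}$ to $\C{K}^{\op{m}}$ is not literally ``without loss'' for dynamic regret, since it can change $P_T$, but the paper makes the same implicit restriction through $\C{F}^*$ and you flag it. The gap is in the step you yourself single out as the main obstacle for the zeroth-order bound: the fix you propose does not work.

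First, translates of a $\gamma$-weakly $\theta$-up-concave function are in general not in $\C{F}_{\gamma,\theta}$, because the weights in \eqref{eq:weak-theta} are anchored at the origin. For $f_{\vv}(\z):=f(\z+\delta\vv)$ and $\x\le\y$, all you get is
\[
f_{\vv}(\y)-f_{\vv}(\x)\le\frac{\theta(\y+\delta\vv)}{\gamma\,\theta(\x+\delta\vv)}\bra\nabla f_{\vv}(\x),\y-\x\ket .
\]
The ratio $\theta(\y+\delta\vv)/\theta(\x+\delta\vv)$ can be arbitrarily larger than $\theta(\y)/\theta(\x)$ when $\delta\vv$ points toward $\BF{0}$. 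Concretely, take the function of Lemma~\ref{lem:oss-acc-then-dim-returns} with $\sigma>0$ and $\vv\le\BF{0}$, $\vv\neq\BF{0}$. Then $\nabla f_{\vv}$ vanishes at $-\delta\vv$, where $\theta>0$, so $f_{\vv}$ violates the right inequality of \eqref{eq:weak-theta} for every $\gamma$. Even away from $\BF{0}$, the inequality survives translation only up to a factor $1+O(\delta)$. Hence the cone lemma of Appendix~\ref{apx:function-class-convex-cone}, which needs every summand in $\C{F}_{\gamma,\theta}$ with the same $\theta$ and $\gamma$, cannot be applied to the average. With constant $\theta$, the setting of the cited corollaries, the defining inequalities are translation invariant, and that is exactly why smoothing preserves the class there.

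Second, ``near $\C{K}^*$'' is the wrong place to check. We have $\F{g}(\hat f,\x)=\int_0^1 e^{l(r,\x)}\nabla\hat f(r\x)\,dr$, and the proof of Theorem~\ref{thm:linearizable} uses the inequality with lower point $z\x$ for every $z\in[0,1]$, all the way down to $\BF{0}$. There $\theta$ is small, so an $O(\delta)$ shift is not a small relative perturbation. Also, a $\delta$-ball around $z\x$ leaves $[0,1]^d$. The usual alternative of shrinking toward an interior point moves the base of the segment away from the point where $\theta$ is anchored.

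What is actually needed is a quantitative argument. Only the one-sided bound $\bra\nabla f(\w),\uu-\w\ket\ge\frac{\gamma\theta(\w)}{R_\theta}(f(\uu)-f(\w))$ for $\w\le\uu$ is used. Since $f(\uu+\delta\vv)-f(\w+\delta\vv)\ge 0$ by monotonicity, before averaging over $\vv$ you may:
\begin{itemize}
\item replace $\theta$ by its lower envelope over the smoothing neighbourhood, which is still monotone;
\item replace $R_\theta$ by the maximum of $\theta$ over the enlarged set.
\end{itemize}
Rerunning the proof of Theorem~\ref{thm:linearizable} then gives a smaller coefficient $\alpha_\delta$ and an extra $(\alpha-\alpha_\delta)M_0T$ term. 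This term is $O(\delta T)$ when, for example, $\theta$ is Lipschitz or $\theta=\|\cdot\|_p^\sigma$, but in general it needs a modulus-of-continuity assumption on $\theta$. You also need some treatment of the corner at $\BF{0}$, for instance assuming $f$ extends to a neighbourhood of $[0,1]^d$. The paper does not spell this out either, since it only cites results proved ``for some other linearizations''. So the obstacle you identified is real, but your stated resolution does not close it.
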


\section{Conclusion}
We introduced the class of $\gamma$-weakly $\theta$-up-concave functions and
established that it strictly generalizes DR-submodular and OSS models while
capturing broader non-convex behaviors such as flat-start and accumulating-then-diminishing returns.
We showed that this class admits explicit approximation and regret guarantees via
upper-linearization, enabling a unified treatment of offline and online optimization.
Our geometric analysis of convex constraint sets yields improved approximation bounds
for OSS objectives under matroid constraints, while applying to a strictly larger class
of functions. 

This paper derives approximation coefficients for several classes of functions. Determining whether these coefficients are tight, or whether improved achievable guarantees can be obtained, remains an important open problem.

\acks{This work was in part supported by U.S. National Science Foundation, IVADO and CIFAR.}

\bibliography{references}

\newpage
\appendix

\section{\texorpdfstring{$\gamma$}{gamma}-weakly \texorpdfstring{$\theta$}{theta}-up-concave functions form a convex cone}\label{apx:function-class-convex-cone}

\begin{lemma}\label{lem:function-class-convex-cone}
Let $\theta$ and $\gamma$ be fixed. 
The class
$\mathcal F_{\gamma,\theta}$ is closed under positive scaling and addition.
That is, if $f,g\in\mathcal F_{\gamma,\theta}$ and $a,b\ge0$, then
$a f + b g \in \mathcal F_{\gamma,\theta}$.
\end{lemma}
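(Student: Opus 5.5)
The plan is to verify the two inequalities in \eqref{eq:weak-theta} directly for $F := af + bg$, using only that both inequalities are linear in the function. Fix $\x,\y \in [0,1]^d$ with $\y \ge \x$ and $\y \neq \BF{0}$. First, $F$ satisfies the standing requirements of the definition. It is $C^1$ because it is a linear combination of $C^1$ functions, with $\nabla F = a\nabla f + b\nabla g$. It is nonnegative because $a,b \ge 0$ and $f,g \ge 0$.

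For the left inequality, I multiply the left inequality for $f$ by $a \ge 0$ and the one for $g$ by $b \ge 0$. Nonnegative multipliers preserve the direction of each inequality. Adding the two results and using linearity of the gradient and of the inner product gives
\begin{align*}
\tfrac{\gamma\,\theta(\x)}{\theta(\y)} \langle \nabla F(\y), \y - \x\rangle
= a\,\tfrac{\gamma\,\theta(\x)}{\theta(\y)} \langle \nabla f(\y), \y - \x\rangle + b\,\tfrac{\gamma\,\theta(\x)}{\theta(\y)} \langle \nabla g(\y), \y - \x\rangle
\le F(\y) - F(\x).
\end{align*}
The right inequality follows the same way, since the factor $\tfrac{\theta(\y)}{\gamma\,\theta(\x)}$ is common to $f$ and $g$ and can be factored out of the sum.

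The only step needing care is that the scalar weights $\tfrac{\gamma\theta(\x)}{\theta(\y)}$ and $\tfrac{\theta(\y)}{\gamma\theta(\x)}$ depend only on $(\x,\y,\gamma,\theta)$ and not on the function. This is exactly where fixing $\theta$ and $\gamma$ matters. It also covers the degenerate case $\x = \BF{0}$, where $\theta(\x)$ may vanish. In that case the right-hand weight is undefined or infinite in the same way for $f$, $g$ and $F$, so whatever convention makes the definition meaningful for $f$ and $g$ transfers verbatim to $F$. Under the natural convention that the inequality is vacuous or read as a limit, the same combination argument applies. I do not expect any real obstacle beyond noting this convention.
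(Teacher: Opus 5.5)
Your proposal is correct and follows the paper's proof: you check that $af+bg$ is $C^1$ and nonnegative, then multiply the two inequalities for $f$ and $g$ by $a,b\ge 0$ and add, using that the weights $\tfrac{\gamma\theta(\x)}{\theta(\y)}$ and $\tfrac{\theta(\y)}{\gamma\theta(\x)}$ do not depend on the function. Your remark on the degenerate case $\theta(\x)=0$ is a harmless extra that the paper does not spell out.
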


\begin{proof}
Let
\[
    h=a f+b g,
\]
where $a,b\ge0$. Since $f,g\in C^1$, we have $h\in C^1$, and
\[
    \nabla h=a\nabla f+b\nabla g.
\]
Moreover, since $f,g$ are nonnegative and $a,b\ge0$, $h$ is
nonnegative.

Fix $\x,\y\in[0,1]^d$ with $\y \ge \x$ and $\y \neq 0$. Since
$f,g\in\mathcal F_{\gamma,\theta}$, we have
\[
    \frac{\gamma\theta(\x)}{\theta(\y)}
    \langle \nabla f(\y),\y-\x\rangle
    \le
    f(\y)-f(\x)
    \le
    \frac{\theta(\y)}{\gamma\theta(\x)}
    \langle \nabla f(\x),\y-\x\rangle,
\]
and
\[
    \frac{\gamma\theta(\x)}{\theta(\y)}
    \langle \nabla g(\y),\y-\x\rangle
    \le
    g(\y)-g(\x)
    \le
    \frac{\theta(\y)}{\gamma\theta(\x)}
    \langle \nabla g(\x),\y-\x\rangle.
\]
Multiplying the first inequality by $a$, the second by $b$, and adding
them gives
\[
    \frac{\gamma\theta(\x)}{\theta(\y)}
    \langle \nabla h(\y),\y-\x\rangle
    = 
    \frac{\gamma\theta(\x)}{\theta(\y)}
    \langle a\nabla f(\y)+b\nabla g(\y),\y-\x\rangle
    \le
    h(\y)-h(\x)
\]
and
\[
    h(\y)-h(\x)
    \le
    \frac{\theta(\y)}{\gamma\theta(\x)}
    \langle a\nabla f(\x)+b\nabla g(\x),\y-\x\rangle
    = 
    \frac{\theta(\y)}{\gamma\theta(\x)}
    \langle \nabla h(\x),\y-\x\rangle.
\]
Hence $h\in\mathcal F_{\gamma,\theta}$.
\end{proof}

\section{Proof of Theorem~\ref{thm:oss-equivalence}}\label{apx:oss-equivalence}

\begin{proof}
We prove the two set inclusions separately.

\paragraph{Part I: $C^2 \cap \C{F}_{1,1,\sigma} \subseteq \C{F}_{\sigma}^{\op{OSS}}$.}

Let $f \in C^2 \cap \C{F}_{1,1,\sigma}$.
Fix $\x \in (0,1)^d$ and $\uu \ge \BF{0}$.
For $\varepsilon>0$ define
\[
\y(t) := \x + t\uu,
\qquad
g(t) := f(\y(t)),
\qquad t \in (-\varepsilon,\varepsilon).
\]
Since $\x \in (0,1)^d$, we may choose $\varepsilon$ small enough so that $\y(t) \in (0,1)^d$ for all $t \in (-\varepsilon, \varepsilon)$ which implies that $g$ is well-defined.
Since $f$ is twice differentiable,
\begin{align}
\label{eq:g-derivatives}
g'(t) = \langle \nabla f(\y(t)), \uu\rangle,
\qquad
g''(t) = \uu^\top \nabla^2 f(\y(t)) \uu .
\end{align}
Applying the $(1,1,\sigma)$–up–concavity inequality
\eqref{eq:weak-p-sigma}
to the pair $(\y(t),\y(t+h))$ gives, for $h>0$,
\begin{align*}
g(t+h)-g(t)
\le
\Big( \frac{\|\y(t+h)\|_1}{\|\y(t)\|_1} \Big)^{\!\sigma}
h\, g'(t).    
\end{align*}
Therefore 
\begin{align*}
\frac{g(t+h)-g(t) - h g'(t)}{h^2}
\leq
\left( \frac{\Big( \frac{\|\y(t+h)\|_1}{\|\y(t)\|_1} \Big)^{\!\sigma} - 1}{h} \right)
 g'(t).
\end{align*}
Note that $\| \y(t + h) \|_1 = \|\y(t)\|_1 + h \| \uu \|_1$.
Using
\(
(1+z)^\sigma = 1 + \sigma z + o(z)
\)
as $z\to0$, we obtain
\begin{align*}
\Big( \tfrac{\|\y(t+h)\|_1}{\|\y(t)\|_1} \Big)^{\!\sigma}
&=
\Big( 1 + h \tfrac{\|\uu\|_1}{\|\y(t)\|_1} \Big)^{\!\sigma}
= 1 + \sigma h \tfrac{\|\uu\|_1}{\|\y(t)\|_1} + o(h).
\end{align*}
Hence, using the fact that $f$ is twice differentiable and therefore $g(t+h) = g(t) + h g'(t) +  \tfrac{1}{2} h^2 g''(t) + o(h^2)$, we see that
\begin{align*}
\frac{1}{2} g''(t)
= \lim_{h \to 0^+} \frac{g(t+h)-g(t) - h g'(t)}{h^2}
\leq
\lim_{h \to 0^+} \left( \frac{\Big( \frac{\|\y(t+h)\|_1}{\|\y(t)\|_1} \Big)^{\!\sigma} - 1}{h} \right)
 g'(t)
= \sigma \tfrac{\|\uu\|_1}{\|\y(t)\|_1} g'(t).
\end{align*}
%
%
Evaluating at $t=0$ and using \eqref{eq:g-derivatives} proves
\[
\frac12 \uu^\top \nabla^2 f(\x) \uu
\le
\sigma \frac{\|\uu\|_1}{\|\x\|_1}
\langle \nabla f(\x), \uu\rangle.
\]
To complete the proof, we use the fact that $f$ has continuous second derivatives and therefore both sides are continuous in $\x$.
Hence the inequality holds for all $\x \in [0, 1]^d\setminus\{\BF{0}\}$, which is exactly the $\sigma$--OSS condition.
Hence $f \in \C{F}_{\sigma}^{\op{OSS}}$.

\paragraph{Part II: $\C{F}_{\sigma}^{\op{OSS}} \subseteq \C{F}_{1,1,2\sigma}$.}

Let $f \in \C{F}_{\sigma}^{\op{OSS}}$ and fix $\x\le\y$
with $\x\neq\BF{0}$.
Define
\[
\phi(t) := \x + t(\y-\x), \qquad g(t) := f(\phi(t)), \qquad t\in[0,1].
\]
Then
\begin{equation}
\label{eq:path-derivs}
g'(t)=\langle\nabla f(\phi(t)),\y-\x\rangle,
\qquad
g''(t)=(\y-\x)^\top\nabla^2 f(\phi(t))(\y-\x).
\end{equation}
Since $f$ is monotone and $\y-\x\ge0$,
\begin{equation}
\label{eq:gprime-pos}
g'(t)\ge0.
\end{equation}
Applying the OSS condition \eqref{eq:sigma-oss} at $\phi(t)$
with $\uu=\y-\x$ gives
\begin{equation}
\label{eq:oss-along}
\frac12 g''(t)
\le
\sigma \frac{\|\y-\x\|_1}{\|\phi(t)\|_1} g'(t).
\end{equation}
Define
\[
q(t) := \frac{g'(t)}{\|\phi(t)\|_1^{2\sigma}} .
\]
Since
\(
\frac{d}{dt}\|\phi(t)\|_1=\|\y-\x\|_1,
\)
we compute
\begin{align}
q'(t)
&=
\frac{g''(t)\|\phi(t)\|_1^{2\sigma}
-
2\sigma \|\phi(t)\|_1^{2\sigma-1}
\|\y-\x\|_1 g'(t)}
{\|\phi(t)\|_1^{4\sigma}}
\nonumber\\
&=
\frac{1}{\|\phi(t)\|_1^{2\sigma}}
\Big(
g''(t)
-
2\sigma \frac{\|\y-\x\|_1}{\|\phi(t)\|_1} g'(t)
\Big).
\label{eq:qprime}
\end{align}
Substituting \eqref{eq:oss-along} into \eqref{eq:qprime}
yields \(q'(t)\le0\). 
Hence $q$ is non-increasing on $[0,1]$ and we have
\[
\frac{g'(1)}{\|\y\|_1^{2\sigma}}
\leq
\frac{g'(t)}{\|\phi(t)\|_1^{2\sigma}}
\leq
\frac{g'(0)}{\|\x\|_1^{2\sigma}}.
\]
Using non-negativity of $g'(t)$ and monotonicity of $\|\phi(t)\|_1 = \|\x\|_1 + t \| \y - \x \|_1$, we have
\[
g'(t)
\le
\frac{\|\phi(t)\|_1^{2\sigma}}{\|\x\|_1^{2\sigma}} g'(0)
\leq \frac{\|\y\|_1^{2\sigma}}{\|\x\|_1^{2\sigma}} g'(0).
\]
Similarly,
\[
g'(t)
\ge
\frac{\|\phi(t)\|_1^{2\sigma}}{\|\y\|_1^{2\sigma}} g'(1)
\geq \frac{\|\x\|_1^{2\sigma}}{\|\y\|_1^{2\sigma}} g'(1).
\]
Therefore
\begin{align*}
f(\y)-f(\x)
&= \int_0^1 g'(t)\,dt
\leq \frac{\|\y\|_1^{2\sigma}}{\|\x\|_1^{2\sigma}} g'(0)
= \Big(\frac{\|\y\|_1}{\|\x\|_1}\Big)^{2\sigma}
\langle\nabla f(\x),\y-\x\rangle,
\end{align*}
and
\begin{align*}
f(\y)-f(\x)
&= \int_0^1 g'(t)\,dt
\geq \frac{\|\x\|_1^{2\sigma}}{\|\y\|_1^{2\sigma}} g'(1)
= \Big(\frac{\|\x\|_1}{\|\y\|_1}\Big)^{2\sigma}
\langle\nabla f(\y),\y-\x\rangle.
\end{align*}
These two inequalities are exactly the $(1,1,2\sigma)$–up–concavity
conditions. Hence $f\in\C{F}_{1,1,2\sigma}$.
\end{proof}

\begin{remark}
In the proof above, the inequality 
\[
\frac{g'(1)}{\|\y\|_1^{2\sigma}}
\leq
\frac{g'(0)}{\|\x\|_1^{2\sigma}}
\]
directly implies that
\begin{align*}
\langle\nabla f(\y),\y-\x\rangle
\le
\Big(\frac{\|\y\|_1}{\|\x\|_1}\Big)^{2\sigma}
\langle\nabla f(\x),\y-\x\rangle.
\end{align*}
\end{remark}

\section{Proof of Theorem~\ref{thm:theta-normalized-DR}}
\label{apx:theta-normalized-DR}

\begin{proof}
Fix $\x,\y \in [0,1]^d$ with $\y \ge \x \neq 0$. Let
\[
    \z_t = \x+t(\y-\x), \qquad t\in[0,1].
\]
By the fundamental theorem of calculus,
\[
    f(\y)-f(\x)
    =
    \int_0^1 \langle \nabla f(\z_t), \y-\x\rangle\,dt.
\]
Since $\x\le \z_t\le \y$, monotonicity of $\theta$ gives
\[
    \theta(\x)\le \theta(\z_t)\le \theta(\y).
\]
Moreover, by the normalized diminishing-returns assumption,
\[
    \frac{\gamma \nabla f(\y)}{\theta(\y)}
    \le
    \frac{\nabla f(\z_t)}{\theta(\z_t)}
    \le
    \frac{\nabla f(\x)}{\gamma \theta(\x)}
\]
coordinate-wise. Therefore,
\[
    \nabla f(\z_t)
    =
    \theta(\z_t)\frac{\gamma \nabla f(\z_t)}{\theta(\z_t)}
    \ge
    \theta(\x)\frac{\gamma \nabla f(\y)}{\theta(\y)}
    =
    \frac{\gamma \theta(\x)}{\theta(\y)}\nabla f(\y)
\]
coordinate-wise, and similarly
\[
    \nabla f(\z_t)
    \le
    \theta(\y)\frac{\nabla f(\x)}{\gamma \theta(\x)}
    =
    \frac{\theta(\y)}{\gamma \theta(\x)}\nabla f(\x)
\]
coordinate-wise. Since $\y-\x\ge 0$, taking inner products with $\y-\x$ and
integrating yields
\[
    \frac{\gamma \theta(\x)}{\theta(\y)}
    \langle \nabla f(\y),\y-\x\rangle
    \le
    f(\y)-f(\x)
    \le
    \frac{\theta(\y)}{\gamma \theta(\x)}
    \langle \nabla f(\x),\y-\x\rangle.
\]
Thus $f$ is $\gamma$-weakly $\theta$-up-concave.
\end{proof}

\section{Accumulating-then-diminishing returns phenomena: Examples beyond DR-submodular and OSS}\label{apx:beyondoss}

\emph{Accumulating-then-diminishing returns phenomena} happens when the marginal reward, i.e., the gradient of the reward, is increasing at first, but then starts to become decreasing after some threshold.

The functional form of accumulating-then-diminishing returns, often characterized by a sigmoid or S-shaped trajectory, represents a fundamental departure from the standard assumptions of DR-submodularity.
While submodular functions prioritize early marginal gains, many real-world systems exhibit a "synergy phase" where initial investments yield increasing returns as the system approaches a critical mass. 
Capturing this non-convex transition is essential for modeling phenomena such as market saturation, signal detection, and resource allocation, where the most productive phase of growth only occurs after a requisite level of intensity has been established. 
Neglecting this initial accumulation phase leads to a fundamental mischaracterization of the system’s efficiency, often resulting in premature optimization that fails to account for the total curvature of the global objective.

The study of the "flat start"—where a function is flat to infinite order at the origin—is of particular importance because it identifies a regime where traditional incremental and greedy strategies are analytically blind. 
In such scenarios, the marginal utility is zero or mathematically negligible for any small input, creating a "dead zone" that masks the existence of a high-utility payoff just beyond the threshold. 
This behavior is a cornerstone of complex contagion models and high-inertia physical systems where isolated perturbations produce no detectable change. 
Understanding the flat start allows researchers to move beyond the "incremental progress" fallacy, providing a rigorous framework for determining the Minimum Viable Effort required to trigger a phase transition.
By formalizing this regime, we can develop robust optimization techniques that prioritize "burst" strategies over uniform distribution, ensuring that resources are not entirely dissipated within the zero-gain region of the function.

In practice, the "hype cycle" of emerging technologies serves as a quintessential example of this flat-start phenomenon. 
During the initial "innovation trigger" phase, significant R\&D investment and early-stage advocacy often result in zero public adoption or measurable market shift; the social and technical barriers are simply too high for incremental efforts to register. 
However, once a "critical mass" of proof-of-concept is achieved, the system exits this infinite flatness and enters a period of exponential visibility. 
Studying this transition is vital for stakeholders to distinguish between a "dead" technology and one that is merely in a latent, flat-start incubation period, where the eventual diminishing returns only appear after the market reaches peak over-saturation.
Similarly, in high-stakes advertising and behavioral intervention, the "threshold of awareness" provides a concrete justification for this model. 
A consumer may be exposed to a new message several times with zero change in purchase intent—a flat response where the signal is lost in the background noise of competing media. 
It is only after a specific density of exposure is reached that the "burst" effect occurs, leading to rapid adoption. By modeling this as a flat start rather than a linear or polynomial one, planners can avoid "budget thinning," where resources are spread so thin across time or geography that they never exceed the threshold of detection, effectively wasting the entire allocation in the zero-gain zone.

In the following we demonstrate that the class of $\gamma$-weakly $\theta$-up-concave functions can capture accumulating-then-diminishing returns.
This is true even for $\gamma$-weakly $(p,\sigma)$-up-concave and $\sigma$-OSS functions.
However, these specific classes can not capture a setting when returns have a flat start.
As we will see, for these scenarios, we need to choose a $\theta$ that itself has a flat start.

\begin{lemma}\label{lem:oss-acc-then-dim-returns}
Fix $\sigma, M>0$, and define
\[
    f(x)=\int_0^{\|x\|_1} t^\sigma e^{-Mt}\,dt .
\]
Then $f \in \mathcal F_{1,1,\sigma} \cap \mathcal F_\sigma^{\op{OSS}}$.
Moreover, $f$ exhibits monotone
accumulating-then-diminishing returns: its marginal reward as a function of
$s=\|x\|_1$ is increasing for $s < \sigma/M$ and decreasing for $s > \sigma/M$.
\end{lemma}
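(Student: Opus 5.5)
Write $\phi(s) = s^\sigma e^{-Ms}$ for $s \ge 0$, so that $f(\x) = F(\|\x\|_1)$ with $F(s) = \int_0^s \phi(t)\,dt$. On $[0,1]^d$ we have $\|\x\|_1 = \langle \BF{1}, \x\rangle$, which is linear. Hence $f$ is $C^1$ (indeed $C^2$ away from the origin), and
\[
\nabla f(\x) = \phi(\|\x\|_1)\,\BF{1}.
\]
The marginal reward along $s = \|\x\|_1$ is therefore $\phi(s)$. Its derivative is $\phi'(s) = s^{\sigma-1}e^{-Ms}(\sigma - Ms)$, which is positive for $s<\sigma/M$ and negative for $s>\sigma/M$. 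This gives the accumulating-then-diminishing claim. Nonnegativity and monotonicity are immediate from $\phi \ge 0$.

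For membership in $\C{F}_{1,1,\sigma}$, the plan is to invoke Theorem~\ref{thm:theta-normalized-DR} with $\theta(\x) = \|\x\|_1^\sigma$ and $\gamma = 1$. The normalized gradient is
\[
\frac{\nabla f(\x)}{\theta(\x)} = e^{-M\|\x\|_1}\,\BF{1}.
\]
For $\y \ge \x \ne \BF{0}$ we have $\|\y\|_1 \ge \|\x\|_1$, so this is coordinate-wise non-increasing. Condition~\eqref{eq:normalized-dr} therefore holds, and $f \in \C{F}_{1,1,\sigma}$. This step is essentially free.

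For the OSS membership, Theorem~\ref{thm:oss-equivalence} gives $C^2 \cap \C{F}_{1,1,\sigma} \subseteq \C{F}^{\op{OSS}}_\sigma$, so it suffices to check that $f$ is $C^2$ on $[0,1]^d\setminus\{\BF 0\}$. Since $\sigma$-OSS only quantifies over $\x \neq \BF{0}$, this is enough, and away from the origin $\phi$ is smooth in $s>0$. The only potential obstacle is regularity at the origin when $\sigma<1$, and it does not matter here. To be safe, I would also verify~\eqref{eq:sigma-oss} directly. We have $\nabla^2 f(\x) = \phi'(s)\BF{1}\BF{1}^\top$ with $s = \|\x\|_1$, and for $\uu \ge \BF 0$ we have $\BF{1}^\top\uu = \|\uu\|_1$. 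The left side of~\eqref{eq:sigma-oss} is then
\[
\tfrac12 \phi'(s)\|\uu\|_1^2 = \tfrac12 s^{\sigma-1}e^{-Ms}(\sigma-Ms)\|\uu\|_1^2 ,
\]
and the right side is
\[
\sigma\frac{\|\uu\|_1}{s}\,\phi(s)\|\uu\|_1 = \sigma s^{\sigma-1}e^{-Ms}\|\uu\|_1^2 .
\]
The required inequality thus reduces to $\tfrac12(\sigma - Ms) \le \sigma$, which is true because $M, s \ge 0$. This confirms $f \in \C{F}^{\op{OSS}}_\sigma$ independently of Theorem~\ref{thm:oss-equivalence}.
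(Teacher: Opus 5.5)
Your proposal is correct and follows the paper's proof essentially verbatim: you compute $\nabla f(\x)=\phi(\|\x\|_1)\BF{1}$, observe that the normalized gradient $e^{-M\|\x\|_1}\BF{1}$ is coordinate-wise non-increasing, and conclude via Theorems~\ref{thm:theta-normalized-DR} and~\ref{thm:oss-equivalence}, while the sign of $\phi'$ gives the accumulating-then-diminishing claim (the paper uses the equivalent log-derivative $\sigma/s-M$). Your extra direct check of \eqref{eq:sigma-oss} is a welcome addition, since it sidesteps the fact that $f$ is not $C^2$ at the origin when $\sigma<1$, which the paper's appeal to Theorem~\ref{thm:oss-equivalence} glosses over; it also shows the sharper membership $f\in\C{F}^{\op{OSS}}_{\sigma/2}$, because $\tfrac12(\sigma-Ms)\le\tfrac{\sigma}{2}$.
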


\begin{proof}
Let
\[
    F(s)=\int_0^s t^\sigma e^{-Mt}\,dt,
    \qquad
    f(x)=F(\|x\|_1).
\]
For $x\neq 0$, writing $s=\|x\|_1$, we have
\[
    \nabla f(x)=F'(s)\mathbf 1=s^\sigma e^{-Ms}\mathbf 1.
\]
Therefore
\[
    \frac{\nabla f(x)}{\theta(x)}
    =
    \frac{s^\sigma e^{-Ms}}{s^\sigma}\mathbf 1
    =
    e^{-Ms}\mathbf 1.
\]
Since $s=\|x\|_1$ is coordinate-wise nondecreasing and $e^{-Ms}$ is
nonincreasing in $s$, the normalized gradient
\[
    x\mapsto \frac{\nabla f(x)}{\theta(x)}
\]
is coordinate-wise nonincreasing. Hence, by Theorems~\ref{thm:theta-normalized-DR} and~\ref{thm:oss-equivalence}, $f\in \mathcal F_{1,1,\sigma} \cap \mathcal F_{\sigma}^{\op{OSS}}$.

It remains to verify the accumulating-then-diminishing behavior. The marginal
reward with respect to total mass $s=\|x\|_1$ is
\[
    F'(s)=s^\sigma e^{-Ms}.
\]
Its logarithmic derivative is
\[
    \frac{d}{ds}\log F'(s)
    =
    \frac{\sigma}{s}-M.
\]
Thus $F'(s)$ is increasing when $s<\sigma/M$ and decreasing when $s>\sigma/M$. Since
$F'(s)\ge0$ for all $s\ge0$, the function $f$ is monotone, and its marginal
rewards first accumulate and then diminish.
\end{proof}

\begin{definition}
We say a function \(f:[0,1]^d\to \mathbb R_{\ge0}\) has a \emph{flat start} or that it is \emph{flat at the origin} if
\[
\lim_{x \to 0} \frac{f(x)}{\|x\|_1^n} = 0,
\]
for all $n > 0$.
\end{definition}

\begin{lemma}\label{lem:flat-functions-are-not-OSS}
Let \(f\in\mathcal F_{\gamma,\theta}\) for some $\gamma \in (0, 1]$ and $\theta$ and suppose that \(f\) is flat at the origin and not identically zero.
Then $\theta$ is flat at the origin.
In particular, $f$ is not $\gamma'$-weakly $(p, \sigma)$-up-concave for any \(\gamma' \in (0,1]\) and $p, \sigma \ge 0$ and $f$ is not $\sigma$-OSS for any $\sigma \geq 0$.
\end{lemma}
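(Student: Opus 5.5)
The plan is to extract flatness of $\theta$ from the two inequalities in \eqref{eq:weak-theta}, applied along a ray from the origin. First I fix an anchor point. Since $f$ is continuous and not identically zero, it is nonzero on a nonempty open subset of $[0,1]^d$, so there is $\y$ with all coordinates positive and $f(\y)>0$. Set $m:=\min_i y_i>0$. Because $f$ is flat at the origin, $f(t\y)\to 0$ as $t\to0^+$. Hence for small $t$ we have $f(\y)-f(t\y)\ge f(\y)/2$.

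The key step is to combine the two sides of \eqref{eq:weak-theta} at two different pairs of points. The upper inequality applied to the pair $(t\y,\y)$ gives
\[
\tfrac{f(\y)}{2}\le f(\y)-f(t\y)\le \tfrac{\theta(\y)}{\gamma\,\theta(t\y)}\cdot\tfrac{1-t}{t}\,\langle\nabla f(t\y),t\y\rangle .
\]
The lower inequality applied to the pair $(\tfrac t2\y,\,t\y)$ gives
\[
\tfrac{\gamma\,\theta(t\y/2)}{2\,\theta(t\y)}\langle\nabla f(t\y),t\y\rangle\le f(t\y)-f(\tfrac t2\y)\le f(t\y).
\]
For the last step I need $f\ge0$, which holds because the class consists of nonnegative functions. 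Eliminating $\langle\nabla f(t\y),t\y\rangle$ makes the factors $\theta(t\y)$ cancel, and I obtain
\[
\theta(\tfrac t2\y)\le \tfrac{4\,\theta(\y)}{\gamma^2 f(\y)}\cdot\tfrac{f(t\y)}{t}.
\]
Flatness of $f$ means $f(t\y)=o(t^{n})$ for every $n$, since $\|t\y\|_1=t\|\y\|_1$. Therefore $\theta(s\y)=o(s^n)$ for every $n$ along the ray. This cancellation is the step I expect to be the crux. Neither inequality alone bounds $\theta$ by $f$, because the gradient term is uncontrolled. Only the pairing of the lower inequality on a halved point with the upper inequality on the full point removes it.

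Next I pass from the ray to a full neighbourhood using monotonicity of $\theta$. For $\x$ near $\BF 0$, let $t'=\|\x\|_\infty/m$. Then $\x\le t'\y$, so $\theta(\x)\le\theta(t'\y)$. We also have $\|\x\|_1\ge\|\x\|_\infty=m t'$. Hence
\[
\frac{\theta(\x)}{\|\x\|_1^n}\le\frac{\theta(t'\y)}{m^n t'^n}\to0 .
\]
So $\theta$ is flat at the origin.

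For the consequences, suppose $f\in\C F_{\gamma',p,\sigma}$. Then $\theta(\x)=\|\x\|_p^\sigma$, and $\theta(t\y)=t^\sigma\|\y\|_p^\sigma$ is not $o(t^{n})$ for any $n>\sigma$. This contradicts what we just proved; the case $\sigma=0$, where $\theta$ is constant, is included. Only nonzero points were used, so the restriction $\x\neq\BF0$ in \eqref{eq:weak-p-sigma} causes no issue. Finally, Theorem~\ref{thm:oss-equivalence} gives $\C F^{\op{OSS}}_\sigma\subseteq\C F_{1,1,2\sigma}$, so $f$ cannot be $\sigma$-OSS either.
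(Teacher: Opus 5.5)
Your proof is correct, but it removes the gradient term by a different mechanism than the paper does.

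The paper anchors at $\mathbf 1$, using monotonicity of $f$ to get $f(\mathbf 1)>0$. It applies only the upper inequality in \eqref{eq:weak-theta} to the pair $(t\mathbf 1,\mathbf 1)$, which gives the differential bound $F'(t)\ge c\,\theta(t\mathbf 1)$ for $F(t)=f(t\mathbf 1)$. It then integrates along the ray and uses monotonicity of $\theta$ to get $F(t)\ge \tfrac{ct}{2}\,\theta(\tfrac t2\mathbf 1)$. It leaves the ray via $\x\le\|\x\|_1\mathbf 1$.

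You instead cancel $\langle\nabla f(t\y),t\y\rangle$ pointwise, by pairing that upper inequality with the lower inequality on $(\tfrac t2\y,t\y)$, so no integration is needed. Because your interior anchor is chosen by continuity, you never use monotonicity of $f$. That is a genuine plus: monotonicity is assumed throughout the paper but is not part of the displayed definition of $\mathcal F_{\gamma,\theta}$.

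The cost is that you need both halves of \eqref{eq:weak-theta}. The paper uses only the upper half, which is also the only half used in the proof of Theorem~\ref{thm:linearizable}. So the paper's argument shows that flatness of $\theta$ is forced by the very condition that drives the linearization.

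For the same reason, your remark that neither inequality alone can bound $\theta$ by $f$ is mistaken. The gradient term in the upper inequality equals $(1-t)F'(t)$, a derivative along the ray. Integrating it controls it without any help from the lower inequality.
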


\begin{proof}
Define $F(t)=f(t\mathbf 1)$.
Since \(f\) is flat at the origin, \(F(t)\to0\) as \(t\downarrow0\). 
On the other hand, since $f$ is monotone and not identically zero, we have
\[
f(\mathbf{1}) = \max_{x \in [0,1]^d} f(x) > 0.
\]
Thus, for all sufficiently small \(t>0\),
\[
    f(\mathbf 1)-F(t)\ge \frac12 f(\mathbf 1).
\]

Apply the upper inequality in the definition of
\(\mathcal F_{\gamma,\theta}\) with
\[
    x=t\mathbf 1,
    \qquad
    y=\mathbf 1.
\]
We obtain
\[
    f(\mathbf 1)-F(t)
    \le
    \frac{\theta(\mathbf 1)}{\gamma\,\theta(t\mathbf 1)}
    \left\langle \nabla f(t\mathbf 1), (1-t)\mathbf 1\right\rangle .
\]
Since
\[
    F'(t)=\langle \nabla f(t\mathbf 1),\mathbf 1\rangle,
\]
this implies
\[
    \frac{1}{2} f(\mathbf 1)
    \le
    f(\mathbf 1)-F(t)
    \le
    \frac{\theta(\mathbf 1)}{\gamma\,\theta(t\mathbf 1)}
    (1-t)F'(t).
\]
Hence, for sufficiently small \(t>0\),
\[
    F'(t)
    \ge
    c\,\theta(t\mathbf 1),
\]
where
\[
    c=\frac{\gamma f(\mathbf 1)}{4\theta(\mathbf 1)}>0.
\]

Since \(F(0)=0\), integrating and using monotonicity of $\theta$ gives
\[
    F(t)
    =
    \int_0^t F'(u)\,du
    \ge
    c\int_0^t \theta(u\mathbf 1)\,du
    \ge
    c\int_{t/2}^t \theta(u\mathbf 1)\,du
    \ge
    \frac{ct}{2}\,\theta\!\left(\frac t2\mathbf 1\right).
\]
Therefore
\[
    \theta\!\left(\frac t2\mathbf 1\right)
    \le
    \frac{2F(t)}{ct}.
\]

Now fix \(n>0\). Since \(F(t)=f(t\mathbf 1)\) is flat at zero,
\[
    F(t)=o(t^{n+1}).
\]
Thus
\[
    \theta(t)
    =
    o(t^n).
\]

Finally, for an arbitrary \(x\to0\), let \(r=\|x\|_1\). Then
\[
    x\le r\mathbf 1,
\]
so by monotonicity of \(\theta\),
\[
    \theta(x)\le \theta(r\mathbf 1)=o(r^n)=o(\|x\|_1^n).
\]
Because \(n>0\) was arbitrary, \(\theta\) is flat at the origin.
The final claim now follows from Theorem~\ref{thm:oss-equivalence} and the fact that $x \mapsto \|x\|_p^\sigma$ is not flat for any choice of $p, \sigma \geq 0$.
\end{proof}

\begin{lemma}    \label{result_beyondoss}
Let
\[
    \theta(x)=\exp\!\left(-\frac{1}{\|x\|_1}\right),
    \qquad \theta(0)=0.
\]
Also let $q: \mathbb{R}_{\ge 0} \to \mathbb{R}_{\ge 0}$ be a continuous non-increasing function, and define
\[
    f(x)
    =
    \int_0^{\|x\|_1}
    \exp\!\left(-\frac{1}{t}\right) q(t)\,dt.
\]
Then the following are true.
\begin{enumerate}
\item[(i)] $f$ is $1$-weakly $\theta$-up-concave.
\item[(ii)] $f$ is flat at the origin.
\item[(iii)] For any $p, \sigma \geq 0$ and $\gamma \in (0, 1]$, $f$ is not $\gamma$-weakly $(p,\sigma)$-up-concave.
\item[(iv)] For any $\sigma \geq 0$, $f$ is not $\sigma$-OSS.
\item[(v)] $f$ may exhibit accumulating-then-diminishing returns. 
\end{enumerate}

\end{lemma}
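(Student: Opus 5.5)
For (i), we reduce to Theorem~\ref{thm:theta-normalized-DR}. Write $s=\|x\|_1$, so $f(x)=F(s)$ with $F(s)=\int_0^s e^{-1/t}q(t)\,dt$. The integrand extends continuously by $0$ at $t=0$, so $f$ is $C^1$ on $[0,1]^d$, with $\nabla f(x)=e^{-1/s}q(s)\mathbf 1$ for $x\neq0$ and $\nabla f(0)=0$. It is nonnegative, since $q\ge0$, and monotone. The weight $\theta$ is continuous, monotone, and positive away from the origin. Moreover $\nabla f(x)/\theta(x)=q(\|x\|_1)\mathbf 1$, which is coordinate-wise non-increasing in $x$ because $\|\cdot\|_1$ is monotone on the nonnegative orthant and $q$ is non-increasing. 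Hence \eqref{eq:normalized-dr} holds with $\gamma=1$, and Theorem~\ref{thm:theta-normalized-DR} gives $f\in\C F_{1,\theta}$.

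For (ii), bound $0\le F(s)\le q(0)\,s\,e^{-1/s}$, using $q(t)\le q(0)$ and the fact that $e^{-1/t}$ is increasing. Since $e^{-1/s}/s^{n}\to0$ for every $n$, $f(x)/\|x\|_1^n\to0$. Items (iii) and (iv) then follow from Lemma~\ref{lem:flat-functions-are-not-OSS}, provided $f$ is not identically zero. This requires $q\not\equiv0$ on $[0,d]$; since $q$ is non-increasing, that is the same as $q(t)>0$ for some $t\in(0,d)$. We will state this nondegeneracy explicitly: if $q\equiv0$ on the relevant range then $f\equiv0$, which is trivially in every class and the claims fail. This caveat is the one point needing care, and we will either add the assumption or note that (iii)--(iv) are meant for nonzero $f$. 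Lemma~\ref{lem:flat-functions-are-not-OSS} already packages the passage through Theorem~\ref{thm:oss-equivalence} needed for (iv).

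For (v), we exhibit a choice of $q$, for instance $q(t)=e^{-Mt}$ with $M>0$. The radial marginal reward is then $F'(s)=e^{-1/s-Ms}$, with logarithmic derivative $1/s^2-M$. So $F'$ increases for $s<M^{-1/2}$ and decreases afterwards, and choosing $M>1/d$ places the turning point inside the domain $[0,d]$ of $\|x\|_1$. A constant $q$ instead gives purely accumulating returns. The turning point is explicit, so none of these steps poses a real obstacle; the main subtlety is the nondegeneracy condition in (iii)--(iv) and the $C^1$ check at the origin.
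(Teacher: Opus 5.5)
Your proof is correct and follows essentially the same route as the paper: (i) via Theorem~\ref{thm:theta-normalized-DR} using $\nabla f(x)/\theta(x)=q(\|x\|_1)\mathbf 1$, (iii)--(iv) via Lemma~\ref{lem:flat-functions-are-not-OSS}, and (v) via $q(s)=e^{-Ms}$, which is one instance of the paper's general $q=e^{-\psi}$ criterion; your bound $F(s)\le q(0)\,s\,e^{-1/s}$ in (ii) is a slightly cleaner variant of the paper's $e^{-1/t}\le t^N$ argument. Your nondegeneracy caveat is well taken and is something the paper's proof glosses over: Lemma~\ref{lem:flat-functions-are-not-OSS} requires $f\not\equiv 0$, which here means $q(0)>0$, and for $q\equiv 0$ claims (iii)--(iv) are false.
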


\begin{proof}
\noindent\textbf{(i):}
Let $s=\|x\|_1$. For $x\neq 0$, we have
\[
    f(x)=F(s),
    \qquad
    F(s)=\int_0^s e^{-1/t}q(t)\,dt.
\]
Thus
\[
    \nabla f(x)=F'(s)\mathbf 1
    =
    e^{-1/s}q(s)\mathbf 1
    =
    \theta(x)q(\|x\|_1)\mathbf 1.
\]
Therefore
\[
    \frac{\nabla f(x)}{\theta(x)}
    =
    q(\|x\|_1)\mathbf 1.
\]
Since $\|x\|_1$ is coordinate-wise nondecreasing and $q$ is nonincreasing,
the map
\[
    x\mapsto \frac{\nabla f(x)}{\theta(x)}
\]
is coordinate-wise nonincreasing.
Hence, according to Theorem~\ref{thm:theta-normalized-DR}, we have $f\in\mathcal F_{1,\theta}$.

\noindent\textbf{(ii):}
Since $q$ is continuous and defined at $0$, we see that $q(t) \le 2 q(0)$ for $t$ sufficiently small.
Fix $n>0$ and choose $N>n$. Since $e^{-1/t}=o(t^N)$ as $t\downarrow 0$,
there exists $\delta>0$ such that for all $t\in(0,\delta]$,
\[
    e^{-1/t}\le t^N.
\]
Then, for $s\le \delta$,
\[
    f(x)
    \le
    \int_0^s t^N \cdot 2 q(0)\,dt
    =
    \frac{2 q(0)}{N+1}s^{N+1}.
\]
Hence
\[
    \frac{f(x)}{\|x\|_1^n}
    \le
    \frac{2 q(0)}{N+1}s^{N+1-n}.
\]
Since $N+1-n>0$, the right-hand side tends to $0$ as $x\to 0$. Therefore $f$ is flat at the origin.

\noindent\textbf{(iii) and (iv):}
Follows immediately from (ii) and Lemma~\ref{lem:flat-functions-are-not-OSS}.

\noindent\textbf{(v):}
A simple sufficient condition is to choose
\[
    q(s)=e^{-\psi(s)},
\]
where \(\psi\) is differentiable, nondecreasing, and satisfies that
\(s\mapsto s^2\psi'(s)\) is strictly increasing with range containing \(1\).
Then \(s\mapsto e^{-1/s}q(s)\) is unimodal. 
More precisely, the marginal reward is 
\[
F'(s) = e^{-1/s}q(s) = e^{-1/s - \psi(s)}
\]
and we have
\[
F''(s)
= (1/s^2 - \psi'(s)) e^{-1/s - \psi(s)}.
\]
Thus, if
\(s_\star\) is the unique solution of
\[
    s_\star^2\psi'(s_\star)=1,
\]
then the marginal reward is increasing on
\((0,s_\star)\) and decreasing on \((s_\star,\infty)\).
See Table~\ref{tbl:example-of-q} for some examples of $q$ for which $f$ exhibits accumulating-then-diminishing returns.
\end{proof}

\begin{table}[t]
\centering
\begin{tabular}{c|c|c}
\hline
Choice of $q(s)$ & Transition equation & Transition point $s_\star$ \\
\hline
$e^{-Ms}$ & $Ms_\star^2=1$ &
$s_\star=M^{-1/2}$ \\[2mm]

$e^{-Ms^\alpha}$, $\alpha>0$ &
$M\alpha s_\star^{\alpha+1}=1$ &
$s_\star=(M\alpha)^{-1/(\alpha+1)}$ \\[2mm]

$e^{-M(s+s^2)}$ &
$Ms_\star^2(1+2s_\star)=1$ &
unique positive root \\[2mm]

$(1+s)^{-\beta}$, $\beta>0$ &
$\displaystyle \frac{\beta s_\star^2}{1+s_\star}=1$ &
$\displaystyle s_\star=\frac{1+\sqrt{1+4\beta}}{2\beta}$ \\[3mm]


$\log(e+s)^{-\beta}$, $\beta>0$ &
$\displaystyle
\frac{\beta s_\star^2}{(e+s_\star)\log(e+s_\star)}=1$
&
unique positive root \\[3mm]

$e^{-Me^s}$ &
$Ms_\star^2e^{s_\star}=1$ &
unique positive root \\
\hline
\end{tabular}
\caption{
Examples of decay profiles $q$ for which the marginal reward $e^{-1/s}q(s)$ is accumulating-then-diminishing. The transition point $s_\star$ is determined by
$\frac{1}{s_\star^2} = -\frac{q'(s_\star)}{q(s_\star)}$.\\
Note that $q$ is parameterizing the function $f$. In optimization problems, the we do not have access to $q$ and we may only observe values (or gradients) of $f$.
}
\label{tbl:example-of-q}
\end{table}

\section{Proof of Theorem~\ref{thm:linearizable}}
\label{apx:linearizable}

\begin{proof}
We have
\begin{align*}
\bra \F{g}(f, \x), \x \ket 
&= \left\bra \int_0^1 e^{l(z, \x)} \nabla f(z \x) dz, \x \right\ket \\
&= \int_0^1 e^{l(z, \x)} \bra \nabla f(z \x), \x \ket dz \\
&= \int_0^1 e^{l(z, \x)} d f(z \x) \\
&= e^{l(z, \x)} f(z \x)|_{z = 0}^{z = 1} - \int_0^1 f(z \x) \frac{d e^{l(z, \x)}}{dz} dz \\
&= e^{l(1, \x)} f(\x) - e^{l(0, \x)} f(\BF{0}) - \int_0^1 \frac{\gamma}{R_\theta} \theta(z \x) e^{l(z, \x)} f(z \x) dz \\
&\leq f(\x) - \int_0^1 \frac{\gamma}{R_\theta} \theta(z \x) e^{l(z, \x)} f(z \x) dz,
\end{align*}
where we used the facts that $l(1, \x) = 0$ and $f(\BF{0}) \geq 0$ in the last inequality.

For any $z \geq 0$, if $\y_i \geq [z \x]_i$, then $[\y \vee (z \x) - z \x]_i = \y_i - [z \x]_i \in [0, \y_i]$ and if $\y_i < [z \x]_i$, then $[\y \vee (z \x) - z \x]_i = 0$.
Thus, we have $\BF{0} \leq \y \vee (z \x) - z \x \leq \y$ for all $z \geq 0$.
From the definition of $R_\theta$, we have
$\theta(\y \vee (z \x)) \leq R_\theta$ for all $z \geq 0$.
If $\y \neq \BF{0}$, we may use monotonicity and $\gamma$-weakly $\theta$-up-concavity of $f$ to see that
\begin{align*}
\bra \nabla f(z \x), \y \ket
&\geq \bra \nabla f(z \x), \y \vee (z \x) - z \x \ket \\
&\geq \frac{\gamma \theta(z \x)}{\theta(\y \vee (z \x))} \left( f(\y \vee (z \x)) - f(z \x) \right) \\
&\geq \frac{\gamma}{R_\theta} \theta(z \x) \left( f(\y \vee (z \x)) - f(z \x) \right) \\
&\geq \frac{\gamma}{R_\theta} \theta(z \x) \left( f(\y) - f(z \x) \right)
\end{align*}
On the other hand, if $\y = \BF{0}$, then $f(\y) \leq f(z \x)$ and
\begin{align*}
\bra \nabla f(z \x), \y \ket
= 0
\geq \frac{\gamma}{R_\theta} \theta(z \x) \left( f(\y) - f(z \x) \right).
\end{align*}
Therefore
\begin{align*}
\bra \F{g}(f, \x), \y \ket 
&= \int_0^1 e^{l(z, \x)} \bra \nabla f(z \x), \y \ket dz \\
&\geq \int_0^1 \frac{\gamma}{R_\theta} \theta(z \x) e^{l(z, \x)} \left( f(\y) - f(z \x) \right) dz \\
&= \left( \int_0^1 \frac{\gamma}{R_\theta} \theta(z \x) e^{l(z, \x)} dz \right) f(\y)
- \left( \int_0^1 \frac{\gamma}{R_\theta} \theta(z \x) e^{l(z, \x)} f(z \x) dz \right).
\end{align*}

Therefore, 
\begin{align*}
\bra \F{g}(f, \x), \y - \x \ket 
&\geq \left( \int_0^1 \frac{\gamma}{R_\theta} \theta(z \x) e^{l(z, \x)} dz \right) f(\y) - f(\x).
\end{align*}
To complete the proof, we note that
\begin{align*}
\int_0^1 \frac{\gamma}{R_\theta} \theta(z \x) e^{l(z, \x)} dz 
&= \int_0^1 \frac{ d e^{l(z, \x)} }{dz} dz 
= e^{l(1, \x)} - e^{l(0, \x)} \\
&= 1 - e^{l(0, \x)}
= 1 - \exp\left( - \frac{\gamma}{R_\theta} \int_0^1 \theta(s \x) ds \right).
\end{align*}
\end{proof}

\section{Proof of Theorem~\ref{thm:regret_main}}
\label{apx:regret_main}

\begin{proof}
If $\C{Q}_f$ is a stochastic first order query oracle for $f$, for any $\x \in \C{K}$ we have $\B{E}[\C{Q}_f(\x)] = \nabla f(\x)$ and
\begin{align*}
\B{E}\left[ \C{G}(\C{Q}_f, \x) \right] 
&= \left( \int_0^1 e^{l(r, \x)} dr \right) \B{E}\left[ \C{Q}_f(z \x) \right] 
= \B{E}_{\C{Q}_f}\left[ \int_0^1 e^{l(r, \x)} \C{Q}_f(r \x) dr \right] \\
&= \int_0^1 e^{l(r, \x)} \B{E}_{\C{Q}_f}\left[ \C{Q}_f(r \x) \right] dr 
= \int_0^1 e^{l(r, \x)} \nabla f(r \x) dr 
= \F{g}(f, \x).
\end{align*}
Thus, $\C{G}$ is a query algorithm for $\F{g}$.
Moverover, if $\| \C{Q}_f(\x) \| \leq B_1$ for all $\x \in \C{K}$, then we have
\begin{align*}
\| \C{G}(\C{Q}_f, \x) \|
&\leq \left| \int_0^1 e^{l(r, \x)} dr \right| B_1
\leq B_1,
\end{align*}
where the last inequality follows from the fact that $l(r, \x) \leq 0$. 

Since $\mathcal{A}'$ is an online algorithm for optimization over $\C{K}^*$, we may apply it against an adversary in $\C{F}$ by simply ignoring the values of functions in $\C{K} \setminus \C{K}^*$.
Thus we have
\begin{equation*}
\mathcal{R}^{\mathcal{A}'}_{\alpha, \op{Adv}_1(\C{F}, B_1)}
=
\mathcal{R}^{\mathcal{A}'}_{\alpha, \op{Adv}_1(\C{F}^*, B_1)}
\le 
\mathcal{R}^{\mathcal{A}(\C{K}^*)}_{1, \op{Adv}_1(\C{L}(\C{K}^*)[B_1])},
\end{equation*}
where the second inequality follows immediately by applying Corollary~\ref{cor:main-alpha} and Theorem~\ref{thm:regret_transfer} to the function class $\C{F}^*$.
\end{proof}

\section{Separation Oracle for \texorpdfstring{$\C{K}^*$}{K-star} via Benson-Type Methods}\label{apx:benson}

Let $\C{K} \subseteq [0,1]^d$ be a nonempty rational polyhedron.
Recall that
\[
\C{K}^m = \{\x \in \C{K} : \nexists\, \y \in \C{K},\ \y \ge \x,\ \y \neq \x\}, 
\qquad
\C{K}^* = \operatorname{conv}(\C{K}^m).
\]
We assume access to a separation oracle for $\C{K}$.

\paragraph{Vector optimization formulation.}
Consider
\[
(\mathsf P)\qquad \min_{\le \mathbb{R}_+^d} (-I_d)\x \quad \text{s.t. } \x \in \C{K}.
\]
Then $\x \in \C{K}$ is a minimizer if and only if $\x \in \C{K}^m$.

For a linear map $P$ and set $S$, write $P[S] = \{ P \x \mid \x \in S \}$.
We use the following standard result about the classical primal Benson algorithm (see \cite{benson98_outer_approx_algor_gener_all,lohne11_vector_optim_infim_suprem}).

\begin{theorem}[Primal Benson, bounded case]
Let $\C{K}$ be a bounded polyhedron. For
\[
\min_{\le C} P \x \quad \text{s.t. } \x \in \C{K},
\quad C=\mathbb{R}_+^d,
\]
the primal Benson algorithm terminates finitely and returns a finite set $V \subseteq \C{K}$ of minimizers such that
\[
\operatorname{conv}(P[V]) + C = P[\C{K}] + C.
\]
\end{theorem}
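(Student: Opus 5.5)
The statement to prove is the primal Benson theorem for the bounded case: the algorithm terminates finitely and returns a finite set $V\subseteq \C{K}$ of minimizers with $\operatorname{conv}(P[V])+C=P[\C{K}]+C$. The plan is to run the standard outer-approximation argument for the upper image $\C{P}:=P[\C{K}]+C$, which is a polyhedron because it is the Minkowski sum of the polytope $P[\C{K}]$ and the polyhedral cone $C=\mathbb{R}_+^d$.

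\textbf{Initialization.} Start from a polyhedral outer approximation $\C{O}_0\supseteq\C{P}$. Since $\C{K}$ is bounded, $\C{O}_0 := \mathbf{b}+C$ works, where $b_i=\min_{\x\in\C{K}}(P\x)_i$; each $b_i$ is obtained by a linear program over $\C{K}$. Fix an interior reference point $\hat{\mathbf{p}}\in\op{int}\C{P}$, for instance $P\x_0+\mathbf{1}$ with $\x_0\in\C{K}$.

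\textbf{Iteration.} At step $k$, enumerate the finitely many vertices of $\C{O}_k$ and test each vertex $\vv$ in turn.
\begin{enumerate}
\item Solve the scalar LP $\min\{t:\ \x\in\C{K},\ P\x\le \vv+t(\hat{\mathbf{p}}-\vv)\}$.
\item If the optimal value is $t=0$, then $\vv\in\C{P}$. Record a solution $\x$, and argue that it is a minimizer of the vector problem: $P\x\le\vv$ and $\vv$ lies on the boundary of $\C{P}$. When $P\x$ is only weakly minimal, a second lexicographic LP recovers a genuine minimizer.
\item If $t>0$, then $\vv+t(\hat{\mathbf{p}}-\vv)$ lies on $\partial\C{P}$. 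The LP dual multipliers give a supporting hyperplane of $\C{P}$ at this point that cuts off $\vv$. Set $\C{O}_{k+1}=\C{O}_k\cap(\text{halfspace})$.
\end{enumerate}

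\textbf{Termination.} The key point is that every cut produced in step 3 comes from an optimal basic dual solution, so it supports a face of $\C{P}$ of positive dimension. A polyhedron has finitely many facets, and each facet-inducing halfspace is added at most once. Hence the algorithm stops after finitely many cuts, and this happens exactly when every vertex of $\C{O}_k$ lies in $\C{P}$.

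\textbf{Equality.} At termination, $\C{O}_k$ has vertex set $\subseteq\C{P}$ and recession cone $C$. Therefore
\[
\C{P}\subseteq\C{O}_k=\op{conv}(\text{vertices})+C\subseteq\C{P}.
\]
Each vertex $\vv$ equals $P\x$ for a recorded minimizer $\x$, because a vertex of $\C{P}$ must be minimal. Collecting these $\x$ into $V$ gives $\operatorname{conv}(P[V])+C=P[\C{K}]+C$.

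\textbf{Main obstacle.} The hardest step is the finiteness argument. One has to show that cuts taken from basic dual optima are facet-supporting, or at least drawn from a finite family, so the standard approach is to rely on LP basic solutions. Rather than reprove this in full, I would cite \cite{benson98_outer_approx_algor_gener_all,lohne11_vector_optim_infim_suprem}, as the paper does.
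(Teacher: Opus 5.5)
The paper gives no proof of this statement: it quotes it as a standard fact about the primal Benson algorithm and points to Benson (1998) and L\"ohne (2011). Deferring to those sources is therefore exactly the paper's treatment. Your initialization $\mathbf b+C$, the scalarization LP, the dual cut through $\vv+t(\hat{\mathbf p}-\vv)$ that excludes $\vv$, and the final identity $\C O_k=\op{conv}(\text{vertices})+C=\C P$ are all sound. The termination paragraph, however, does not work as written. You cannot pass from ``the cut supports a face of positive dimension'' to ``the cut is facet-inducing.'' A face of dimension below $d-1$ has infinitely many supporting hyperplanes, so counting facets of $\C P$ bounds nothing, and basic dual optima really can give non-facet cuts. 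For instance, take $P=I$ and $\C K\subseteq[0,2]^3$ cut out by $x_1+x_3\ge1$, $x_2+x_3\ge1$ and the redundant $x_1+x_2+2x_3\ge2$. The first vertex $\vv=\BF 0$ of $\C O_0=\mathbb R^3_+$ and your $\hat{\mathbf p}=P\x_0+\BF 1=(1,1,2)$ with $\x_0=(0,0,1)$ give the boundary point $(\tfrac13,\tfrac13,\tfrac23)$, which lies on an edge of $\C P$. One basic optimal dual solution is supported on the redundant row alone and yields the cut $y_1+y_2+2y_3\ge2$, which supports only that edge.

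What you need is the ``finite family'' alternative you mention only in passing, and with your $\vv$-dependent direction it requires one more observation. Write $\C K=\{\x : A\x\le\mathbf a\}$. The dual feasible set $\{(\lambda,u)\ge 0 : A^\top\lambda+P^\top u=\BF 0,\ (\hat{\mathbf p}-\vv)^\top u=1\}$ changes with $\vv$, so ``finitely many dual vertices'' is not immediate. It is, however, a hyperplane slice of the fixed pointed cone $\{(\lambda,u)\ge 0 : A^\top\lambda+P^\top u=\BF 0\}$, so its vertices are rescaled extreme rays of that cone. The cut $\{\y : u^\top\y\ge-\mathbf a^\top\lambda\}$ is unchanged by rescaling $(\lambda,u)$, so only finitely many distinct cuts can ever occur. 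None repeats, because each new cut excludes the current vertex $\vv\in\C O_k$, which satisfies all earlier cuts; this bounds the number of iterations. With a fixed direction $\mathbf c\in\op{int}C$ in place of $\hat{\mathbf p}-\vv$, the dual feasible set is independent of $\vv$ and this is immediate. A minor point: the lexicographic LP in step 2 is unnecessary. A vertex of $\C O_k$ lying in $\C P$ is a vertex of $\C P$, so $P\x\le\vv$ forces $P\x=\vv$ and $\x$ is a minimizer, which is what your equality step uses anyway.
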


Applying with $P=-I_d$, we obtain $V \subseteq \C{K}^m$ with
\[
\C{K}^* = \operatorname{conv}(V).
\]

\paragraph{Separation oracle (primal version).}
Membership in $\C{K}^*$ reduces to convex hull feasibility. If infeasible, a Farkas certificate yields a separating hyperplane.

The size of $V$ can be exponential in the input size of $\C{K}$, so primal Benson is output-sensitive. Since we only need a separation oracle (or approximation), it is preferable to use dual / outer-approximation (OA) Benson-type methods, which construct $\C{K}^*$ via supporting hyperplanes rather than enumerating all points in $\C{K}^m$.

\paragraph{Lazy outer-approximation}

Since $\C{K}^* \subseteq \C{K}$, we construct a nested sequence
\[
\C{K} = Q_0 \supseteq Q_1 \supseteq \cdots \supseteq \C{K}^*
\]
using a dual Benson-type outer-approximation scheme.
Each $Q_t$ is a polyhedron of the form
\[
Q_t = \C{K} \cap \bigcap_{\ell=1}^t \{\x \mid \w_\ell^\top \x \le a_\ell\},
\]
where $(\w_\ell,a_\ell)$ are supporting inequalities of $\C{K}^*$.

Start with $Q_0 = \C{K}$.
At iteration $t$, the next cut is constructed as follows:
\begin{enumerate}
\item Select a query point $\z_t \in Q_t$ (e.g., a vertex of $Q_t$ or a violated point).
\item Choose a weight vector $\w_t \in \mathbb{R}_+^d$ that separates $\z_t$ from the efficient frontier (e.g., $\w_t$ normal to a face of $Q_t$ at $\z_t$).
\item Solve the scalarization
\[
\x^\star_t \in \arg\max\{\w_t^\top \x \mid \x \in \C{K}\}.
\]
\item Add the valid inequality
\[
\w_t^\top \x \le \w_t^\top \x^\star_t.
\]
\end{enumerate}

Set
\[
Q_{t+1} := Q_t \cap \{\x \mid \w_t^\top \x \le \w_t^\top \x^\star_t\}.
\]

\paragraph{Separation Oracle (dual version).}
Given $\z \in \mathbb{R}^d$:
\begin{enumerate}
\item Call the separation oracle for $\C{K}$.
\begin{itemize}
\item If $\z \notin \C{K}$, return the separating hyperplane.
\item Otherwise, $\z \in \C{K}$.
\end{itemize}
\item If $\z \notin Q_t$, return a violated inequality.
\item Otherwise, refine $Q_t$ using the above construction.
\end{enumerate}

This is a specialization of dual Benson / outer-approximation algorithms for multiobjective optimization (e.g., \cite{ehrgott12_benson}), which construct supporting hyperplanes of the upper image via weighted-sum scalarizations. In contrast to primal Benson, this approach avoids enumerating $\C{K}^m$ and generates only the cuts needed for the queries encountered.

We can see that, with this construction, we have $\C{K}^* \subseteq Q_t \subseteq \C{K}$ for all $t$ and the sequence is monotone, i.e., $Q_{t+1} \subseteq Q_t$.
As mentioned in Remark~\ref{rem:solvable}, we do not need to use $\mathcal{K}^*$ exactly and any convex set $\C{K}'$ where $\C{K}^* \subseteq \C{K}' \subseteq \C{K}$ may be used, though with possibly lower approximation coefficients.
Just as the primal version, the dual version also terminates in finite time for rational polyhedra.
While the complexity is output-sensitive in the number of generated cuts, the procedure is anytime, i.e., it can be stopped at any $t$.
In fact, even for a general convex set $\C{K}$, we may use the run the above procedure for some time-steps and still obtain reasonable approximations.
For convergence rates and $\varepsilon$-termination guarantees of Benson-type algorithms in convex vector optimization for general convex sets, we refer to~\cite{lohne14_primal,hamel14_benson}.

\section{Proof of Theorem~\ref{thm:matroid-radial}}\label{apx:matroid-radial}

\begin{proof}
The polytope $P_{\mathcal I}$ is closed and convex by definition.
Moreover, the downward-closed property of matroid immediately implies that $P_{\mathcal I}$ is a downward-closed convex set.

For $\x, \y \in \B R^d_{\geq 0}$, we have $\| \x + \y \|_1 = \| \x \|_1 + \| \y \|_1$.
Since $\|\BF 1_A \| \leq \rho(\C M)$ with equality exactly when $A$ is a basis, it follows that $\| \x \|_1 \leq \rho(\C M)$ for all $\x \in P_{\mathcal I}$ with equality exactly when $\x \in P_{\mathcal B}$.
Now the equalities follow immediately from definitions and the fact that
\begin{align*}
\max_{\vv, \uu \in \C{K}} \| \vv \vee \uu \|_1
\leq \max_{\vv, \uu \in \C{K}} \| \vv \|_1 + \| \uu \|_1
= 2 \max_{\vv \in \C{K}} \| \vv \|_1,
\end{align*}
for any set $\C{K}$.
\end{proof}

\section{Comparison with SOTA approximation ratios for \texorpdfstring{$\sigma$}{sigma}-OSS optimization}\label{apx:alpha-comparison}

\begin{figure}[htbp]
\centering
\begin{minipage}{0.32\textwidth}
    \centering
    \includegraphics[width=\linewidth]{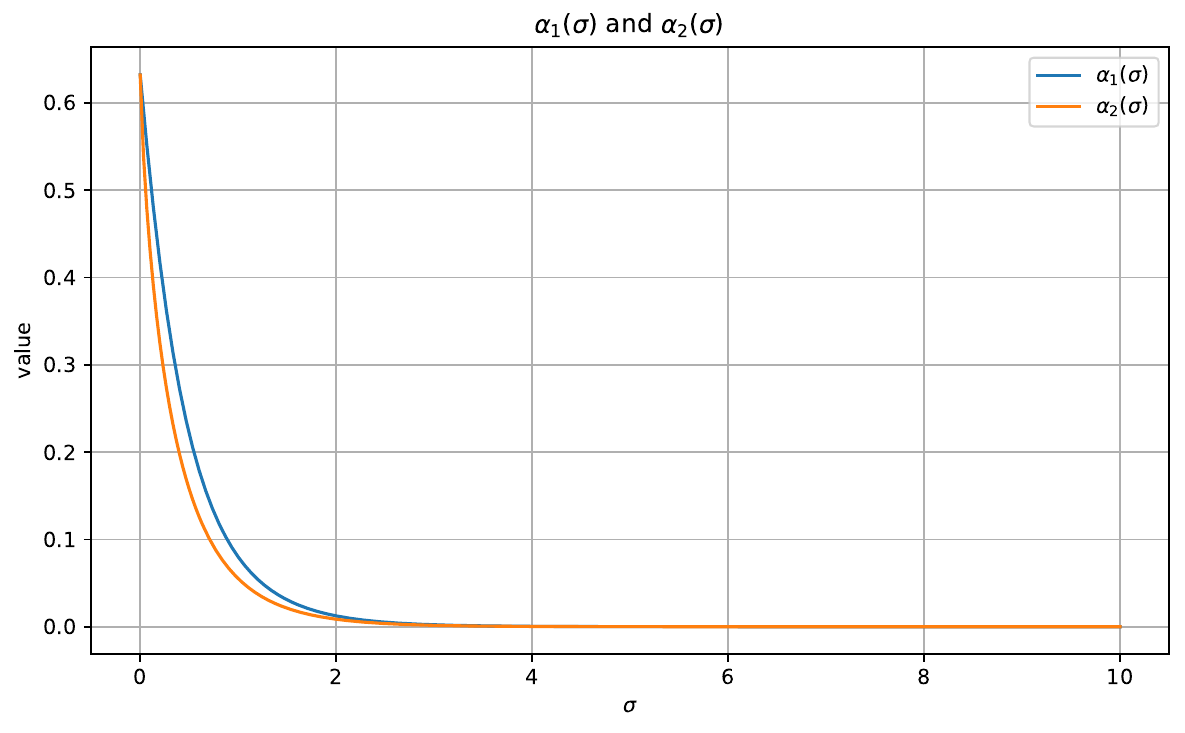}
\end{minipage}
\hfill
\begin{minipage}{0.32\textwidth}
    \centering
    \includegraphics[width=\linewidth]{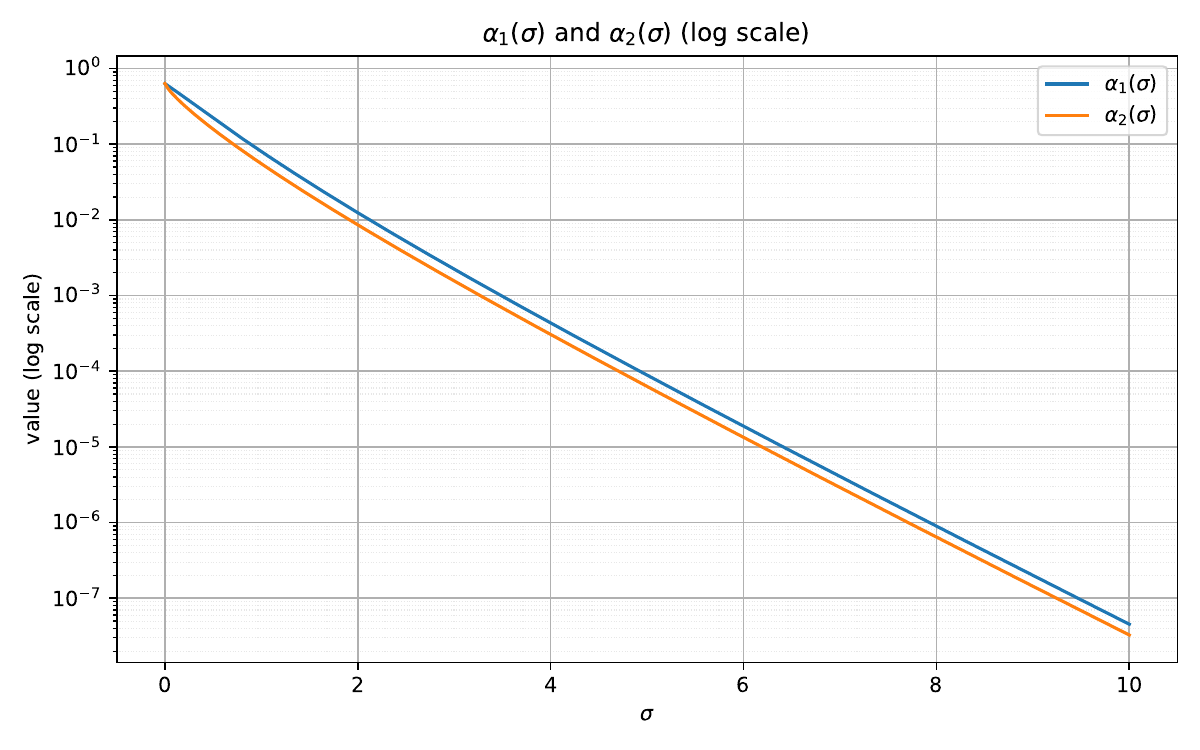}
\end{minipage}
\hfill
\begin{minipage}{0.32\textwidth}
    \centering
    \includegraphics[width=\linewidth]{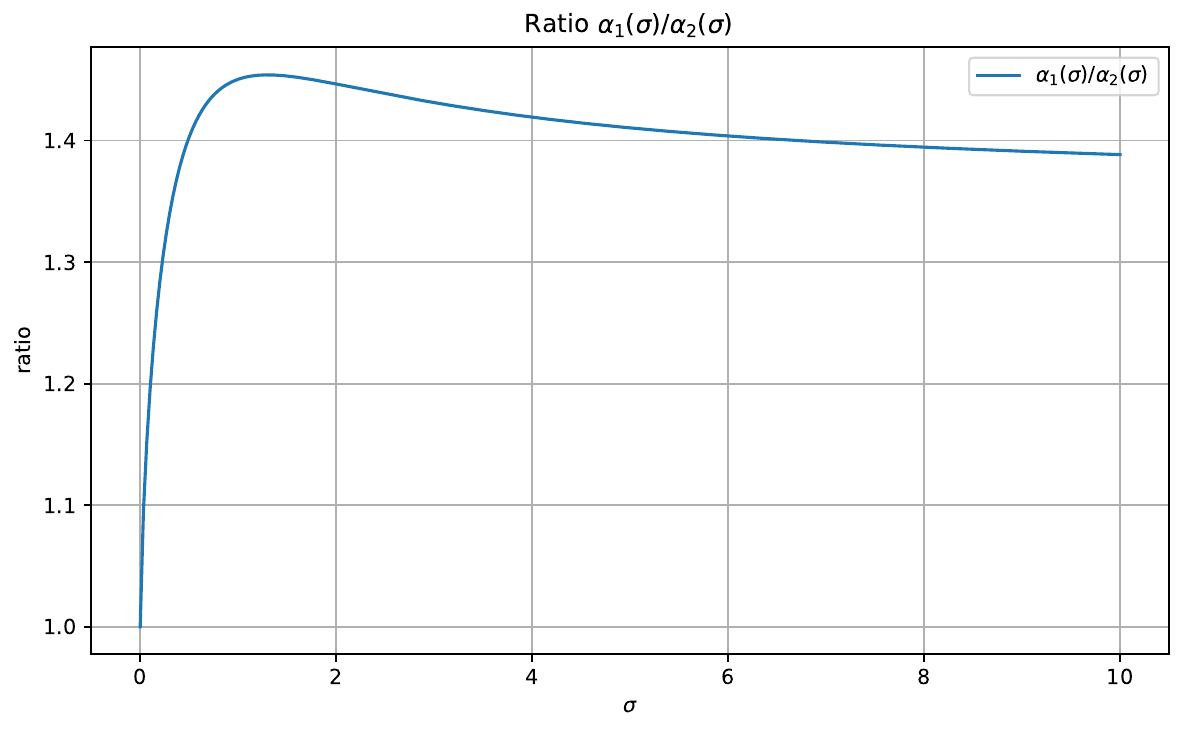}
\end{minipage}
\caption{Comparison of $\alpha_1(\sigma)$ and $\alpha_2(\sigma)$, the same values on a logarithmic scale, and their ratio $\alpha_1(\sigma)/\alpha_2(\sigma)$.}
\end{figure}

\begin{theorem}
Let 
\[
\alpha_1(\sigma) = 1-\exp\!\Big(-\frac{1}{2^{2\sigma}(2\sigma+1)}\Big)
\]
and 
\[
\alpha_2(\sigma) = \sup_{t \in [0, 1)} \left( 1-\exp\left(-(1-t)\left(\frac{t}{t+1}\right)^{2 \sigma} \right)\right),
\]
for $\sigma \geq 0$.
Then we have $\alpha_1(\sigma) \geq \alpha_2(\sigma)$ with equality only at $\sigma = 0$.
Moreover, we have 
\[
\lim_{\sigma \to \infty} \frac{\alpha_1(\sigma)}{\alpha_2(\sigma)} = \frac{e}{2}.
\]
\end{theorem}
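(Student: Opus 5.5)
The plan is to reduce everything to a comparison of exponents. The map $x\mapsto 1-e^{-x}$ is strictly increasing, so $\alpha_1(\sigma)\ge\alpha_2(\sigma)$ is equivalent to $a_1(\sigma)\ge a_2(\sigma)$, where
\[
a_1(\sigma)=\frac{1}{4^{\sigma}(2\sigma+1)},\qquad a_2(\sigma)=\sup_{t\in[0,1)}\phi_\sigma(t),\qquad \phi_\sigma(t)=(1-t)\Big(\frac{t}{1+t}\Big)^{2\sigma}.
\]
At $\sigma=0$ we have $a_1(0)=1$ and $a_2(0)=\sup_t(1-t)=1$, attained at $t=0$. So equality holds there.

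For $\sigma>0$ I will chain two bounds on $\phi_\sigma$.
\begin{itemize}
\item \textbf{AM--GM step.} From $(1+t)^2\ge 4t$ we get $(t/(1+t))^{2\sigma}\le t^\sigma/4^\sigma$, so $\phi_\sigma(t)\le 4^{-\sigma}(1-t)t^\sigma$.
\item \textbf{Elementary maximization.} The function $(1-t)t^\sigma$ has maximum $\sigma^\sigma/(\sigma+1)^{\sigma+1}$ on $[0,1]$.
\end{itemize}
It then suffices to show $\sigma^\sigma/(\sigma+1)^{\sigma+1}<1/(2\sigma+1)$. This is equivalent to $\sigma\log(1+1/\sigma)>\log\big(1+\sigma/(\sigma+1)\big)$, and I prove it in two steps:
\begin{itemize}
\item Using $\log(1+x)\ge x/(1+x)$ with $x=1/\sigma$, the left side is at least $\sigma/(\sigma+1)$.
\item Using $\log(1+y)<y$ for $y=\sigma/(\sigma+1)>0$, the right side is strictly less than $\sigma/(\sigma+1)$.
\end{itemize}
For strictness of $a_2<a_1$ I also need the supremum to be attained. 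The function $\phi_\sigma$ is continuous, vanishes at $t=0$, and tends to $0$ as $t\to1$. Hence its supremum is a maximum at an interior point, where the strict inequality applies, giving $a_2(\sigma)<a_1(\sigma)$.

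For the limit, note that both $a_1,a_2\to0$ as $\sigma\to\infty$ (indeed $a_2\le a_1$). Since $1-e^{-x}\sim x$ as $x\to0$, we have $\alpha_1/\alpha_2\sim a_1/a_2$. Also $a_1\sim 4^{-\sigma}/(2\sigma)$, so it remains to show $a_2(\sigma)\sim 4^{-\sigma}/(e\sigma)$. Substitute $t=1-u$ with $u\in(0,1]$; then $\phi_\sigma=4^{-\sigma}\,u\,\big((1-u)/(1-u/2)\big)^{2\sigma}$.
\begin{itemize}
\item \textbf{Upper bound.} We have $(1-u)/(1-u/2)=1-\tfrac{u/2}{1-u/2}\le 1-u/2\le e^{-u/2}$. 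Hence $\phi_\sigma\le 4^{-\sigma}ue^{-\sigma u}\le 4^{-\sigma}/(e\sigma)$.
\item \textbf{Lower bound.} Take $u=1/\sigma$. Then $2\sigma\log\frac{1-u}{1-u/2}=2\sigma(-u/2+O(u^2))=-1+O(1/\sigma)$. Hence $a_2\ge 4^{-\sigma}\sigma^{-1}e^{-1}(1+o(1))$.
\end{itemize}
Together these give $a_1/a_2\to e/2$.

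The main obstacle is the strict inequality $\sigma^\sigma/(\sigma+1)^{\sigma+1}<1/(2\sigma+1)$ uniformly in $\sigma>0$. Bernoulli-type bounds only work for $\sigma\ge1$, which is why I use the logarithmic sandwich above: it covers $\sigma\in(0,1)$ and $\sigma\ge1$ at once.
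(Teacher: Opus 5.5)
Your proof is correct, and it takes a genuinely different route from the paper's.

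The paper works with the exact maximizer $t_\sigma$ of $\phi_\sigma$:
\begin{itemize}
\item It uses the first-order condition $2\sigma(1-t_\sigma)=t_\sigma(1+t_\sigma)$ to get the exact identity $\log(a_2/a_1)=h(t_\sigma)$, where $h(t)=\log(1+t^2)+\frac{t(1+t)}{1-t}\log\frac{2t}{1+t}$.
\item It proves $h<0$ on $(0,1)$ via the substitution $u=(1-t)/(1+t)$, the bound $\log(1-u)\le -u$, and a derivative computation.
\item For the limit, it localizes the maximizer through $\eta_\sigma=1/(1-t_\sigma)$, shows $|\eta_\sigma-\sigma|<2$, and evaluates $a_2$ asymptotically at that point.
\end{itemize}

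You never touch the maximizer:
\begin{itemize}
\item The AM--GM step $(1+t)^2\ge 4t$ decouples the two factors. This reduces the inequality to the closed-form maximum of $(1-t)t^\sigma$ plus your two-line logarithmic sandwich.
\item For the limit, you pair the uniform bound $(1-u)/(1-u/2)\le e^{-u/2}$ with the test point $u=1/\sigma$, which is admissible once $\sigma\ge 1$.
\end{itemize}

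Your route is shorter and more elementary. It also yields explicit bounds valid for every $\sigma>0$: $a_2(\sigma)\le 4^{-\sigma}\sigma^\sigma/(\sigma+1)^{\sigma+1}$ and $a_2(\sigma)\le 4^{-\sigma}/(e\sigma)$. The second is a non-asymptotic version of the paper's $a_2\sim 4^{-\sigma}/(e\sigma)$. The paper's route costs more computation but buys an exact formula for the ratio at the optimum, $a_2/a_1=(1+t_\sigma^2)\bigl(2t_\sigma/(1+t_\sigma)\bigr)^{2\sigma}$, together with a precise location of $t_\sigma$.

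One small remark: your argument that the supremum is attained is unnecessary. Your bound $4^{-\sigma}\sigma^\sigma/(\sigma+1)^{\sigma+1}$ does not depend on $t$ and is already strictly below $a_1(\sigma)$, so the strict inequality passes to the supremum directly.
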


\begin{proof}
For $\sigma \geq 0$, let
\[
A(\sigma)=\frac{1}{2^{2\sigma}(1+2\sigma)}
,\qquad 
B(\sigma)=\max_{0\le t<1}(1-t)\Bigl(\frac{t}{1+t}\Bigr)^{2\sigma}.
\]
Note that $\alpha_1(\sigma) = 1 - \exp(-A(\sigma))$ and $\alpha_2(\sigma) = 1 - \exp(-B(\sigma))$.
We will prove that
\[
A(\sigma)\ge B(\sigma)\quad\text{for all }\sigma\ge 0,
\]
with equality if and only if \(\sigma=0\), and that
\[
\lim_{\sigma \to \infty}\frac{A(\sigma)}{B(\sigma)} = \frac{e}{2}.
\]
If this is proven, then $A(\sigma) \to 0$ and $0 \leq B(\sigma) \leq A(\sigma)$ imply that $B(\sigma) \to 0$ and therefore
\begin{align*}
\lim_{\sigma \to \infty} \frac{\alpha_1(\sigma)}{\alpha_2(\sigma)} 
&= \lim_{\sigma \to \infty} \frac{1 - \exp(-A(\sigma))}{1 - \exp(-B(\sigma))} \\
&= \lim_{\sigma \to \infty} \frac{1 - \exp(-A(\sigma))}{A(\sigma)} \frac{B(\sigma)}{1 - \exp(-B(\sigma))} \frac{A(\sigma)}{B(\sigma)} \\
&= \lim_{\sigma \to \infty} \frac{A(\sigma)}{B(\sigma)}
= \frac{e}{2},
\end{align*}
which completes the proof.

\noindent\textbf{Proof of $A(\sigma) \geq B(\sigma)$: }
For \(\sigma=0\), we have
\[
A(0)=1,
\qquad
B(0)=\sup_{0\le t<1}(1-t)=1,
\]
so equality holds.
Now assume \(\sigma>0\). Set
\[
f_\sigma(t):=(1-t)\Bigl(\frac{t}{1+t}\Bigr)^{2\sigma},
\qquad 0\le t<1.
\]
Since \(f_\sigma(0)=0\) and \(f_\sigma(t)\to 0\) as \(t\to 1^{-}\), the maximum is attained at some interior point \(t_\sigma\in(0,1)\). Differentiating,
\[
\frac{d}{dt}\log f_\sigma(t)
=-\frac{1}{1-t}+2\sigma\Bigl(\frac1t-\frac1{1+t}\Bigr)
=-\frac{1}{1-t}+\frac{2\sigma}{t(1+t)}.
\]
Thus \(t_\sigma\) satisfies
\[
-\frac{1}{1-t_\sigma}+\frac{2\sigma}{t_\sigma(1+t_\sigma)}=0,
\]
or equivalently,
\begin{equation}\label{eq:crit}
2\sigma(1-t_\sigma)=t_\sigma(1+t_\sigma).
\end{equation}
Using \eqref{eq:crit},
\[
1+2\sigma
=1+\frac{t_\sigma(1+t_\sigma)}{1-t_\sigma}
=\frac{1+t_\sigma^2}{1-t_\sigma}.
\]
Therefore
\[
\frac{B(\sigma)}{A(\sigma)}
=
2^{2\sigma}(1+2\sigma)(1-t_\sigma)\Bigl(\frac{t_\sigma}{1+t_\sigma}\Bigr)^{2\sigma}
=
(1+t_\sigma^2)\Bigl(\frac{2t_\sigma}{1+t_\sigma}\Bigr)^{2\sigma}.
\]
Again by \eqref{eq:crit},
\[
\log\frac{B(\sigma)}{A(\sigma)}
= 
\log(1+t_\sigma^2) + \frac{t_\sigma(1+t_\sigma)}{1-t_\sigma} \log\Bigl(\frac{2t_\sigma}{1+t_\sigma}\Bigr)
=
h(t_\sigma),
\]
where
\[
h(t):=\log(1+t^2)+\frac{t(1+t)}{1-t}\log\!\Bigl(\frac{2t}{1+t}\Bigr),
\qquad 0<t<1.
\]
We claim that \(h(t)<0\) for all \(t\in(0,1)\). 
Set
\[
u=\frac{1-t}{1+t}\in(0,1).
\]
Then
\[
t=\frac{1-u}{1+u},
\qquad
\frac{2t}{1+t}=1-u,
\qquad
1+t^2=\frac{2(1+u^2)}{(1+u)^2},
\qquad
\frac{t(1+t)}{1-t}=\frac{1-u}{u(1+u)}.
\]
Thus
\[
h(t)=\log\!\Bigl(\frac{2(1+u^2)}{(1+u)^2}\Bigr)
+\frac{1-u}{u(1+u)}\log(1-u).
\]
Using \(\log(1-u)\le -u\) for \(0 < u < 1\), we get
\[
h(t)\le \log\!\Bigl(\frac{2(1+u^2)}{(1+u)^2}\Bigr)-\frac{1-u}{1+u}.
\]
Define
\[
\phi(u):=\log\!\Bigl(\frac{2(1+u^2)}{(1+u)^2}\Bigr)-\frac{1-u}{1+u},
\qquad 0 < u \leq 1.
\]
A straightforward computation shows that
\begin{align*}
\phi'(u)
&=
\frac{d}{du}\Bigl[\log(2(1+u^2)) - 2\log(1+u)\Bigr]
-\frac{d}{du}\Bigl(\frac{1-u}{1+u}\Bigr). \\
&= \frac{2u}{1+u^2} - \frac{2}{1+u}
+ \frac{2}{(1+u)^2} \\
&= \frac{2u(1+u)^2 - 2(1+u)(1+u^2) + 2(1+u^2)}{(1+u)^2(1+u^2)} \\
&= \frac{2u(1+2u+u^2) - 2(1+u+u^2+u^3) + 2(1+u^2)}{(1+u)^2(1+u^2)} \\
&= \frac{4u^2}{(1+u)^2(1+u^2)} > 0
\end{align*}
Hence \(\phi\) is increasing on \((0,1]\). 
Since $\phi(1)=\log 1-0=0$, it follows that $\phi(u) < 0$, for all $0 < u < 1$.
Therefore \(h(t)<0\) for every \(t\in(0,1)\), and so
\[
\log\frac{B(\sigma)}{A(\sigma)}=h(t_\sigma)<0.
\]
Thus, we have shown that
\[
A(\sigma) > B(\sigma),\quad\text{for all }\sigma > 0.
\]
Together with the case \(\sigma=0\), this proves
\[
A(\sigma)\ge B(\sigma),\quad\text{for all }\sigma\ge 0,
\]
with equality if and only if \(\sigma=0\).

\noindent\textbf{Computing the limit of $A(\sigma)/B(\sigma)$: }
It remains to compute the asymptotic ratio. 
Define
\[
\eta_\sigma:=\frac{1}{1-t_\sigma}.
\]
Then $t_\sigma=1-\frac{1}{\eta_\sigma}$, and so
\[
t_\sigma(1+t_\sigma)
=\left(1-\frac1{\eta_\sigma}\right)\left(2-\frac1{\eta_\sigma}\right)
=2-\frac{3}{\eta_\sigma}+\frac{1}{\eta_\sigma^2}.
\]
Substituting into \eqref{eq:crit} gives
\[
\frac{2\sigma}{\eta_\sigma}
=
2-\frac{3}{\eta_\sigma}+\frac{1}{\eta_\sigma^2}.
\]
By multiplying both sides by $\eta_\sigma/2$ and rearranging, we see
\[
\eta_\sigma = \sigma + \frac{3}{2} - \frac{1}{2 \eta_\sigma}.
\]
Since $t_\sigma \in (0, 1)$, we have $\eta_\sigma > 1$. Therefore
\[ | \eta_\sigma - \sigma | = \frac{3}{2} - \frac{1}{2 \eta_\sigma} < 2. \]
Now
\[
B(\sigma)
= (1-t_\sigma)\Bigl(\frac{t_\sigma}{1+t_\sigma}\Bigr)^{2\sigma}
= \frac{1}{\eta_\sigma}
\left(\frac{1-\eta_\sigma^{-1}}{2-\eta_\sigma^{-1}}\right)^{2\sigma}
= \frac{2^{-2\sigma}}{\eta_\sigma}
\left(1 - \frac{1}{2\eta_\sigma - 1}\right)^{2\sigma}.
\]
Since $| \eta_\sigma - \sigma | < 2$, we see that
\begin{align*}
\lim_{\sigma \to \infty} 2\sigma \log \left(1 - \frac{1}{2\eta_\sigma - 1}\right)
&= \lim_{\sigma \to \infty} \frac{2\sigma}{2\eta_\sigma - 1} (2\eta_\sigma - 1) \log \left(1 - \frac{1}{2\eta_\sigma - 1}\right) \\
&= \lim_{\sigma \to \infty} \frac{\log \left(1 - \frac{1}{2\eta_\sigma - 1}\right)}{\frac{1}{2\eta_\sigma - 1}} = -1,    
\end{align*}
where the last equality follows from the the standard limit $\lim_{x\to 0}\frac{\log(1-x)}{x} = -1$.
Therefore
\[
\lim_{\sigma \to \infty} \frac{B(\sigma)}{2^{-2\sigma} / \sigma}
= \lim_{\sigma \to \infty} \frac{\sigma}{\eta_\sigma}
\left(1 - \frac{1}{2\eta_\sigma - 1}\right)^{2\sigma}
= \lim_{\sigma \to \infty} \left(1 - \frac{1}{2\eta_\sigma - 1}\right)^{2\sigma}
= e^{-1}.
\]
Thus
\[
B(\sigma)\sim \frac{2^{-2\sigma}}{e\,\sigma}.
\]
On the other hand,
\[
A(\sigma)=\frac{1}{2^{2\sigma}(1+2\sigma)},
\]
so
\[
\frac{A(\sigma)}{2^{-2\sigma}/\sigma}
=\frac{\sigma}{1+2\sigma}\longrightarrow \frac12.
\]
Therefore
\[
\frac{A(\sigma)}{B(\sigma)}
=
\frac{A(\sigma)}{2^{-2\sigma}/\sigma}\cdot
\frac{2^{-2\sigma}/\sigma}{B(\sigma)}
\longrightarrow
\frac12\cdot e
=\frac e2.
\]
Hence
\[
\lim_{\sigma\to\infty}\frac{A(\sigma)}{B(\sigma)}=\frac e2.
\]
\end{proof}

\end{document}